\documentclass{article}

\usepackage{microtype}
\usepackage{graphicx}
\usepackage{subcaption}
\usepackage{booktabs} 
\usepackage{multirow}

\usepackage[dvipsnames]{xcolor}

\usepackage{aliascnt}

\usepackage{hyperref}

\usepackage[preprint]{icml2026}

\usepackage{amsmath}
\usepackage{amssymb}
\usepackage{mathtools}
\usepackage{amsthm}

\usepackage{enumerate}
\usepackage{enumitem}

\usepackage[capitalize,noabbrev]{cleveref}

\definecolor{cmnistLightBlue}{HTML}{3F68E0}
\definecolor{cmnistBlue}{HTML}{0000FF}
\definecolor{cmnistRed}{HTML}{FD4400}
\definecolor{cmnistBeige}{HTML}{FDDCAC}
\definecolor{shapesBlue}{HTML}{0067FD}
\definecolor{shapesLightGreen}{HTML}{00FC71}
\definecolor{shapesRed}{HTML}{FD0000}
\definecolor{shapesPink}{HTML}{FE009F}
\definecolor{shapesDarkBlue}{HTML}{3E00FE}

\theoremstyle{plain}

\newaliascnt{proposition}{theorem}
\newtheorem{proposition}[proposition]{Proposition}
\aliascntresetthe{proposition}

\theoremstyle{definition}

\theoremstyle{remark}

\usepackage[textsize=tiny]{todonotes}
\usepackage{comment}

\usepackage{bm}
\usepackage{stmaryrd}

\usepackage{environ}

\newif\ifmoveprooftoend
\newcommand\showDeferredProofs{}

\makeatletter
\NewEnviron{movableProof}[2][Proof]{%
        \ifmoveprooftoend
                \edef\next{\noexpand\g@addto@macro\noexpand\showDeferredProofs{%
                        \noexpand\cref{#2}\begin{proof}[#1]\unexpanded\expandafter{\BODY}\noexpand\end{proof}}}
                \next
        \else
                \begin{proof}[#1]\BODY\end{proof}
        \fi}{}
\makeatother
\moveprooftoendtrue

\usepackage[most]{tcolorbox}
\tcbset{
  mytheobox/.style={
    enhanced,
    colback=gray!10,      
    colframe=gray!40,     
    boxrule=0.4pt,
    arc=2pt,
    left=6pt,right=6pt,top=2pt,bottom=2pt,
    before skip=8pt,
    after skip=8pt,
  }
}

\newcommand{\bX}{\bm{X}}      %
\newcommand{\bx}{\bm{x}}      %
\newcommand{\bW}{\bm{W}}      
\newcommand{\RR}{\mathbb{R}}  

\usepackage{empheq}
\newcommand*\widefbox[1]{\fbox{\hspace{2em}#1\hspace{2em}}}

\newcommand{\method}{\textsc{LoGDiff}}

\icmltitlerunning{Logical Guidance for Diffusion Models}

\begin{document}

\twocolumn[

  \icmltitle{
  Logical Guidance for the Exact Composition of Diffusion Models
  }



  \icmlsetsymbol{equal}{*}

  \begin{icmlauthorlist}
    \icmlauthor{Francesco Alesiani}{equal,nle}
    \icmlauthor{Jonathan Warrell}{equal,nla}
    \icmlauthor{Tanja Bien}{equal,ustutt} \\
    \icmlauthor{Henrik Christiansen}{nle}
    \icmlauthor{Matheus Ferraz }{noi} 
    \icmlauthor{Mathias Niepert}{nle,ustutt}
  \end{icmlauthorlist}

  \icmlaffiliation{nle}{NEC Laboratories Europe}
  \icmlaffiliation{ustutt}{University of Stuttgart}
  \icmlaffiliation{nla}{NEC Laboratories America}
  \icmlaffiliation{noi}{NEC OncoImmunity}  

  \icmlcorrespondingauthor{Francesco Alesiani}{francesco.alesiani@neclab.eu}
  \icmlcorrespondingauthor{Jonathan Warrell}{jwarrell@nec-labs.com}    
  \icmlcorrespondingauthor{Tanja Bien}{tanja.bien@ki.uni-stuttgart.de} 
  \icmlcorrespondingauthor{Mathias Niepert}{mathias.niepert@ki.uni-stuttgart.de}

  \icmlkeywords{Diffusion Models, Probabilistic Circuits, Antibody design}

  \vskip 0.3in
]



\printAffiliationsAndNotice{\icmlEqualContribution}

\begin{abstract}  
We propose \method{} (\textbf{Lo}gical \textbf{G}uidance for the Exact Composition of \textbf{Diff}usion Models), a guidance framework for diffusion models that enables principled constrained generation with complex logical expressions at inference time. We study when exact score-based guidance for complex logical formulas can be obtained from guidance signals associated with atomic properties. First, we derive an exact Boolean calculus that provides a sufficient condition for exact logical guidance. Specifically, if a formula admits a circuit representation in which conjunctions combine conditionally independent subformulas and disjunctions combine subformulas that are either conditionally independent or mutually exclusive, exact logical guidance is achievable. In this case, the guidance signal can be computed exactly from atomic scores and posterior probabilities using an efficient recursive algorithm. Moreover, we show that, for commonly encountered classes of distributions, any desired Boolean formula is compilable into such a circuit representation. Second, by combining atomic guidance scores with posterior probability estimates, we introduce a hybrid guidance approach that bridges classifier-guidance and classifier-free guidance, applicable to both compositional logical guidance and standard conditional generation. We demonstrate the effectiveness of our framework on multiple image and protein structure generation tasks.
\end{abstract}

\section{Introduction}

Diffusion models have achieved remarkable success in generating high-fidelity data across diverse modalities, from image and video generation \cite{du:2023, liu:2021, liu:2022, zhu:2024} to protein design \cite{yang2024rethinking, abramson2024accurate} and planning \cite{ajay:2023, janner:2022}. A key property of these models is their steerability, which allows controlling outputs at inference time via guidance \cite{ho:2021, dhariwal:2021, bansal:2023}. This capability extends to compositional generation, where complex guidance terms are composed from combinations of individual concepts \cite{hinton:2002, liu:2022, du:2023}. 

Existing compositional guidance methods, however, remain limited.
Most approaches combine conditions by heuristically averaging conditional outputs
\citep{liu:2022}. While effective for simple conjunctions, such
heuristics fail to capture the structure of general logical reasoning, and in
particular do not extend naturally to disjunctions, negations, or more complex
Boolean expressions. Other recent approaches consider the superposition of
distributions \citep{skreta:2025}, but do not provide a general framework for compositional reasoning. As a result, there is no formal framework for translating complex logical expressions into corresponding guidance dynamics for diffusion models.

To address this issue, we introduce \method{}, 
\emph{\textbf{Lo}gical \textbf{G}uidance for the Exact Composition of \textbf{Diff}usion Models},
a framework connecting Boolean logic and compositional diffusion. We formalize logical constraints as probabilistic events, deriving an exact Boolean calculus in which the combination of conditional outputs is dynamic and depends on the time-varying probability of clauses, rather than on constant weights, as shown in \cref{fig:overview}. We derive constructible, recursive guidance rules that implement this calculus using only standard diffusion outputs and posterior probability estimators.

\begin{figure*}[t!]
\caption{
\textbf{Logical Compositional Guidance.} Visualization of logical composition using logical scores $s_t(\varphi, \bx)$ for two specific queries $\varphi$. Our framework replaces constant mixing weights with probability-dependent coefficients derived explicitly from posterior probabilities, allowing for mathematically grounded compositions.
}
\label{fig:overview}
\centering
\includegraphics[width=.9\linewidth]{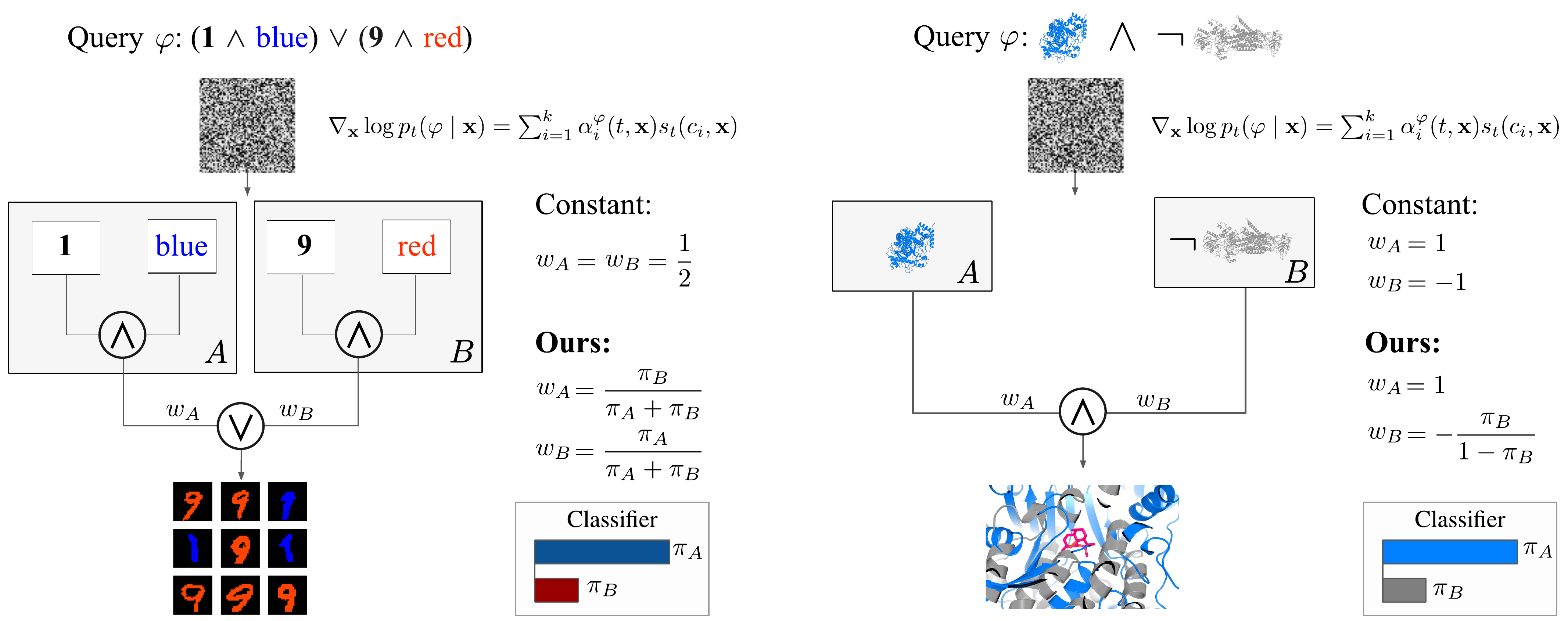}
\end{figure*}

Our main contributions are as follows: 
\begin{itemize} 
\item \textbf{Exact Boolean calculus for composition.} We derive an exact calculus for composing models defined by Boolean formulas over atomic predicates. We provide sufficient conditions on the circuit representation of Boolean formulas for our calculus to apply, and show that for certain commonly encountered classes of distribution, any desired Boolean formula is compilable into such a circuit representation.
\item \textbf{Constructible logical guidance.} 
We derive practical guidance rules that realize Boolean operators using standard (conditional) diffusion outputs and posterior likelihood scalars, extending classifier-free guidance to logical composition.
\item \textbf{Hybrid classifier-assisted guidance.} 
We propose an efficient guidance strategy that combines standard classifier-free guidance with posterior probability estimates to compute the posterior conditioning term.
\end{itemize}

\section{Preliminaries}

To develop our framework, we build on Classifier-free diffusion guidance \citep{ho:2021} and Boolean compositionality \citep{brown2003boolean}.

\subsection{Classifier-Free Guidance of Diffusion Models}

Diffusion models describe the generative process as two Stochastic Differential
Equations (SDEs): the forward process during training and the reverse process at
inference.
We denote by $\bX_t\in\mathcal X\subseteq\RR^d$ the (random) state of the
\emph{reverse-time} diffusion at time $t\in[0,T]$.
Boldface $\bX=(\bX_t)_{t\in[0,T]}$ denotes the entire stochastic process.
Lowercase $\bx_t$ denotes realizations.
We treat $t=0$ as the terminal time, and the sampler integrates the reverse-time
SDE from $t=T$ down to $t=0$.
The reverse-time SDE used for generation is
\begin{equation}\label{eq:revSDE}
    \mathrm d\bX_t
    =
    b_t(\bX_t)\,\mathrm dt
    +
    \sigma_t(\bX_t)\,\mathrm d\bW_t,
    \qquad t\in[0,T],
\end{equation}
where $\bW_t$ is a standard $d$-dimensional Wiener process, $b_t:\mathcal X\to\RR^d$
is the drift and $\sigma_t:\mathcal X\to\RR^{d\times d}$ the diffusion coefficient.
We write $a_t(\bx):=\sigma_t(\bx)\sigma_t(\bx)^\top$ for the diffusion matrix and
$p_t(\bx)$ for the marginal density of $\bX_t$.
The reverse-time drift may be expressed as
\begin{equation}
\label{eq:reverse-drift}
    b_t(\bx)
    =
    f_t(\bx)
    -
    a_t(\bx)\,\nabla_{\bx}\log p_t(\bx),
\end{equation}
where $f_t$ is the drift term of the forward diffusion process.
We refer to $\nabla_{\bx}\log p_t(\bx)$ as the unconditional score.

We recall the principle of \emph{classifier-free guidance} (CFG) that underlies
most conditional diffusion samplers.
Let $p_t(\bx)$ denote the unconditional diffusion marginal at time $t$ and
$p_t(\bx\mid c)$ the conditional marginal corresponding to a condition or
attribute $c$.
By Bayes’ rule,
\begin{align}
\label{eq:bayes-decomp-cfg}
p_t(\bx\mid c)
&\propto
p_t(c\mid \bx)\,p_t(\bx),
\qquad\Rightarrow\qquad \nonumber\\
\nabla_{\bx}\log p_t(\bx\mid c)
&=
\nabla_{\bx}\log p_t(\bx)
+
\nabla_{\bx}\log p_t(c\mid \bx).
\end{align}
Hence, the conditional score decomposes into two terms:
the unconditional score $\nabla_{\bx}\log p_t(\bx)$ and the
\emph{posterior score} $\nabla_{\bx}\log p_t(c\mid \bx)$.

Classifier-free guidance uses a single diffusion model trained both
unconditionally and conditionally, providing estimates of $\nabla_{\bx}\log p_t(\bx)$ and $\nabla_{\bx}\log p_t(\bx\mid c)$. Interpolating between these scores yields the guided score
\begin{equation}\label{eq:cfg-standard}
\nabla_{\bx}\log p_t(\bx) + w\big(\nabla_{\bx}\log p_t(\bx\mid c) - \nabla_{\bx}\log  p_t(\bx) \big),
\end{equation}
where $w\ge 0$ controls the conditioning strength.
For $w=1$ this recovers the exact conditional score
$\nabla_{\bx}\log p_t(\bx\mid c)$, while $w>1$ amplifies the effect of conditioning.

\begin{table*}[t!]
\centering
\caption{Recursive guidance rules for posterior probabilities and scores, which can be implemented as a recursive algorithm (\cref{alg:logical_score}). }
\label{tab:guidance_rules}
\resizebox{0.94\textwidth}{!}{
\renewcommand{\arraystretch}{2.2} 
\begin{tabular}{l @{\hspace{2em}} l @{\hspace{2em}} l}
\toprule
\textbf{Expression} $\varphi$ & \textbf{Posterior Probability} $\widehat\pi(\varphi)$ & \textbf{Score} $s_t(\varphi, \bx)$ \\
\hline
Atom $c \in \mathcal{C}$
    & $p_t(c \mid \bx)$
    & $\nabla_{\bx}\log p_t(\bx\mid c) - \nabla_{\bx} \log p_t(\bx)$ \\
Negation $\lnot\psi$
    & $1-\widehat\pi(\psi)$
    & $-\dfrac{\widehat\pi(\psi)}{1-\widehat\pi(\psi)} \, s_t(\psi,\bx)$ \\
Conjunction $\psi\land\chi$
    & $\widehat\pi(\psi)\widehat\pi(\chi)$
    & $s_t(\psi,\bx) + s_t(\chi,\bx)$ \\
Disjunction (OR-CI) $\psi\lor\chi$
    & $\widehat\pi(\psi)+\widehat\pi(\chi)-\widehat\pi(\psi)\widehat\pi(\chi)$
    & $\dfrac{\widehat\pi(\psi)\widehat\pi(\lnot \chi) s_t(\psi,\bx) + \widehat\pi(\chi)\widehat\pi(\lnot \psi) s_t(\chi,\bx)}{\widehat\pi(\psi)+\widehat\pi(\chi)-\widehat\pi(\psi)\widehat\pi(\chi)}$ \\
Disjunction (OR-ME) $\psi\lor\chi$
    & $\widehat\pi(\psi)+\widehat\pi(\chi)$
    & $\dfrac{\widehat\pi(\psi) s_t(\psi,\bx) + \widehat\pi(\chi) s_t(\chi,\bx)}{\widehat\pi(\psi)+\widehat\pi(\chi)}$ \\
\bottomrule
\end{tabular}}
\end{table*}

\subsection{Boolean Compositionality}

\paragraph{Atoms and formulas.}
Let $\mathcal C=\{c_1,\dots,c_n\}$ be atomic predicates. Each $c_i$ induces an event $A_i\subseteq\mathcal X$ on terminal states (time $0$). 
Boolean formulas $\varphi$ are generated from $\mathcal C$ by $\land,\lor,\lnot$, representing AND, OR and NOT operators respectively, and $\top,\bot$, representing true and false respectively; their semantics are sets $\llbracket\varphi\rrbracket\subseteq\mathcal X$ with $x\models \varphi \iff x\in \llbracket\varphi\rrbracket$.  We note that $\llbracket\top\rrbracket=\mathcal{X}$ and $\llbracket\bot\rrbracket=\emptyset$.

\paragraph{Backward truth probability and logical score.}
For $t\in[0,T]$ and $\bx\in\mathcal X$ define
\begin{eqnarray}\label{eq:backwardTruth}
    p_t(\varphi\mid \bx) &:=& \mathbb P\!\left(\bX_0\in \llbracket\varphi\rrbracket \middle| \bX_t=\bx\right),
    \nonumber \\
    s_t(\varphi,\bx) &:=& \nabla_{\bx}\log p_t(\varphi\mid \bx).
\end{eqnarray}
Let $p_t(\bx\mid \varphi)$ denote the marginal density of $\bX_t$ under the
process conditioned on the terminal event $\{\bX_0\in\llbracket\varphi\rrbracket\}$.
Following the above, conditioning on $\varphi$ corresponds to replacing the
unconditional score $\nabla_{\bx}\log p_t(\bx)$ in \eqref{eq:reverse-drift} by the
conditional score $\nabla_{\bx}\log p_t(\bx\mid \varphi)$, 
\begin{equation}
b_t^\varphi(\bx) = f_t(\bx) - a_t(\bx)\,\nabla_{\bx}\log p_t(\bx\mid \varphi).
\end{equation}
Now, by Bayes' rule,
\begin{equation}
b_t^\varphi(\bx) = f_t(\bx) - a_t(\bx)\big(\nabla_{\bx}\log p_t(\bx)+s_t(\varphi,\bx)\big).
\end{equation}

\section{Logical Guidance Framework}

We explore the problem of composing guidance scores for complex logical expressions. Let $\mathcal C = \{c_1,\dots,c_N\}$ be a finite collection of Boolean predicates, and let $\varphi$ be a Boolean formula over $\mathcal C$, constructed using $\land$, $\lor$, and $\lnot$. Our objective is to understand when the score $s_t(\varphi,\bx)$ can be evaluated \emph{exactly} by composing the scores of the atomic events.
Exact compositionality depends on the interaction between the logical structure of $\varphi$ and the factorization properties of the joint distribution of the atomic events induced by the diffusion posterior at $(t,\bx)$. In general, Boolean formulas do not permit exact composition. 


\subsection{Compositional Calculus}\label{sec:comp_calc}

The atomic posterior logical scores $s_t(c_i,\bx)$ are available from conditional and unconditional diffusion networks.  We will show that an exact composition is achievable whenever the structure of a formula $\varphi$ admits a circuit representation in which
\begin{itemize}
    \item every conjunction node $\psi\land\chi$ combines conditionally independent subformulas (decomposability),
    \item every disjunction node $\psi\lor\chi$ combines subformulas that are either conditionally independent (OR-CI) or mutually exclusive (OR-ME), so that either the independent-disjunction rule or the mutually-exclusive-disjunction rule applies.
\end{itemize}
These conditions mirror the semantics of probabilistic circuits with decomposable product nodes and either decomposable or deterministic sum nodes \cite{darwiche2022tractable,vergari2021compositional}.
Under these structural assumptions, each formula $\varphi$ is associated
with two recursively computed quantities: a posterior
$\widehat\pi(\varphi) := p_t(\varphi\mid \bx)$ and a logical score
$s_t(\varphi, \bx) := \nabla_x\log p_t(\varphi\mid \bx)$.  The recursive rules are provided in \cref{tab:guidance_rules}. 
For each disjunction node $\psi\lor\chi$, we select the OR-CI rules
if the terminal events of
$\psi$ and $\chi$ are conditionally independent given $\bX_t=\bx$, and the
OR-ME rule if they are mutually exclusive.  The following proposition summarizes the above (proof provided in Appendix \ref{annex:proof}):

\begin{proposition}
\label{prop:exact-meci-fragment}
Let $\varphi$ be a propositional formula over atoms $\{c_i\}$.
Suppose that $\varphi$ admits a circuit representation whose internal nodes are $\land$, $\lor$, and $\lnot$,
and whose $\land$- and $\lor$-nodes satisfy, for every $t\in(0,T]$ and every $\bx\in\mathcal X$:
\begin{enumerate}
\item For every conjunction node $\psi\land\chi$,
\[
p_t(\psi\land\chi\mid \bx)=p_t(\psi\mid \bx)\,p_t(\chi\mid \bx).
\]
\item For every disjunction node $\psi\lor\chi$, either
\[
p_t(\psi\land\chi\mid \bx)=p_t(\psi\mid \bx)\,p_t(\chi\mid \bx)
\quad\text{(OR-CI),}
\]
or
\[
p_t(\psi\land\chi\mid \bx)=0
\quad\text{(OR-ME).}
\]
\end{enumerate}
Assume furthermore that $\widehat\pi(\psi)=p_t(\psi\mid \bx)$ for all subformulas $\psi$ of $\varphi$,
for every $t\in(0,T]$ and every $\bx\in\mathcal X$, and that for every subformula $\psi$ appearing in the circuit,
the map $\bx\mapsto p_t(\psi\mid \bx)$ is differentiable and $0<p_t(\psi\mid \bx)<1$ for all $t\in(0,T]$, $\bx\in\mathcal X$.
Then the recursive rules reproduce exactly the true posterior and logical score for $\varphi$ for every $t\in(0,T]$ and $\bx\in\mathcal X$:
\[
\widehat\pi(\varphi)=p_t(\varphi\mid \bx),
\qquad
s_t(\varphi,\bx)=\nabla_{\bx}\log p_t(\varphi\mid \bx).
\]
\end{proposition}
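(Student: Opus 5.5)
We argue by structural induction on the circuit representing $\varphi$, proving simultaneously for every node $\psi$ that $\widehat\pi(\psi)=p_t(\psi\mid\bx)$ and $s_t(\psi,\bx)=\nabla_{\bx}\log p_t(\psi\mid\bx)$ for all $t\in(0,T]$ and $\bx\in\mathcal X$. Throughout we write $p_\psi:=p_t(\psi\mid\bx)$. The hypotheses $0<p_\psi<1$ and differentiability make every logarithm and every quotient in \cref{tab:guidance_rules} well defined. They also give the identity $\nabla_{\bx} p_\psi = p_\psi\,\nabla_{\bx}\log p_\psi$, which is the only analytic tool we need. The posterior identity is in fact granted by assumption. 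Even so, we re-derive it at each node, since the score step uses the exact probability formula in differentiated form.

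\textbf{Base case.} For an atom $c$, the table gives $\widehat\pi(c)=p_t(c\mid\bx)$. By Bayes' rule \eqref{eq:bayes-decomp-cfg}, $\nabla_{\bx}\log p_t(\bx\mid c)-\nabla_{\bx}\log p_t(\bx)=\nabla_{\bx}\log p_t(c\mid\bx)$, and this is exactly $s_t(c,\bx)$ as defined in \eqref{eq:backwardTruth}. Here we use that $p_t(\bx\mid c)$ is the marginal of the process conditioned on $\{\bX_0\in A_c\}$.

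\textbf{Inductive steps.} Assume the claim holds for the children $\psi,\chi$.
\begin{itemize}
\item \emph{Negation.} Since $\llbracket\lnot\psi\rrbracket=\mathcal X\setminus\llbracket\psi\rrbracket$, we have $p_{\lnot\psi}=1-p_\psi$. Then $\nabla\log(1-p_\psi)=-\nabla p_\psi/(1-p_\psi)=-\tfrac{p_\psi}{1-p_\psi}\nabla\log p_\psi$, which matches the table.
\item \emph{Conjunction.} Hypothesis~1 gives $p_{\psi\land\chi}=p_\psi p_\chi$. Taking logarithms and differentiating yields $s_t(\psi,\bx)+s_t(\chi,\bx)$.
\item \emph{Disjunction.} Use inclusion--exclusion, $p_{\psi\lor\chi}=p_\psi+p_\chi-p_{\psi\land\chi}$, which holds pointwise because it is an identity of conditional probabilities of events.
  \begin{itemize}
  \item Under OR-CI this becomes $p_\psi+p_\chi-p_\psi p_\chi$. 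The gradient is $\nabla p_\psi(1-p_\chi)+\nabla p_\chi(1-p_\psi)=p_\psi p_{\lnot\chi}\nabla\log p_\psi+p_\chi p_{\lnot\psi}\nabla\log p_\chi$. Dividing by $p_{\psi\lor\chi}$ gives the table entry.
  \item Under OR-ME it becomes $p_\psi+p_\chi$, and the same computation gives the weighted average of the two child scores.
  \end{itemize}
\end{itemize}
In each case we substitute the inductive hypothesis for the children's $\widehat\pi$ and $s_t$ to conclude.

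\textbf{Main obstacle.} The delicate point is the differentiation in the disjunction and conjunction steps. We differentiate the factorization identity as a function of $\bx$, which is legitimate only because hypotheses~1--2 hold for \emph{every} $\bx$, not just at a single point. We must stress this: pointwise independence at one $\bx$ would not justify differentiating the product. Given that the identity holds on all of $\mathcal X$ for each fixed $t$, both sides are equal as functions and their gradients agree.

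A second subtlety concerns circuits with shared subformulas, i.e.\ DAGs rather than trees. The induction should then run over a topological order of the nodes, not over formula syntax. Each node's quantities depend only on its children's, so the argument is unchanged. Applying the conclusion at the root node $\varphi$ finishes the proof.
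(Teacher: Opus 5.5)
Your proof is correct and follows the paper's argument essentially step for step: induction over a topological ordering of the circuit, with the atom case handled by Bayes' rule, and the negation, conjunction, and OR-CI/OR-ME disjunction cases obtained by differentiating $1-p_\psi$, $p_\psi p_\chi$, and the inclusion--exclusion identity. Your remark that the factorization hypotheses must hold on all of $\mathcal X$, not just at one point, before they can be differentiated is a worthwhile sharpening: the paper fixes $(t,\bx)$ at the outset and uses this fact only implicitly.
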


\begin{movableProof}{prop:exact-meci-fragment}

Fix arbitrary $t\in(0,T]$ and $\bx\in\mathcal X$. We argue by induction over a topological ordering of the circuit nodes.

\emph{Base case (atoms).}
For an atomic predicate $c_i$ we have by assumption $\widehat\pi(c_i)=p_t(c_i\mid \bx)$ and
$s_t(c_i,\bx)= \nabla_{\bx}\log p_t(\bx\mid c_i) - \nabla_{\bx} \log p_t(\bx)$.
Thus the claim holds.

\emph{Inductive step (negation).}
Suppose the claim holds for $\psi$.  Then by definition,
$p_t(\lnot\psi\mid \bx)=1-p_t(\psi\mid \bx)$ and
\begin{eqnarray}
\nabla_{\bx}\log p_t(\lnot\psi\mid \bx)
&=& \nabla_{\bx}\log\big(1-p_t(\psi\mid \bx)\big) \nonumber \\
&=& -\frac{p_t(\psi\mid \bx)}{1-p_t(\psi\mid \bx)} \nabla_{\bx} \log p_t(\psi\mid \bx), \nonumber
\end{eqnarray}
which matches the recursive rule for $s_t(\lnot\psi, \bx)$ when
$\widehat\pi(\psi)=p_t(\psi\mid \bx)$.  Likewise
$\widehat\pi(\lnot\psi)=1-\widehat\pi(\psi)=p_t(\lnot\psi\mid \bx)$.

\emph{Inductive step (conjunction).}
Let $\varphi=\psi\land\chi$ and assume the inductive hypothesis for
$\psi$ and $\chi$.  By assumption,
$p_t(\psi\land\chi\mid \bx)=p_t(\psi\mid \bx)p_t(\chi\mid \bx)$, so with
$\widehat\pi(\psi)=p_t(\psi\mid \bx)$ and
$\widehat\pi(\chi)=p_t(\chi\mid \bx)$ we obtain
\[
\widehat\pi(\psi\land\chi)
= \widehat\pi(\psi)\widehat\pi(\chi)
= p_t(\psi\mid \bx)p_t(\chi\mid \bx)
= p_t(\psi\land\chi\mid \bx).
\]
Moreover,
\begin{eqnarray}
\nabla_{\bx}\log p_t(\psi\land\chi\mid \bx)
&=& \nabla_{\bx}\log p_t(\psi\mid \bx)
  + \nonumber \\
  &&\nabla_{\bx}\log p_t(\chi\mid  \bx) \nonumber \\
&=& s_t(\psi, \bx)+s_t(\chi, \bx), \nonumber
\end{eqnarray}
which is exactly the recursive rule for $s_t(\psi\land\chi, \bx)$.

\emph{Inductive step (disjunction).}
Let $\varphi=\psi\lor\chi$ and assume the inductive hypothesis for
$\psi$ and $\chi$.  By inclusion--exclusion,
\[
p_t(\psi\lor\chi\mid \bx)
= p_t(\psi\mid \bx)+p_t(\chi\mid \bx)-p_t(\psi\land\chi\mid \bx).
\]
There are two cases.

If $\psi$ and $\chi$ are conditionally independent, then
$p_t(\psi\land\chi\mid \bx)=p_t(\psi\mid \bx)p_t(\chi\mid \bx)$ and
\begin{eqnarray}
&&\nabla_{\bx}\log p_t(\psi\lor\chi\mid \bx)
= \nonumber \\
&& \quad \frac{
p_t(\psi\mid \bx)\big(1-p_t(\chi\mid \bx)\big) \nabla_{\bx}\log p_t(\psi\mid \bx)
}{
p_t(\psi\mid \bx)+p_t(\chi\mid \bx)-p_t(\psi\mid \bx)p_t(\chi\mid \bx)
} + \nonumber \\
&& \quad \frac{
p_t(\chi\mid \bx)\big(1-p_t(\psi\mid \bx)\big) \nabla_{\bx}\log p_t(\chi\mid \bx)
}{
p_t(\psi\mid \bx)+p_t(\chi\mid \bx)-p_t(\psi\mid \bx)p_t(\chi\mid \bx)
}, \nonumber 
\end{eqnarray}
which coincides with OR-CI rule for the scores when
$\widehat\pi(\psi)=p_t(\psi\mid \bx)$ and
$\widehat\pi(\chi)=p_t(\chi\mid \bx)$.  The posterior recursion
$\widehat\pi(\psi\lor\chi)$ for CI matches
$p_t(\psi\lor\chi\mid \bx)$ by the same identity.

If $\psi$ and $\chi$ are mutually exclusive, then
$p_t(\psi\land\chi\mid \bx)=0$ and
$p_t(\psi\lor\chi\mid \bx)=p_t(\psi\mid \bx)+p_t(\chi\mid \bx)$, and
\begin{eqnarray}
\nabla_{\bx}\log p_t(\psi\lor\chi\mid \bx)
&=&
\frac{
p_t(\psi\mid\bx) \nabla_{\bx}\log p_t(\psi\mid \bx)
}{
p_t(\psi\mid \bx)+p_t(\chi\mid \bx)
}+ \nonumber \\
&&\frac{
p_t(\chi\mid \bx) \nabla_{\bx}\log p_t(\chi\mid \bx)
}{
p_t(\psi\mid \bx)+p_t(\chi\mid \bx)
}, \nonumber
\end{eqnarray}
which coincides with $s_t(\psi\lor\chi, \bx)$ for ME under the same
identification of $\widehat\pi$ with $p_t(\cdot\mid \bx)$.  The ME
posterior recursion
$\widehat\pi(\psi\lor\chi)=\widehat\pi(\psi)+\widehat\pi(\chi)$
matches $p_t(\psi\lor\chi\mid \bx)$.

\smallskip
In all cases, the recursive rules reproduce $p_t(\cdot\mid \bx)$ and
$\nabla_{\bx}\log p_t(\cdot\mid \bx)$ at the node $\varphi$ assuming they do
so for its children.  By structural induction over the circuit of
$\varphi$, the claim holds for all subformulas and in particular for
$\varphi$ itself.
\end{movableProof}

We note that, for any propositional formula $\varphi$ admitting a circuit representation satisfying the sufficient conditions of \cref{prop:exact-meci-fragment}, it follows that the corresponding logical score can be expressed as a linear combination of the atomic logical scores, with coefficients that are functions of the posterior estimates $\widehat\pi(\cdot)$ (and hence of the atomic posteriors).  In particular, there exist scalar functions $\alpha_i^{\varphi}(t,\bx)$ such that 
\begin{equation}
\label{eq:linear_combination_atoms}
\nabla_{\bx}\log p_t(\varphi\mid \bx) = \sum_{i=1}^{k} \alpha_i^{\varphi}(t,\bx) s_t(c_i, \bx),
\end{equation}
where $k$ is the number of atoms appearing in $\varphi$ and each coefficient $\alpha_i^{\varphi}(t,\bx)$ is determined by recursive application of the rules in \cref{tab:guidance_rules}.

When a formula $\varphi$ admits a circuit representation satisfying
\cref{prop:exact-meci-fragment}, exact logical guidance reduces to a single
evaluation of that circuit at each diffusion step.
Each atomic predicate $c_i$ requires exactly one posterior $\pi(c_i)$ and one score $s_t(c_i,\bx)$ evaluation.
As a result, no additional diffusion model evaluations are needed for
composite formulas, and the cost of evaluating $s_t(\varphi,\bx)$ scales
linearly with the circuit's size. This mirrors tractability properties of probabilistic circuits with decomposable and deterministic structure.

In several situations, guidance properties can be expressed through a set of conditionally independent categorical variables. Examples include combinations of discrete attributes, such as color, shape, or residue type at multiple positions within a protein. Users often wish to specify constraints directly in terms of which joint configurations of the categorical variables are allowed. For instance, a user may allow only a small set of valid attribute combinations, or forbid specific combinations while leaving all others unconstrained. The following proposition shows that such queries ({\em categorical-CI queries}) can be evaluated exactly under our logical guidance framework, provided the categorical variables are conditionally independent (proof provided in Appendix \ref{annex:proof}).

\begin{proposition}
\label{prop:categorical_simple}
Fix $t\in(0,T]$ and $\bx\in\mathcal X$.
Let $Z_1,\dots,Z_M$ be conditionally independent categorical variables given $\bX_t=\bx$,
with finite domains $\Omega_1, \dots,\Omega_M$.
For each $m \in [M]$ and $v\in\Omega_m$, define the Boolean predicate $c_{m,v}:=(Z_m=v)$.
Let $\mathcal A\subseteq \Omega_1\times\cdots\times\Omega_M$ be a set of joint assignments and define
\[ \varphi_{\mathcal A}
:=\bigvee_{(v_1,\dots,v_M)\in\mathcal A}\;
\bigwedge_{m=1}^M c_{m,v_m}. \]
Then the circuit of $\varphi_{\mathcal A}$ satisfies the sufficient conditions of
\cref{prop:exact-meci-fragment}:
every $\land$-node is conditionally independent and every $\lor$-node is mutually exclusive.
Consequently, the recursive rules compute $p_t(\varphi_{\mathcal A}\mid\bx)$ and
$s_t(\varphi_{\mathcal A},\bx)$ exactly.
\end{proposition}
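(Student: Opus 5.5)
We prove this by fixing a concrete circuit for $\varphi_{\mathcal A}$ and checking the two structural conditions of \cref{prop:exact-meci-fragment} node by node.

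\textbf{The circuit.} Each conjunction $\bigwedge_{m=1}^M c_{m,v_m}$ is written as a left-nested chain of binary $\land$-nodes, $((c_{1,v_1}\land c_{2,v_2})\land c_{3,v_3})\land\cdots$. The top-level disjunction is a left-nested chain of binary $\lor$-nodes over an arbitrary enumeration $a^{(1)},\dots,a^{(K)}$ of $\mathcal A$. If $\mathcal A=\emptyset$, then $\varphi_{\mathcal A}=\bot$, and this degenerate case is set aside.

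\textbf{Conjunction nodes.} A generic $\land$-node has the form $\psi\land\chi$ with $\psi=\bigwedge_{m<k} c_{m,v_m}$ and $\chi=c_{k,v_k}$. The event of $\psi$ is measurable with respect to $(Z_1,\dots,Z_{k-1})$, and the event of $\chi$ is measurable with respect to $Z_k$. Mutual conditional independence of $Z_1,\dots,Z_M$ given $\bX_t=\bx$ gives
\[
p_t(\psi\land\chi\mid\bx)=\prod_{m\le k}p_t(Z_m=v_m\mid\bx)=p_t(\psi\mid\bx)\,p_t(\chi\mid\bx).
\]
This is condition 1. Here "conditionally independent" must be read as mutual independence, not merely pairwise; I would state that reading explicitly.

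\textbf{Disjunction nodes.} A generic $\lor$-node is $\psi\lor\chi$ with $\psi=\bigvee_{j<k}\tau_{a^{(j)}}$ and $\chi=\tau_{a^{(k)}}$, where $\tau_a:=\bigwedge_m c_{m,a_m}$. Since the $a^{(j)}$ are distinct, $a^{(k)}$ differs from each earlier $a^{(j)}$ in some coordinate $m$. The events $\{Z_m=a^{(j)}_m\}$ and $\{Z_m=a^{(k)}_m\}$ are then disjoint, because $Z_m$ takes a single value. Hence $\llbracket\tau_{a^{(j)}}\rrbracket\cap\llbracket\tau_{a^{(k)}}\rrbracket=\emptyset$ for every $j<k$. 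Distributing over the union gives $p_t(\psi\land\chi\mid\bx)=0$, which is OR-ME, so condition 2 holds.

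\textbf{Applying the earlier proposition.} With both structural conditions verified, \cref{prop:exact-meci-fragment} yields exactness of $\widehat\pi(\varphi_{\mathcal A})$ and $s_t(\varphi_{\mathcal A},\bx)$. This step inherits that proposition's regularity hypotheses: differentiability and $0<p_t(\cdot\mid\bx)<1$ at every subformula, and correct atomic posteriors. These are assumed, not derived. The strict-inequality condition may fail in some cases, for example when $\mathcal A$ is the full product set, since then $\varphi_{\mathcal A}\equiv\top$. I would flag these edge cases explicitly rather than try to prove them away.

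\textbf{Main obstacle.} Once the circuit is fixed, the combinatorics are routine. The real issue is interpretive. The argument needs the $Z_m$ to be jointly conditionally independent given $\bX_t=\bx$ at every $(t,\bx)$, which is what \cref{prop:exact-meci-fragment} requires. It also needs the predicates $c_{m,v}$ to be events on terminal states, with the categorical variables $Z_m$ defined as functions of $\bX_0$.
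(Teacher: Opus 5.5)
Your proposal is correct and follows essentially the same route as the paper. Both arguments factorize each conjunction using (mutual) conditional independence of the $Z_m$. Both then observe that distinct assignments differ in some coordinate, so the corresponding conjunctions are disjoint (OR-ME), and finally invoke the earlier exactness proposition.

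Your additions are extra rigor that the paper's short proof glosses over, not a different approach:
\begin{itemize}
\item You binarize the $\land$/$\lor$ chains explicitly.
\item You require mutual rather than pairwise independence.
\item You note that the hypotheses $0<p_t<1$ and differentiability are inherited, not derived.
\item You note that the factorization must hold at every $(t,\bx)$, not just at one fixed point, so that gradients can be taken.
\end{itemize}
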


\begin{movableProof}{prop:categorical_simple}
Fix $(t,\bx)$.
For any assignment $\alpha=(v_1,\dots,v_M)\in\mathcal A$, define
\[
Y_\alpha := \bigwedge_{m=1}^M c_{m,v_m}.
\]
By conditional independence of $Z_1,\dots,Z_M$,
\begin{flalign}
p_t(Y_\alpha\mid\bx) = & p_t\left(\bigwedge_{m=1}^M (Z_m=v_m)\,\middle|\,\bx\right) \\
= & \prod_{m=1}^M p_t(Z_m=v_m\mid\bx), 
\end{flalign}
so each $\land$-node is CI.

If $\alpha\neq\beta$, then there exists $m$ such that $v_m\neq w_m$. Since $Z_m$ is categorical, the events $(Z_m=v_m)$ and $(Z_m=w_m)$ are mutually exclusive, hence
$Y_\alpha\wedge Y_\beta=\bot$ and $p_t(Y_\alpha\wedge Y_\beta\mid\bx)=0$. Thus, the $\lor$-node is OR-ME. 

Therefore, the circuit satisfies the conditions of
Prop.~\ref{prop:exact-meci-fragment}, and the claim follows.
\end{movableProof}

In addition to such categorical-CI queries, we show in Appendix \ref{annex:theoretical_results} that our framework handles exactly a large class of {\em taxonomy queries}, which relate to properties expressible in a tree-structured taxonomy (\cref{prop:taxonomy_queries}).  Further, we show in Appendix \ref{annex:theoretical_results} that such large classes of queries may be used to demonstrate that our logical guidance framework is complete for all Boolean queries in distributions over predicates with the properties stated.  Particularly, we introduce the notion of {\em compilability} of a Boolean formula $\varphi$, where $\varphi$ is {\em compilable} iff we may find a probabilistic circuit evaluable using our logical guidance rules corresponding to a formula $\varphi'$, possibly different from $\varphi$, where the events specified by $\varphi$ and $\varphi'$ are identical in the class of distributions specified.  We show in \cref{prop:completeness} that our logical guidance framework is complete in this sense (any Boolean formula is compilable) for distributions consisting of (1) a collection of conditionally independent categorical variables and (2) properties expressible in a taxonomy.  We note that (1) includes as a special case the case where the properties are a set of events, all of which are CI.  Further, we show that for (2), such completeness of compilability is guaranteed at all time points if it holds for the terminal distribution. Finally, we discuss the case of discrete-time updates in Appendix \ref{annex:discrete}, and show that an analogue of \cref{prop:exact-meci-fragment} can be derived (\cref{prop:exact-discrete-fragment}).

\subsection{Hybrid Conditional Guidance}

The exact rules in \Cref{tab:guidance_rules} assume access to the posterior probabilities $p_t(c_i \mid \bx)$ for all atomic predicates $c_i$, which in practice are not directly available. We therefore introduce estimators $\widehat\pi(c_i)\approx p_t(c_i\mid \bx)$ for the atomic predicates, and use them to compute the coefficients $\widehat\pi$ for the composition rules. For example, $\widehat\pi(c_i)$ can be obtained by training a noise-aware classifier on $\bx_t$  across diffusion times $t$.

\textbf{Gradient-free guidance.} This provides a hybrid guidance mechanism that separates direction from weighting. The scores are obtained from standard conditional and unconditional diffusion networks 
(\cref{eq:bayes-decomp-cfg})
, while the mixture weights required to compose complex formulas are provided by $\widehat\pi(\cdot)$ (\cref{eq:linear_combination_atoms}).  Importantly, this does not require backpropagating through a classifier to obtain gradients: probability estimates enter only as scalar coefficients in closed-form composition rules, while all score gradients are supplied by the diffusion model itself.  Hence, our approach bridges classifier-guidance and classifier-free guidance: it preserves the stability
and the efficiency of classifier-free score estimation while enabling
probabilistically-correct composition for disjunctions and negations.

Moreover, when $\widehat\pi(c_i)$ is estimated directly from the diffusion
model, e.g., with score-based probability estimators
\citep{li2023your, skreta:2025}, 
the resulting method requires only a trained conditional diffusion model with the atomic scores available under classifier-free guidance. 

\textbf{Adaptive repulsive guidance.} Beyond compositional queries, \method{} can also improve standard single-condition generation. When conditioning on a single class $A$, a common failure mode of diffusion models is confusion with other similar classes. Our framework naturally supports adaptive \emph{repulsive guiding} by conditioning on formulas of the form $A\land \lnot B$, where $B$ denotes a competing class. The resulting guidance direction is a probability-weighted combination of the atomic scores for $A$ and
$B$, with weights determined by $\widehat\pi(A\mid \bx)$ and
$\widehat\pi(B\mid \bx)$, unlike heuristic approaches that rely on constant weights \cite{shenoy:2024}. As a result, repulsive guidance is applied strongly only in regions where $B$ is locally probable, and vanishes when $B$ is already unlikely.

\begin{table}[t!]
\centering
\caption{\textbf{Conformity score (\%, $\uparrow$) on CMNIST and Shapes3D.} Evaluation of atomic, binary, and complex queries, where $N=2\dots5$ denotes random formulas with $N$ AND/OR operators.}
\label{tab:cs_cmnist_shapes3d}
\resizebox{\linewidth}{!}{
\begin{tabular}{lrrrrrrrr}
\toprule
Method & AND & NOT & OR-ME & OR-CI & $N=2$ & $N=3$ & $N=4$ & $N=5$ \\
\midrule
\multicolumn{9}{l}{\textbf{CMNIST}} \\ 
Uncond. & 0.9 & 90.0 & 19.2 & 19.1 & 28.3 & 35.1 & 38.8 & 48.9 \\
Constant & 80.4 & \textbf{99.5} & 77.0 & 76.7 & 76.1 & 66.7 & 68.3 & 75.2 \\
\method{} & 80.4 & 96.5 & \textbf{98.0} & \textbf{97.2} & \textbf{93.8} & \textbf{93.3} & \textbf{94.2} & \textbf{94.4} \\
\midrule
\multicolumn{9}{l}{\textbf{Shapes3D}} \\ 
Uncond. & 1.3 & 87.5 & 21.9 & 23.4 & 21.5 & 25.6 & 26.7 & 33.2 \\
Constant & 82.4 & \textbf{98.7} & 63.3 & 74.4 & 67.2 & 59.4 & 58.4 & 57.9 \\
\method{} & 82.4 & 96.4 & \textbf{94.3} & \textbf{96.0} & \textbf{88.8} & \textbf{88.6} & \textbf{85.1} & \textbf{87.6} \\

\bottomrule
\end{tabular}
}
\end{table}

\section{Experiments}

We evaluate \method{} on image and molecular generation tasks 
to assess both logical controllability and generation quality. First, we use synthetic and real-world image tasks to measure logical conformity and robustness to compositional depth, and the effect of guidance on conformity and diversity. We then analyze repulsive guiding as an inference-time mechanism for improving sample quality. Finally, we apply \method{} to structure-based drug design, demonstrating that the proposed guidance rules generalize to protein-conditioned molecular generation and multi-target objectives.

\subsection{Image generation} \label{sec:img_experiments}
To evaluate our method, we employ three distinct experimental settings: 1) synthetic datasets (CMNIST, Shapes3D) to establish the model's ability to satisfy complex logical queries; 2) CelebA to ensure that our guidance mechanism preserves image quality in real-world domains; and 3) ImageNet and the synthetic datasets to analyze the impact of repulsive guiding on sample quality.

\textbf{Datasets and metrics.} To evaluate logical guidance, we utilize Colored MNIST (CMNIST) \cite{lecun:1998, gaudi:2025} and Shapes3D \cite{kim:2018}. We quantify performance using the Conformity Score (CS) \cite{gaudi:2025}, defined as the percentage of generated samples that satisfy the target logical formula according to a pre-trained classifier. To ensure that high conformity does not come at the cost of reduced diversity, we report the Mean Batch Joint Entropy $H$. For real-world attributes on CelebA \cite{liu:2015}, we additionally report Fréchet Inception Distance (FID) to ensure visual fidelity is maintained. For more detail see Appendix \ref{app:metrics}.

\begin{figure}[t]
    \centering
    \setkeys{Gin}{width=\linewidth}
    \hspace*{0.02\linewidth}%
    \begin{minipage}{0.31\linewidth}
       \scriptsize  \centering \textbf{4 $\lor$ 1}
    \end{minipage}%
    \hfill
    \begin{minipage}{0.31\linewidth}
        \scriptsize \centering \textbf{1 $\lor$ pink}
    \end{minipage}%
    \hfill
    \begin{minipage}{0.31\linewidth}
    \centering 
    \scriptsize 
    \textbf{(orange wall $\lor$ sphere) $\land$} \\
    \textbf{((red $\lor$ yellow) object)}
\end{minipage}

    \vspace{0.5em}
    \begin{minipage}[c]{0.03\linewidth}
        \centering
        \scriptsize 
        \rotatebox{90}{\textbf{Constant}}
    \end{minipage}%
    \hfill
    \begin{minipage}[c]{0.95\linewidth}
        \begin{subfigure}{0.32\linewidth}
            \includegraphics{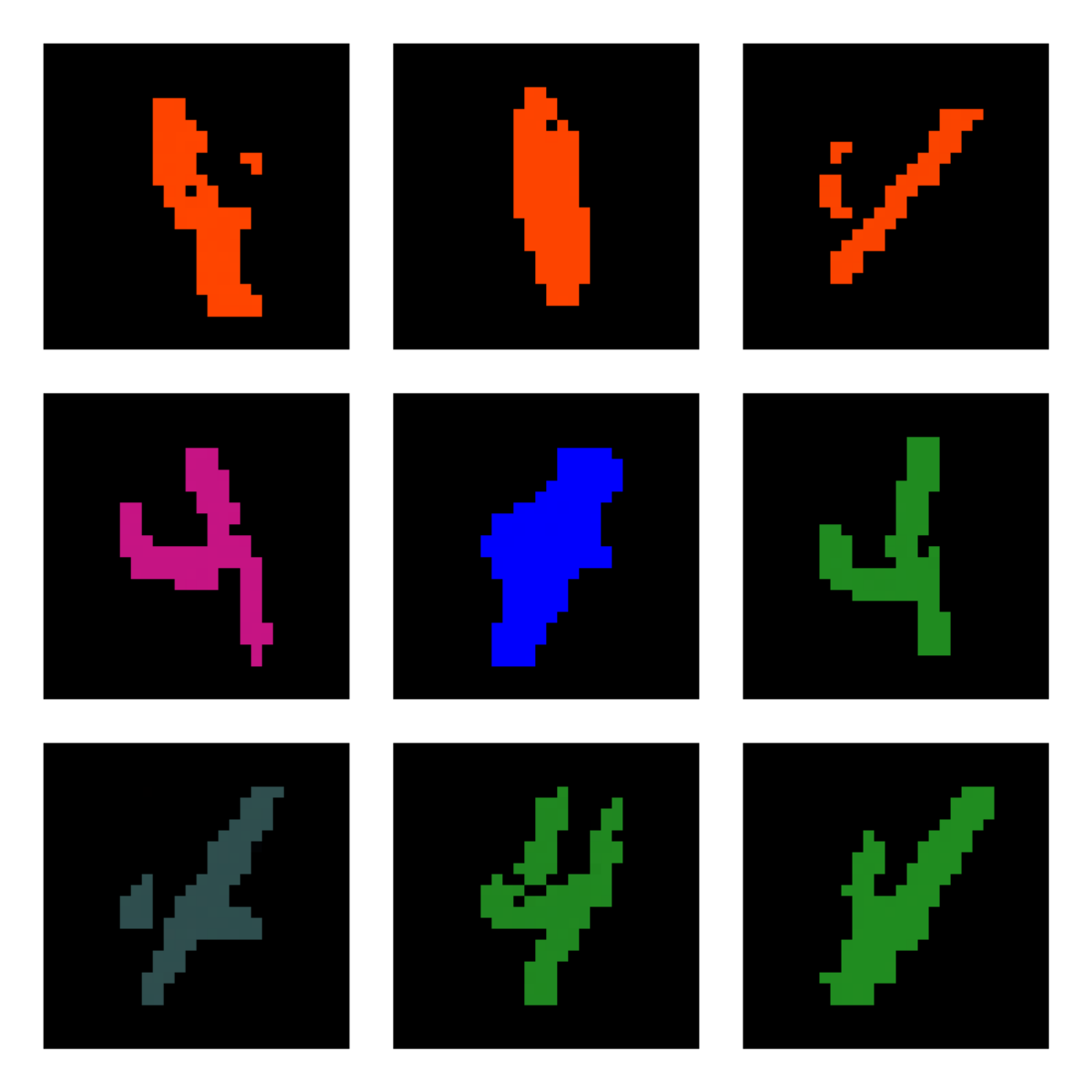}
        \end{subfigure}
        \hfill
        \begin{subfigure}{0.32\linewidth}
            \includegraphics{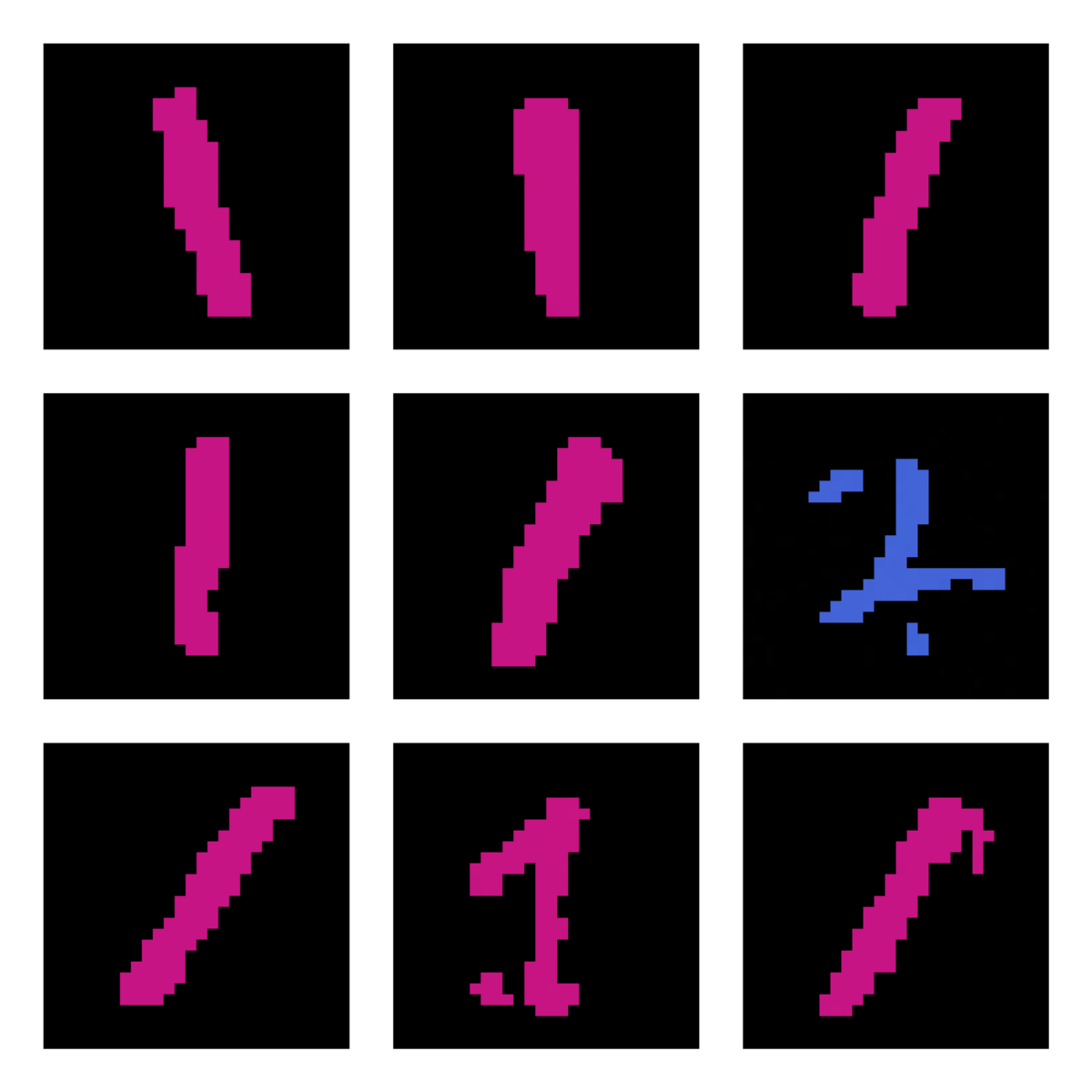}
        \end{subfigure}
        \hfill
        \begin{subfigure}{0.32\linewidth}
            \includegraphics{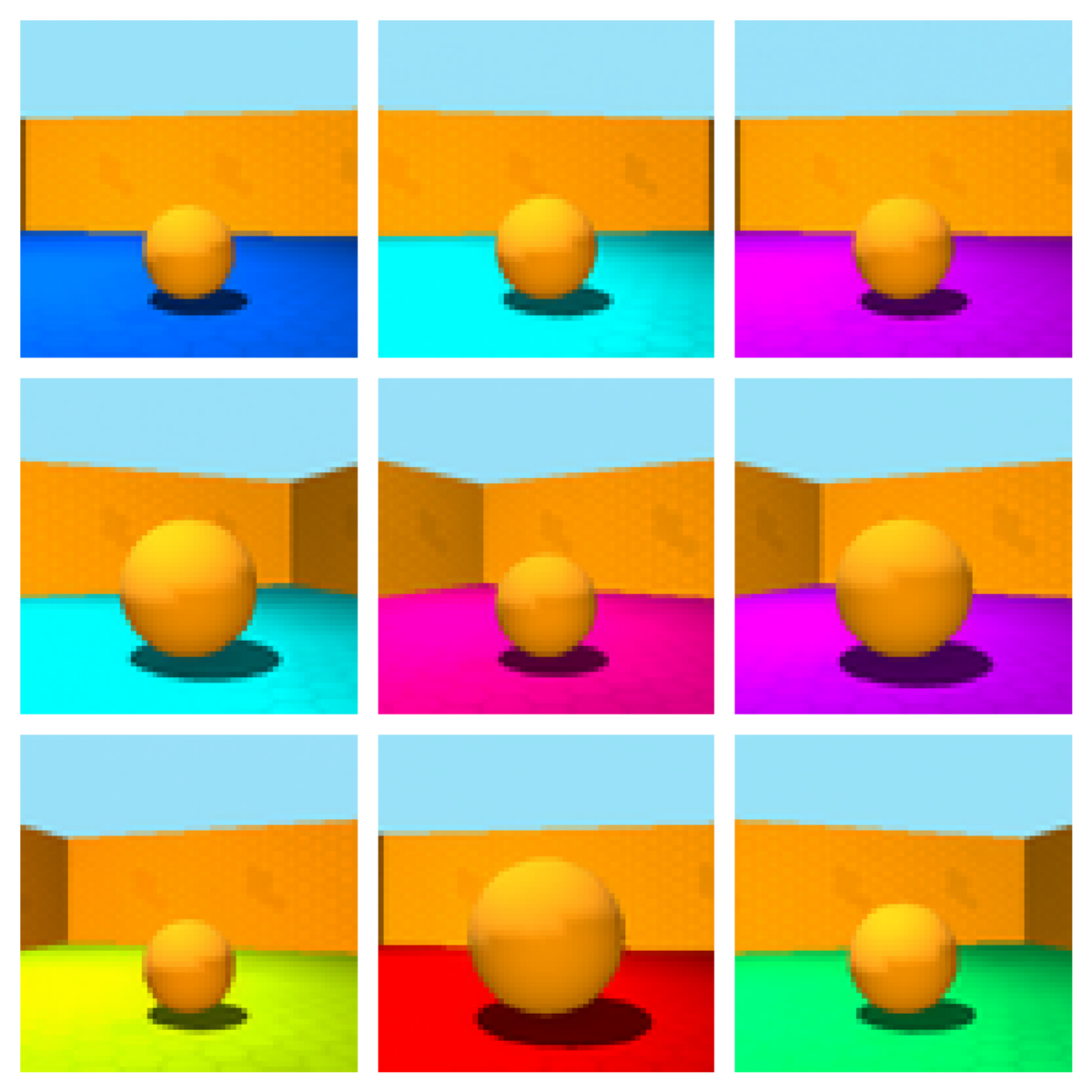}
        \end{subfigure}
    \end{minipage}

    \vspace{0.5em} 

    \begin{minipage}[c]{0.03\linewidth}
        \centering
        \scriptsize 
        \rotatebox{90}{\textbf{\method{}}}
    \end{minipage}
    \hfill
    \begin{minipage}[c]{0.95\linewidth}
        \begin{subfigure}{0.32\linewidth}
            \includegraphics{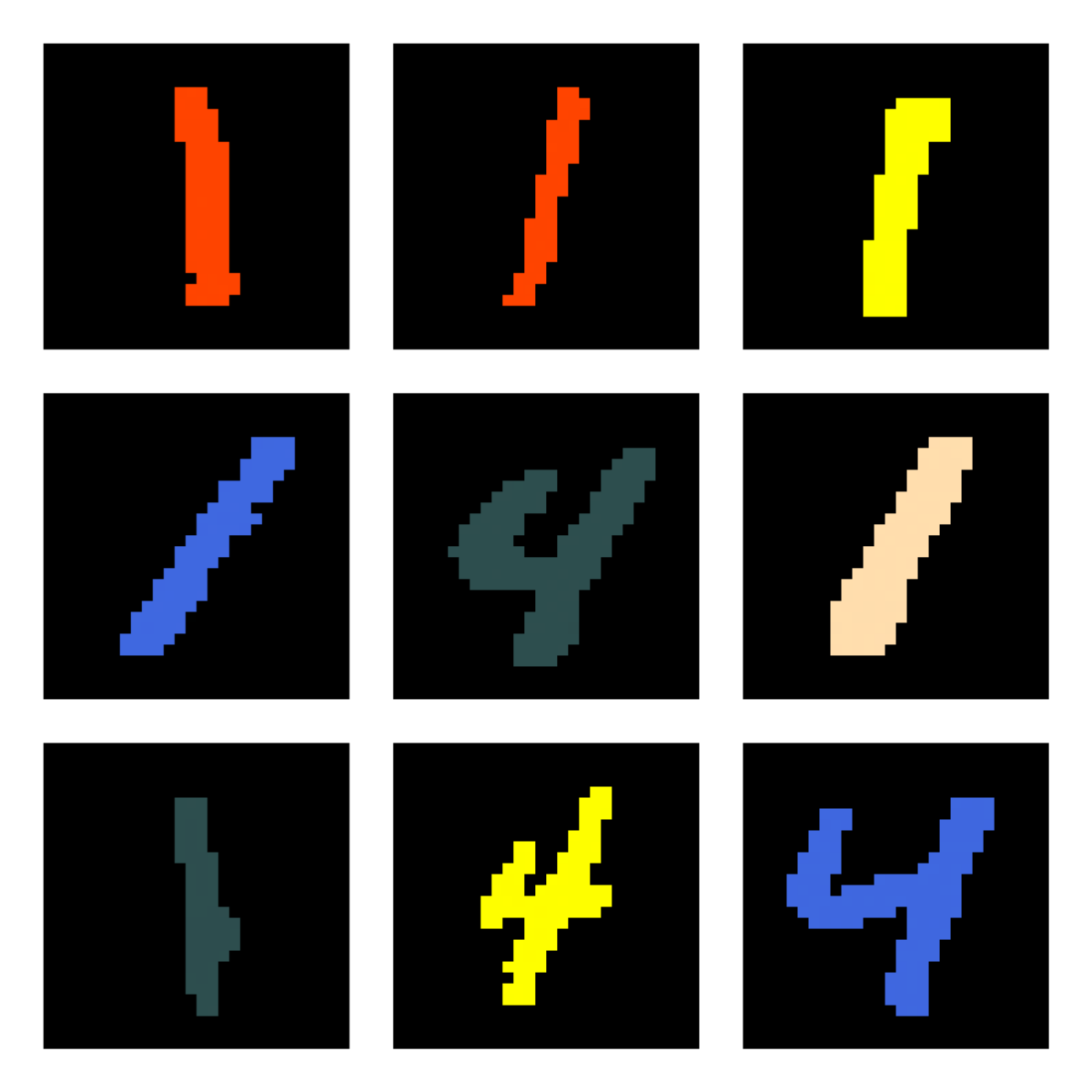}
        \end{subfigure}%
        \hfill
        \begin{subfigure}{0.32\linewidth}
            \includegraphics{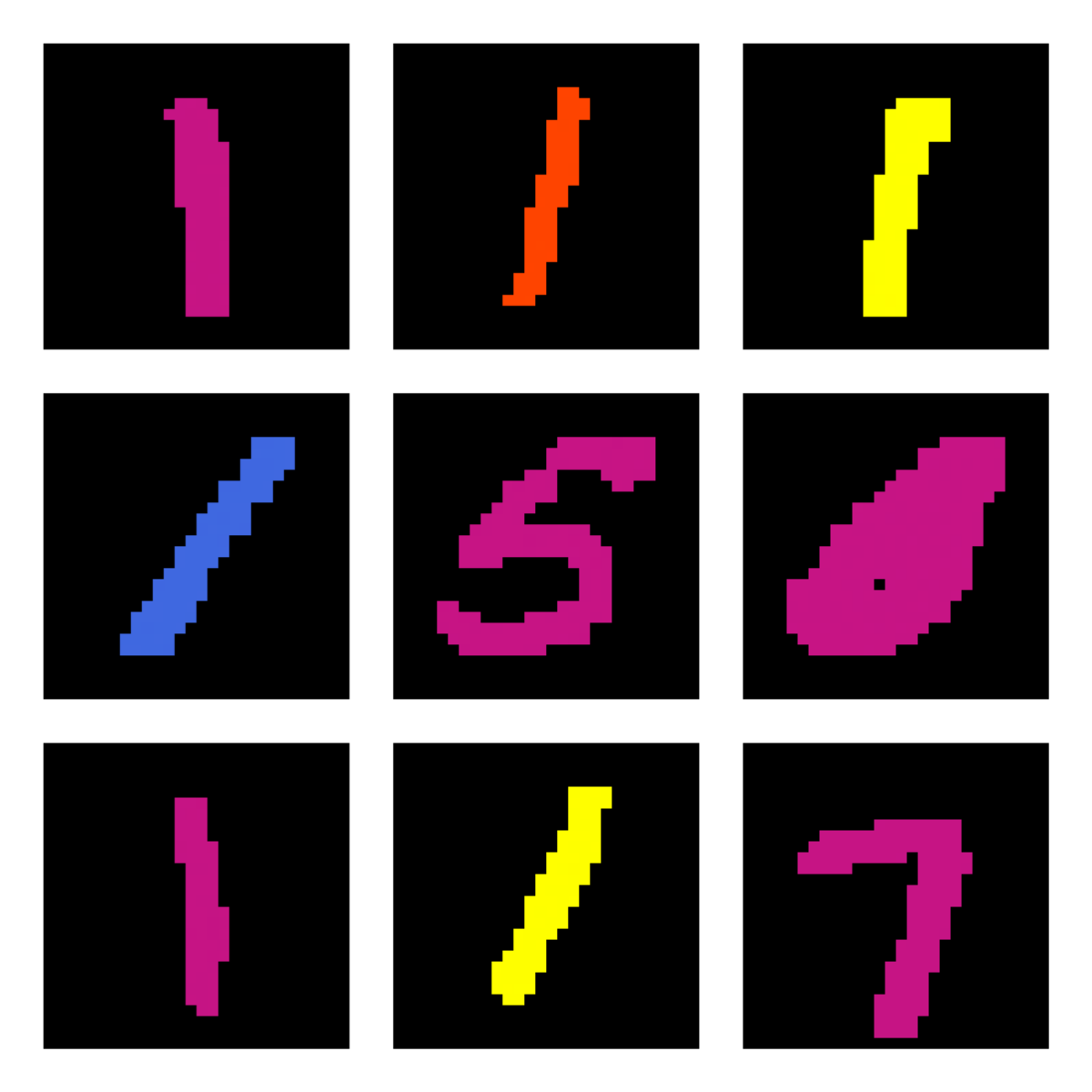}
        \end{subfigure}%
        \hfill
        \begin{subfigure}{0.32\linewidth}
            \includegraphics{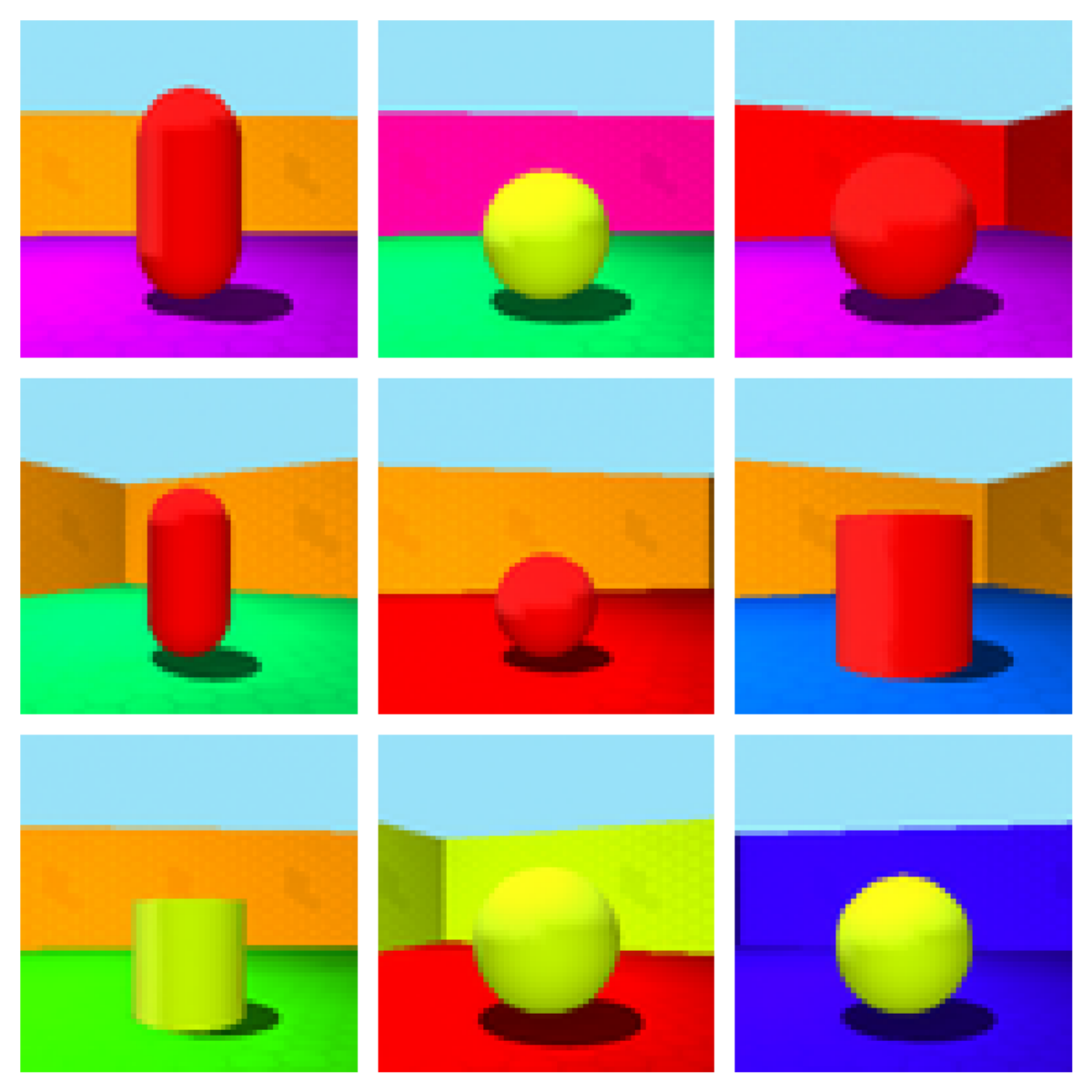}
        \end{subfigure}
    \end{minipage}

    \caption{\textbf{Failure cases of constant baseline.} Constant baseline (top) and \method{} (bottom). The constant baseline struggles with disjunctions, mixing attributes (left), collapsing to an intersection (AND behavior) (middle), or failing for complex queries (right). These failures worsen with higher guidance scales ($w=2.5$).}
    \label{fig:qualitative_comparison_shapes3d_cmnist}
\end{figure}

\textbf{Baselines.} We compare our adaptive method against a static composition baseline. 
This baseline follows standard composition rules \cite{liu:2022, du:2023} and uses fixed scalar weights. In contrast, our proposed framework uses dynamic, time-dependent probability estimates from \Cref{tab:guidance_rules}. The baseline guidance rules are defined as: 
\begin{itemize}[itemsep=1pt]
    \item \textbf{Atom ($c_i$):} $s_t(c_i,\bx)=s_i(t,\bx) - s_\emptyset(t,\bx)$.
    \item \textbf{Negation ($\lnot \psi$):} $-s_t(\psi,\bx)$. 
    \item \textbf{Conjunction ($\psi \land \chi$):} $s_t(\psi,\bx)+s_t(\chi,\bx)$. 
    \item \textbf{Disjunction ($\psi \lor \chi$):} $\frac{1}{2} s_t(\psi,\bx)+ \frac{1}{2} s_t(\chi,\bx)$. 
\end{itemize}
For disjunctions, the baseline uses a constant 0.5 mixture ratio for all cases, as the true normalizing constants are unknown. For completeness, an extended comparison against other compositional baselines and a discussion regarding their limitations can be found in Appendix \ref{sec:appendix_baselines}.

\textbf{Recursive task complexity.} We evaluate the methods through logical queries of increasing complexity, quantified by the count $N$ of AND/OR operators. The evaluation ranges from single-operator baselines to nested formulas with up to $N=5$ logical operators. Queries are generated randomly while enforcing logical validity (e.\,g., avoiding impossible intersections like $3\land 4$, see Appendix \ref{app:generation}).

\begin{figure}[t!]
  \begin{center}
    \centerline{\includegraphics[width=\linewidth]{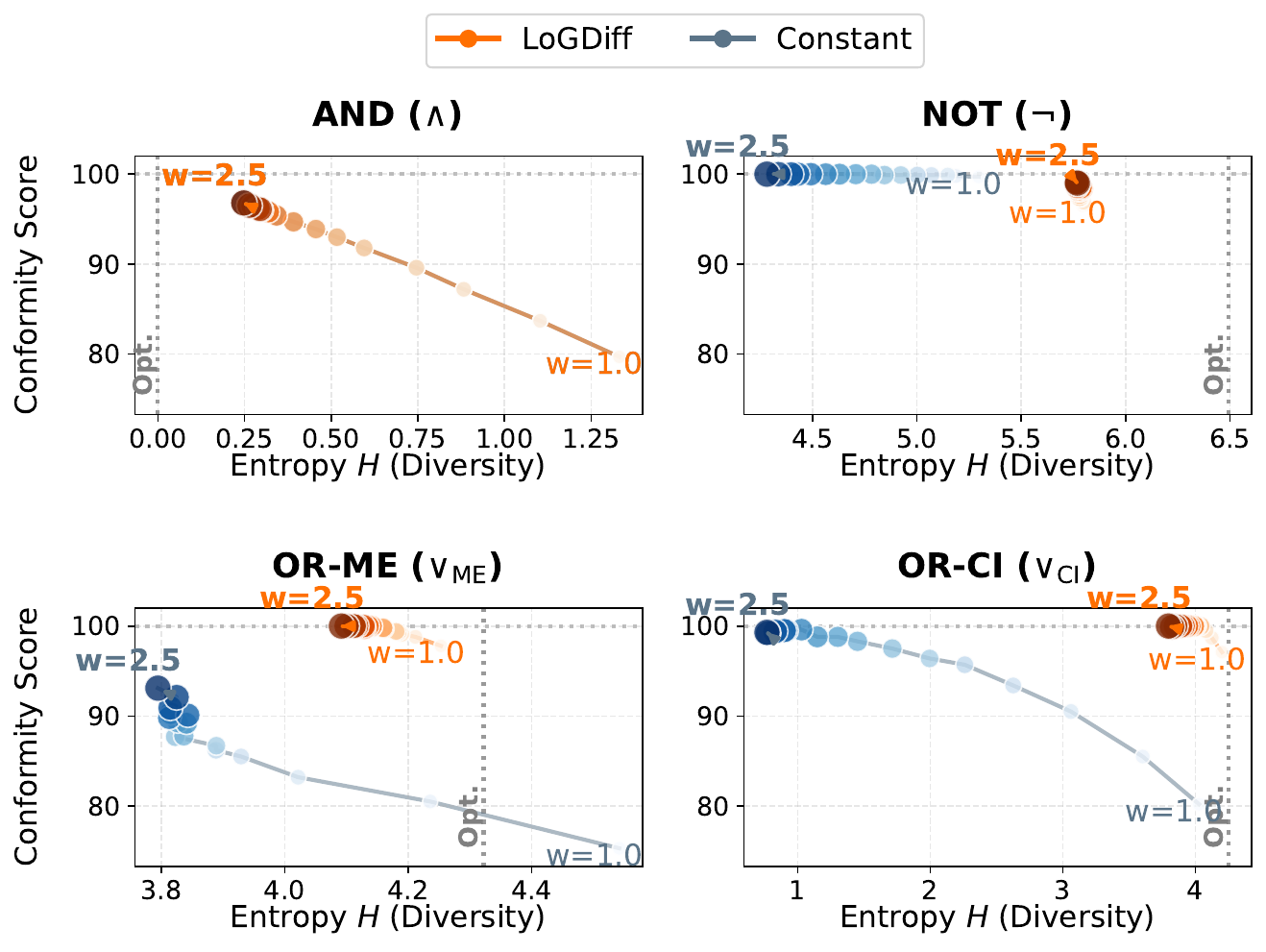}}
    \caption{
      \textbf{Conformity-diversity trade-off on CMNIST}. Conformity Score $\uparrow$ vs. Joint Shannon Entropy across varying guidance scales $w \in [1.0, 2.5]$. Vertical dotted lines indicate the theoretical optimal entropy for each task (note that for AND, the optimal entropy is low as the solution space is highly constrained). While the constant baseline (blue) suffers from low entropy, indicating mode collapse, as guidance strength increases, our method (orange) successfully maintains high sample diversity while achieving high conformity scores.
    }
    \label{fig:gw_acc_entropy_tradeoff}
  \end{center}
\end{figure}

\textbf{Results on synthetic datasets.} As detailed in Table \ref{tab:cs_cmnist_shapes3d}, while results are comparable for intersection and negation, the static baseline yields considerably lower CS on disjunctive and recursive queries.  The constant baseline has a conformity gap of $>20\%$. Furthermore, our method demonstrates better robustness to complexity, maintaining high scores on recursive queries ($N=2\ldots5$) where the baseline consistently yields lower CS. Unconditional generation results are provided as a lower-bound reference, illustrating the baseline probability of satisfying constraints by chance.

\textbf{Conformity-diversity trade-off.} While conformity generally improves with higher guidance scales, \Cref{fig:gw_acc_entropy_tradeoff} indicates a trade-off for the static baseline: increasing guidance results in reduced joint entropy. As shown in \Cref{fig:qualitative_comparison_shapes3d_cmnist}, this quantitative reduction corresponds to the model collapsing to a conjunction or averaging attributes (e.\,g., mixing red and yellow into orange) instead of modeling the full disjunction. In contrast, our method is less affected by this trade-off, preserving diversity even at high guidance levels. Qualitative results can be found in Appendix \ref{app:additional_results}.

\textbf{Results on real-world images.} On CelebA, we restrict evaluation to binary attributes (Blond/Non-blond and Male/Female). In \Cref{tab:celeba}, we report FID alongside CS to confirm that imposing logical constraints does not degrade visual quality and to demonstrate that logical compositional guidance is applicable to real-world datasets. Notably, our method achieves a substantially lower FID for the negation operation compared to the constant baseline, which often suffers from quality degradation.

\begin{table}[h]
\caption{Conformity Score (CS) and FID on CelebA. }
\label{tab:celeba}
\begin{center}
\resizebox{0.9\linewidth}{!}{
\begin{tabular}{lrrrrrr}
\toprule
 & \multicolumn{2}{c}{\textbf{AND}} & \multicolumn{2}{c}{\textbf{NOT}} & \multicolumn{2}{c}{\textbf{OR-CI}} \\
\cmidrule(lr){2-3} \cmidrule(lr){4-5} \cmidrule(lr){6-7} 
\textbf{Method} & CS $\uparrow$ & FID $\downarrow$ & CS $\uparrow$ & FID $\downarrow$ & CS $\uparrow$ & FID $\downarrow$ \\
\midrule
Uncond. & 0.21 & 21.60 & 0.52 & 21.60 & 0.75 & 21.60 \\
\hline
Constant & 0.63 & 19.02 & 0.75 & 32.87 & 0.93 & \textbf{17.11} \\
\method{} & 0.63 & 19.02 & \textbf{0.80} & \textbf{23.61} &\textbf{ 0.97} & 18.47 \\
\bottomrule
\end{tabular}}
\end{center}
\end{table}

\textbf{Enhancing generation quality via repulsive guiding.} We investigate whether \textit{repulsive guiding} can enhance the generation quality of single-class conditional generation for ImageNet (with optimal FID settings \citep{karras:2024}) and compositional guidance for synthetic datasets. Concretely, we replace every atomic condition $A$ with a logical query of the form $A\land \lnot B$, where $A$ is the desired class and $B$ is a competing class. Rather than fixing $B$ globally, we select it adaptively at each diffusion step based on the current noisy sample $\bx_t$, choosing the most probable non-$A$ class under a noise-aware classifier $\widehat\pi$. We then apply our compositional construction to compute the corresponding logical score using the atomic diffusion scores together with the estimates $\widehat\pi(B\mid \bx_t)$. This results in an attractive term toward $A$ and a repulsive term away from the currently most plausible $B$, but only in regions where $B$ is probable. In this way, our method provides a principled guidance mechanism that empirically improves FID by suppressing class confusions while preserving sample diversity. As shown in Figure \ref{fig:neg_guidance_fid} and Table \ref{tab:neg_guiding}, our method provides a principled, state-dependent guidance mechanism that empirically improves FID and CS by suppressing class confusions while preserving sample diversity.

\begin{table}[tbh]
\centering
\caption{Effect of repulsive guidance (RG) on conformity score (\%, $\uparrow$) on CMNIST and Shapes3D.
}
\label{tab:neg_guiding}
\resizebox{\linewidth}{!}{
\begin{tabular}{lrrrrrrrr}
\toprule
Method & AND & NOT & OR-ME & OR-CI & $N=2$ & $N=3$ & $N=4$ & $N=5$ \\
\midrule
\multicolumn{9}{l}{\textbf{CMNIST}} \\ 
\method{} & 80.4 & 96.5 & \textbf{98.0} & 97.2 & 93.8 & 93.3 & 94.2 & 94.4 \\
\method{} + RG & \textbf{83.6} & \textbf{98.4} & 97.9 & \textbf{98.0} & \textbf{94.7} & \textbf{94.3} & \textbf{95.1} & \textbf{95.5} \\
\midrule
\multicolumn{9}{l}{\textbf{Shapes3D}} \\ 
\method{} & 82.4 & 96.4 & 94.3 & 96.0 & 88.8 & 88.6 & 85.1 & 87.6 \\
\method{} + RG & \textbf{86.4} & \textbf{99.3} & \textbf{95.6} & \textbf{96.7} & \textbf{91.8} & \textbf{90.5} & \textbf{89.8} & \textbf{89.9} \\
\bottomrule
\end{tabular}
}
\end{table}

\begin{figure}
    \centering
    \includegraphics[width=1\columnwidth]{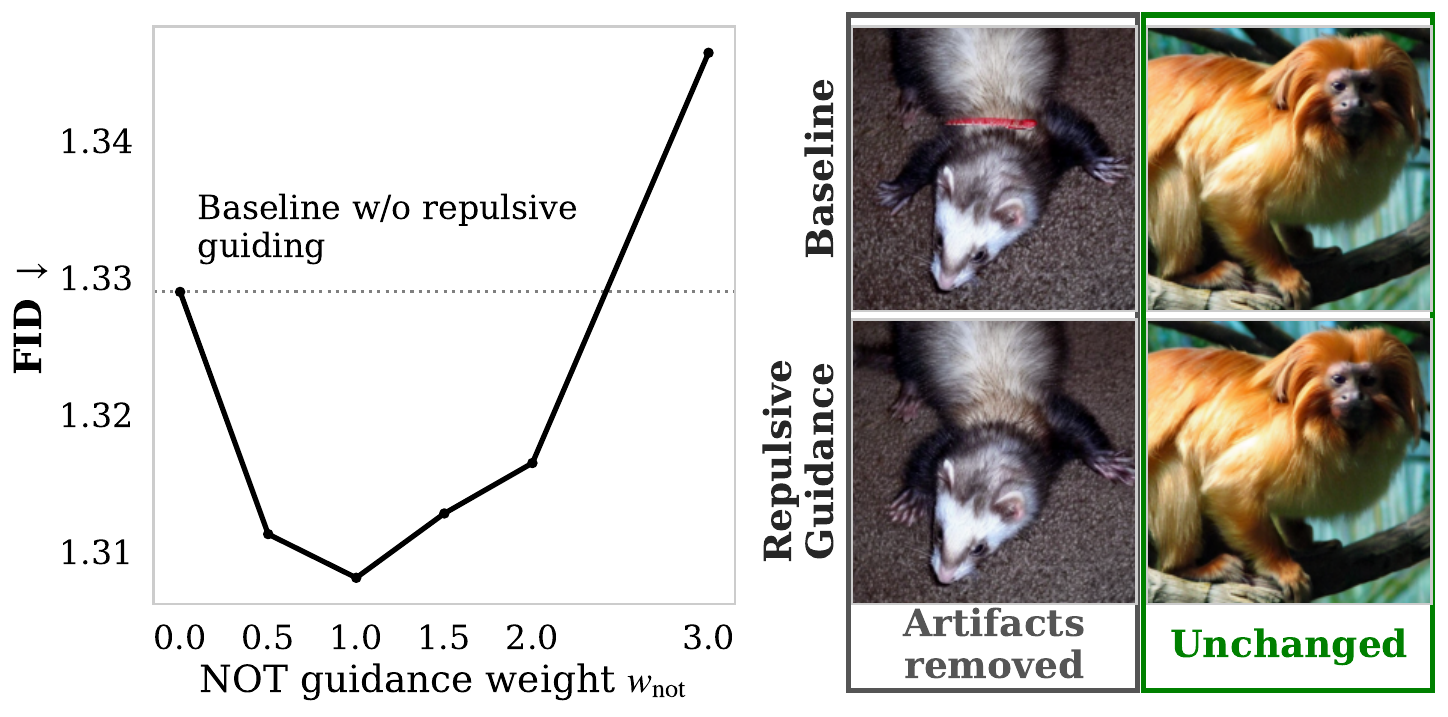}
    \caption{\textbf{Impact of adaptive repulsive guidance.} (Left) Moderate guidance weights improve FID scores compared to no repulsive guiding ($w_\text{not} = 0.0$). (Right) Repulsive guiding removes artifacts while clearly defined samples remain unchanged.}
    \label{fig:neg_guidance_fid}
\end{figure}

\subsection{Molecular Generation}

We explore \method{} for multi-target structure-based drug design, where the objective is to design ligands (i.e., small drug-like molecules) given a target protein in its three-dimensional structure.
We experiment with the dual-target drug design, in which the ligand simultaneously binds two proteins.
Dual-target drug design is of interest for combating various cancers and neurodegeneration~\citep{ramsay2018perspective}, or in reducing drug resistance ~\citep{yang2024rethinking}. We study the performance of compositional guidance, where the two atomic conditions are two target proteins, using the experimental setup of \citep{skreta2025feynman}.
We evaluate our framework on the GRM5-RRM1 protein pair (UniProt: P41594, P23921), where GRM5 is a metabotropic glutamate receptor implicated in neurological disorders, and RRM1 is a ribonucleotide reductase subunit that plays a central role in DNA synthesis and is a validated oncology target.
The generated ligand performance is assessed by the docking score to each target protein using AutoDock Vina~\citep{eberhardt2021autodock}, where $32$ ligands of size $23$ are generated over $8$ experiments.

\cref{tab:FKC_AND,tab:FKC_NOT} list the results of \method{}, TargetDiff~\citep{guan20233d}, and  DualDiff~\citep{yang2024rethinking}, where we perform guidance with fixed mixing weights. 
\cref{tab:FKC_AND_correction,tab:FKC_NOT_correction} show the impact of Feynman-Kac Correction (FKC) \citep{skreta2025feynman}. 
\cref{tab:FKC_AND}, similar to ~\citep{yang2024rethinking,skreta2025feynman}, evaluates the performance for dual targets (AND and OR), while \cref{tab:FKC_NOT} lists the results for one on-target and one off-target use case (AND-NOT and XOR). We report the average docking score for each target. A lower docking score indicates better binding. 

The impact of logical guidance on ligand geometry is illustrated in Figure \ref{fig:ligand_structure}, where $A \land B$ ligands occupy the shared binding pocket with complementary interactions to both targets, while $A \land \lnot B$ ligands adopt distinct poses that favor GRM5 binding while minimizing RRM1 contacts. A detailed $2$D interaction analysis illustrating the distinct binding profiles of ligands generated under different logical constraints is provided in Supplementary  \cref{fig:ligand_interactions}.

\begin{figure}[t!]
    \centering
    \includegraphics[width=0.89\columnwidth]{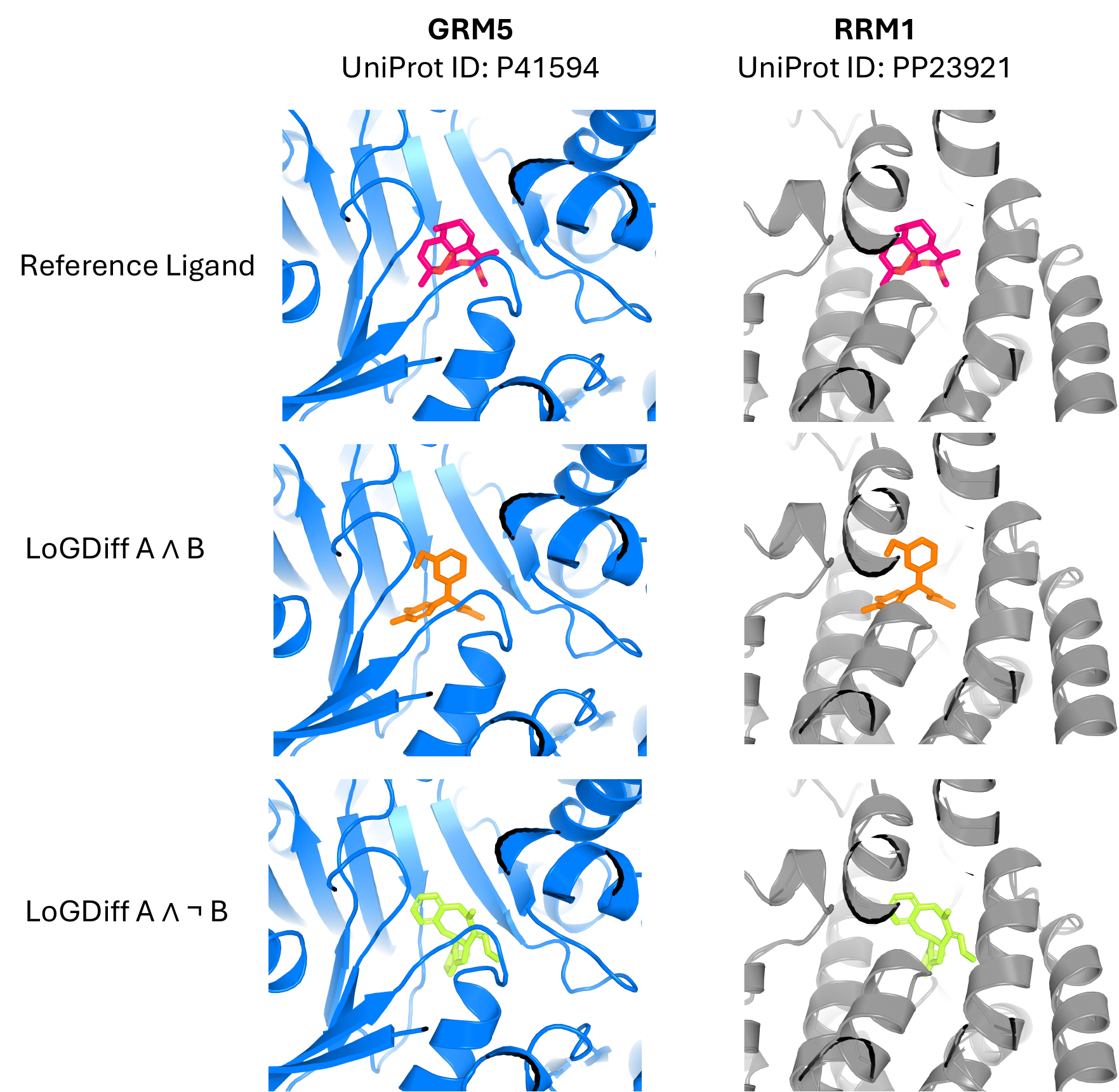}
    \caption{\label{fig:ligand_structure}\textbf{Visualization of ligands in the GRM5-RRM1 dual-target binding site.} (Top) Reference ligand in the aligned binding pocket. (Middle) Representative ligand generated under the guidance term ($A \land B$). (Bottom) Representative ligand generated under the selective constraint ($A \land \lnot B$), designed to engage GRM5 while avoiding RRM1 binding. Protein surfaces are shown for GRM5 (target A) and RRM1 (target B), with the ligand displayed as sticks.}
\end{figure}

We also report the average difference between the minimum and maximum docking scores under guidance. In \cref{tab:FKC_NOT}, the expected behavior is an increase in the maximum score and a decrease in the minimum score, consistent with improved on-target binding and reduced off-target binding. To the best of our knowledge, this use case has not been systematically studied in prior ligand–protein drug design work.
The \textsc{and-not} 
formulation enables selective target engagement, generating ligands that bind one protein while avoiding another. As shown in \cref{tab:FKC_NOT}, \method{} maintains stable generation compared to other methods. The \textsc{xor} composition ($A \oplus B$) allows the generative model to autonomously select which target to prioritize, achieving comparable performance.

The validity and uniqueness (V. \& U.) of the generated ligands are also presented, as well as diversity and quality metrics summarizing the drug likeness (QED~\citep{bickerton2012quantifying}) and their synthetic accessibility (SA ~\citep{ertl2009estimation}). 
Notably, the OR composition (A $\lor$ B) with tempering yields product scores approaching those of the AND composition while providing greater flexibility in target engagement. The method preserves molecular quality metrics
competitive with specialized dual-target methods like DualDiff.
We extract posterior probability estimates from the diffusion model for the logical guidance rules as explained in \cref{annex:classifier}.

\newcommand{\grouphdr}[1]{%
  \addlinespace[0.4em]
  \cmidrule(lr){1-7}
  \multicolumn{7}{c}{\bfseries #1}\\
  \addlinespace[0.2em]
}

\newcommand{\grouphdrNO}[1]{%
  \midrule
  \multicolumn{7}{c}{\bfseries #1}\\
  \addlinespace[0.2em]
}

\begin{table}[t!]
\caption{Evaluation of \method{} to generate drug candidates with dual on-targets, along with the baselines: TargetDiff~\citep{guan20233d}, and DualDiff~\citep{yang2024rethinking}. A higher average docking score should correlate with a higher binding affinity to both targets.}
\label{tab:FKC_AND} 
\centering
\setlength{\tabcolsep}{3pt}
\vspace{-0.9em}
\resizebox{\linewidth}{!}{
\begin{tabular}{lcccccc}
\toprule

& (\texttt{A} * \texttt{B}) $\uparrow$ & \texttt{A} $\downarrow$ & \texttt{B} $\downarrow$ & Div. $\uparrow$ & V. \& U. $\uparrow$ & Qual. $\uparrow$ \\ \midrule 

TargetDiff & $71.01_{\pm 2.96}$ & $-8.98_{\pm 0.21}$ & $-7.86_{\pm 0.15}$ & $0.88_{\pm 0.01}$ & $0.94_{\pm 0.04}$ & $0.28_{\pm 0.04}$ \\ 

\grouphdr{$A \land B$}

DualDiff  & $71.87_{\pm 3.33}$ & $-8.85_{\pm 0.24}$ & $-8.08_{\pm 0.17}$ & $0.89_{\pm 0.01}$ & $0.97_{\pm 0.03}$ & $0.27_{\pm 0.10}$ \\ 

\method{}   & $73.20_{\pm 3.18}$ & $-8.99_{\pm 0.24}$ & $-8.11_{\pm 0.17}$ & $0.89_{\pm 0.01}$ & $0.98_{\pm 0.02}$ & $0.26_{\pm 0.08}$ \\ 

\grouphdr{$A \lor B$}

\method{}   & ${\bf 73.91}_{\pm 0.71}$ & $-9.01_{\pm 0.03}$ & $-8.16_{\pm 0.10}$ & $0.90_{\pm 0.00}$ & $0.99_{\pm 0.01}$ & $0.16_{\pm 0.05}$ \\

\bottomrule
\end{tabular}
}
\end{table}

\begin{table}[t!]
\setlength{\tabcolsep}{3pt}
\caption{Evaluation of \method{} with on and off targets, along with the baselines: TargetDiff and  DualDiff. The $A \land \lnot B$ uses $A$ as on-target and $B$ as off-target; in the $A \oplus B$, the roles of on- and off-targets are left to the generative model. 
}
\label{tab:FKC_NOT} 
\centering
\vspace{-0.9em}
\resizebox{\linewidth}{!}{
\begin{tabular}{lcccccc}
\toprule

& $\Delta$(\texttt{A}, \texttt{B}) $\uparrow$ & \texttt{A} $\downarrow$ & \texttt{B} $\uparrow$ & Div. $\uparrow$ & V. \& U. $\uparrow$ & Qual. $\uparrow$ \\ 

\grouphdrNO{$A \land \lnot B$}

DualDiff & $0.28_{\pm 0.08}$ & $-13.35_{\pm 1.07}$ & $-13.53_{\pm 1.23}$ & $0.85_{\pm 0.01}$ & $0.91_{\pm 0.09}$ & $0.00_{\pm 0.00}$ \\ 

SDE & $0.86_{\pm 0.13}$ & $-9.06_{\pm 0.16}$ & $-8.24_{\pm 0.08}$ & $0.89_{\pm 0.00}$ & $0.99_{\pm 0.03}$ & $0.28_{\pm 0.05}$ \\ 

\method{} & ${\bf 0.94}_{\pm 0.24}$ & $-9.01_{\pm 0.19}$ & $-8.11_{\pm 0.14}$ & $0.90_{\pm 0.00}$ & $1.00_{\pm 0.00}$ & $0.26_{\pm 0.11}$ \\ 

\grouphdr{$A \oplus B$}

& $\Delta$(\texttt{A}, \texttt{B}) $\uparrow$ & \texttt{A} & \texttt{B} & Div. $\uparrow$ & V. \& U. $\uparrow$ & Qual. $\uparrow$ \\ 

\method{}   & $0.89_{\pm 0.13}$ & $-9.00_{\pm 0.03}$ & $-8.15_{\pm 0.11}$ & $0.90_{\pm 0.00}$ & $0.99_{\pm 0.01}$ & $0.16_{\pm 0.05}$ \\

\bottomrule
\end{tabular}
}
\end{table}

\section{Related Work}
The composition of diffusion models is typically realized either through their connection to energy-based models (EBMs) \cite{du:2023, nie:2021, ajay:2023} or via linear combinations of learned score functions \cite{liu:2022, kong:2025}. The latter view can be interpreted as a generalized form of guidance \cite{ho:2021, dhariwal:2021, bansal:2023}, in which conditional score estimates are combined using fixed scalar weights. In this work, we adopt this guidance-based perspective but extend it to complex Boolean expressions by replacing fixed weights with probability-dependent posterior coefficients.

A parallel line of work studies \emph{model composition}, combining separate pre-trained models to merge their capabilities. This is achieved either through weight averaging \cite{zhong:2024, biggs:2024} or via inference-time density composition, including probability-weighted estimators \cite{skreta:2025}, fuzzy logic operators \cite{blohm:2025}, and classifier guidance \cite{garipov:2023}. In contrast to approaches that compose multiple models, we focus on attribute composition using scores derived from a single conditional diffusion model.
Gaudi et al.~\cite{gaudi:2025} also consider logical composition, but address it by modifying training objectives to encourage attribute independence. Their approach complements ours and can be naturally integrated with the proposed logical guidance rules. Prior work has explored limited forms of distribution composition for molecular systems, including ligand–protein design \citep{yang2024rethinking}, distribution annealing and product-of-experts formulations \citep{skreta2025feynman}, combinations of learned distributions \cite{skreta:2025}, and antibody design \citep{alesiani2025guidance}. By comparison, we introduce a general and principled framework for computing logical scores via recursive composition rules.

\begin{table}[t!]
\setlength{\tabcolsep}{3pt}
\caption{
Evaluation of FKC \citep{skreta2025feynman} 
when generating drug candidates with dual on-targets.
FKC improves performance for the AND case while reducing performance 
for OR.
}
\label{tab:FKC_AND_correction} 
\centering
\vspace{-0.9em}
\resizebox{\linewidth}{!}{
\begin{tabular}{lcccccc}
\toprule

& (\texttt{A} * \texttt{B}) $\uparrow$ & \texttt{A} $\downarrow$ & \texttt{B} $\downarrow$ & Div. $\uparrow$ & V. \& U. $\uparrow$ & Qual. $\uparrow$ \\ 

\grouphdrNO{$A \land B$ (FKC)}

\method{} & $80.89_{\pm 6.60}$ & $-9.57_{\pm 0.47}$ & $-8.43_{\pm 0.34}$ & $0.73_{\pm 0.04}$ & $0.77_{\pm 0.16}$ & $0.23_{\pm 0.20}$ \\

\grouphdr{$A \lor B$ (FKC)}

\method{} & $73.51_{\pm 5.42}$ & $-9.02_{\pm 0.33}$ & $-8.13_{\pm 0.30}$ & $0.69_{\pm 0.04}$ & $0.84_{\pm 0.07}$ & $0.25_{\pm 0.11}$ \\

\bottomrule
\end{tabular}
}
\end{table}

\begin{table}[t!]
\setlength{\tabcolsep}{3pt}
\caption{
Evaluation of FKC 
when generating drug candidates with one on-target and one off-target.
As with the dual-target case, FKC improves the performance of \method{}, especially for the XOR case.  
}
\label{tab:FKC_NOT_correction} 
\centering
\vspace{-0.9em}
\setlength{\tabcolsep}{3pt}
\resizebox{\linewidth}{!}{
\begin{tabular}{lcccccc}

\grouphdrNO{$A \land \lnot B$ (FKC)}

& $\Delta$(\texttt{A}, \texttt{B}) $\uparrow$ & \texttt{A} $\downarrow$ & \texttt{B} $\uparrow$ & Div. $\uparrow$ & V. \& U. $\uparrow$ & Qual. $\uparrow$ \\

SDE & $0.96_{\pm 0.34}$ & $-8.75_{\pm 0.32}$ & $-7.79_{\pm 0.46}$ & $0.77_{\pm 0.08}$ & $0.73_{\pm 0.24}$ & $0.30_{\pm 0.24}$ \\ 

\method{} & $0.96_{\pm 0.35}$ & $-8.84_{\pm 0.84}$ & $-7.92_{\pm 0.93}$ & $0.78_{\pm 0.07}$ & $0.83_{\pm 0.15}$ & $0.25_{\pm 0.20}$ \\

\grouphdr{$A \oplus B$ (FKC)}

& $\Delta$(\texttt{A}, \texttt{B}) $\uparrow$ & \texttt{A}  & \texttt{B} & Div. $\uparrow$ & V. \& U. $\uparrow$ & Qual. $\uparrow$ \\ 
\method{} & $1.00_{\pm 0.57}$ & $-9.52_{\pm 0.50}$ & $-8.61_{\pm 0.55}$ & $0.69_{\pm 0.02}$ & $0.83_{\pm 0.04}$ & $0.33_{\pm 0.24}$ \\ 

\bottomrule
\end{tabular}
}
\end{table}

\section{Discussion}

\method{} translates complex logical expressions into exact probabilistically consistent guidance terms: for disjunctions and negations, the guidance depends on the posterior of each formula, rather than fixed coefficients. This explains the failure modes of the constant baselines on these queries
(\Cref{fig:qualitative_comparison_shapes3d_cmnist}). In contrast, the
recursive rules in \Cref{tab:guidance_rules} adapt the coefficients,
which improves the conformity-diversity trade-off (\Cref{fig:gw_acc_entropy_tradeoff}). The sufficient conditions in \Cref{prop:exact-meci-fragment} connect logical guidance to tractable inference in probabilistic circuits. For a formula admitting a circuit representation with decomposable conjunctions and conditionally-independent or mutually-exclusive disjunctions, evaluating $s_t(\varphi,\bx)$ reduces to a linear-time circuit pass at each diffusion step. This connection also clarifies when certain queries can be evaluated exactly, such as those involving constraints defined by mutually independent categorical variables (\Cref{prop:categorical_simple}).

\textbf{Limitations.} The exactness of the method relies on properties (CI/ME) that may only hold approximately. The method also depends on accurate posterior probability estimates. Deviations of these estimates can be amplified by the nonlinear coefficients, especially for negation through $\widehat\pi/(1-\widehat\pi)$. Practical stabilizations (e.g., clipping) can improve robustness but deviate from the exact theory. Finally, while circuit evaluation is cheap, the overall cost still scales with the number of atomic predicates in the logical expressions.

\section*{Impact Statement}
This paper presents work whose goal is to advance the field of Machine Learning. There are many potential societal consequences of our work, none which we feel must be specifically highlighted here.

\bibliography{bib/guidance,bib/ab_pref,bib/alignab,bib/FKC,bib/references}
\bibliographystyle{icml2026}

\newpage
\appendix
\onecolumn
\section*{Supplementary Material}

\section{Proofs}\label{annex:proof}

\showDeferredProofs

\newpage
\section{Example of recursive construction}
We illustrate the recursive construction of posterior coefficients and logical scores on the formula from \Cref{fig:overview}
\[
\varphi = (1 \land \text{blue}) \lor (9 \land \text{red}).
\]
We assume conditional independence of subformulas and use $\widehat\pi_i \approx p_t(c_i\mid\bx)$ and
$s_t(c_i,\bx)=s_i(t,\bx)-s_\emptyset(t,\bx)$ as atomic input

\smallskip
\noindent\emph{(1) Posterior coefficients.}
\begin{align*}
\widehat\pi(1) &= \widehat\pi_1, & \\
\widehat\pi(\text{blue}) &= \widehat\pi_\text{blue}, & \\
\widehat\pi(3) &= \widehat\pi_9, & \\
\widehat\pi(\text{red}) &= \widehat\pi_\text{red}, & \\
\widehat\pi(\psi) &:= \widehat\pi(1\land \text{blue}) = \widehat\pi_1\widehat\pi_\text{blue}\\
\widehat\pi(\chi) &:= \widehat\pi(9\land \text{red}) = \widehat\pi_9\widehat\pi_\text{red}\\
\widehat\pi(\varphi)
&= \widehat\pi(\psi)+\widehat\pi(\chi) = \widehat\pi_1\widehat\pi_\text{blue} + \widehat\pi_9\widehat\pi_\text{red}&
\end{align*}

\smallskip
\noindent\emph{(2) Logical scores.}
\begin{align*}
s_t(c_i,\bx) &= s_i(t,\bx)-s_\emptyset(t,\bx), \\[0.3em]
s_t(\psi,\bx) &= s_t(1,\bx)+s_t(\text{blue},\bx) \\[0.3em]
s_t(\chi,\bx) &= s_t(9,\bx)+s_t(\text{red},\bx) \\[0.3em]
s_t(\varphi,\bx) &= \dfrac{\widehat\pi(\psi) s_t(\psi,\bx) + \widehat\pi(\chi) s_t(\chi,\bx)}{\widehat\pi(\psi)+\widehat\pi(\chi)}
\end{align*}
Substituting the posterior coefficients and logical scores gives the fully expanded expression:
\[
\boxed{
    s_t(\varphi,\bx) =\dfrac{\widehat\pi_1\widehat\pi_\text{blue} \big(s_t(1,\bx)+s_t(\text{blue},\bx)\big) + \widehat\pi_9\widehat\pi_\text{red} \big(s_t(9,\bx)+s_t(\text{red},\bx)\big)}{\widehat\pi_1\widehat\pi_\text{blue}+\widehat\pi_9\widehat\pi_\text{red}}
}
\]

All $\widehat\pi_i$ are scalar coefficients,
and all spatial derivatives come from the diffusion model via
$s_t(c_i,\bx)=s_i-s_\emptyset$.

\newpage
\section{Additional Theoretical Results}\label{annex:theoretical_results}

\subsection{Taxonomy Queries}

In many practical settings, the atomic predicates are not unrelated attributes, but form a \emph{taxonomy}. Typical examples include biological classifications (e.g., phylum, genus, species), object hierarchies in vision (e.g., vehicle, car, sedan), or semantic type systems. In such cases, predicates are either mutually exclusive (siblings in the taxonomy) or strictly nested (a child predicate implies its parent). Users often wish to specify constraints at different levels of this hierarchy, for example, allowing any instance of a broad category (e.g., mammal) or restricting generation to a small number of specific subcategories. The following proposition shows that any such taxonomy query, specified simply as a set of allowed nodes in the hierarchy, admits an exact logical guidance rule under our framework.

\begin{proposition}
\label{prop:taxonomy_queries}
Fix $t\in(0,T]$ and $\bx\in\mathcal X$.
Let $\mathcal T$ be a finite taxonomy of propositions, that is, a set of predicates $\{c_u : u\in V\}$ indexed by nodes of a rooted tree $(V,\preceq)$, with root $r$ such that
\[
c_r \equiv \top,
\qquad\text{and}\qquad
u \preceq v \ \Rightarrow\  \llbracket c_u\rrbracket \subseteq \llbracket c_v\rrbracket,
\]
and siblings are mutually exclusive:
if $u$ and $v$ are distinct children of the same parent, then
$\llbracket c_u\rrbracket \cap \llbracket c_v\rrbracket = \emptyset$.

A \emph{taxonomy query} is specified by a set of allowed nodes $A\subseteq V$, interpreted as the event
\[ \varphi_A := \bigvee_{u\in A} c_u. \]
There exists a semantically equivalent formula $\psi_A$ whose circuit uses only $\neg$-nodes and OR-ME $\vee$-nodes.
Consequently, the recursive rules in \Cref{tab:guidance_rules} compute
$p_t(\varphi_A\mid\bx)$ and $s_t(\varphi_A,\bx)$ exactly.
\end{proposition}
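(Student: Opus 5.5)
The plan is to construct $\psi_A$ explicitly in two reduction steps, and then invoke \cref{prop:exact-meci-fragment}.

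\emph{Step 1: reduce to an antichain.} Let $A^\ast\subseteq A$ be the set of maximal elements of $A$ under $\preceq$, i.e.\ nodes $u\in A$ with no proper ancestor in $A$. If $u\preceq v$ with $v\in A$, then $\llbracket c_u\rrbracket\subseteq\llbracket c_v\rrbracket$, so $c_u$ is absorbed by $c_v$. Hence $\varphi_A\equiv\bigvee_{u\in A^\ast}c_u$.

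If $r\in A$, then $\varphi_A\equiv\top$, and the claim is trivial (or we write $\top$ as $\neg\bot$, with score $0$). Otherwise, any two distinct $u,v\in A^\ast$ are incomparable. Let $w$ be their lowest common ancestor. Then $u$ and $v$ lie below distinct children $u',v'$ of $w$. Nesting gives $\llbracket c_u\rrbracket\subseteq\llbracket c_{u'}\rrbracket$ and $\llbracket c_v\rrbracket\subseteq\llbracket c_{v'}\rrbracket$, and sibling exclusivity makes these two supersets disjoint. So the events $\{c_u\}_{u\in A^\ast}$ are pairwise disjoint.

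\emph{Step 2: build an OR-ME circuit.} Take $\psi_A$ to be a left-nested disjunction $(\cdots((c_{u_1}\vee c_{u_2})\vee c_{u_3})\cdots)\vee c_{u_k}$ over $A^\ast$. At each $\vee$-node, the left subformula is a union of events disjoint from the right atom. By countable (here finite) additivity, their intersection is empty, so $p_t(\cdot\wedge\cdot\mid\bx)=\mathbb P(\bX_0\in\emptyset\mid\bX_t=\bx)=0$. This holds for every $(t,\bx)$ because disjointness is a property of the terminal events themselves, not of the time-$t$ posterior. Every $\vee$-node is therefore OR-ME.

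Semantic equivalence passes through to the posteriors, since $p_t(\varphi\mid\bx)$ depends only on $\llbracket\varphi\rrbracket$. Hence $p_t(\varphi_A\mid\bx)=p_t(\psi_A\mid\bx)$, and the logical scores coincide as well.

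\emph{Step 3: conclude.} Applying \cref{prop:exact-meci-fragment} to $\psi_A$ gives exact recursion under the standing regularity hypotheses (differentiability, $0<p_t<1$, and exact atomic $\widehat\pi$). No $\wedge$-nodes are needed. $\neg$-nodes appear only in the degenerate case $r\in A$, or if one prefers to express complements of subtrees.

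The main obstacle is Step 1: making the disjointness of incomparable nodes rigorous. This needs the tree structure, namely that incomparable nodes descend from distinct children of their lowest common ancestor, combined with transitivity of the nesting. One also has to handle the measure-zero caveat that exclusivity is set-theoretic, which is in fact stronger than what the proof needs. A secondary subtlety is the requirement $0<p_t(\psi\mid\bx)<1$ at intermediate nodes. This is assumed rather than proved, and it holds automatically only if no partial union equals the whole space. If some partial union does, I would remark that the OR-ME score formula itself does not require $p_t<1$; only negation does.
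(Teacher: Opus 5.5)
Your argument is correct, but it takes a different route from the paper's. The paper does not reduce to the maximal elements of $A$. Instead it defines the exclusive refinements $r_u := c_u\wedge\neg\bigl(\bigvee_{v\in\mathrm{ch}(u)}c_v\bigr)$ and observes that $\{r_u\}_{u\in V}$ partitions $\mathcal X$: each sample has a unique most specific node on its root-to-leaf chain, which is the same tree fact behind your LCA argument. It then writes $\varphi_A$ as the OR-ME disjunction of all $r_u$ with $\llbracket r_u\rrbracket\subseteq\llbracket\varphi_A\rrbracket$.

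For the statement as posed, your absorption route is leaner. Your circuit is $|A^\ast|$ of the originally queried atoms joined by $|A^\ast|-1$ OR-ME nodes, with no negations, so it avoids the $\widehat\pi/(1-\widehat\pi)$ sensitivity entirely. It also sidesteps two points the paper leaves implicit:
\begin{itemize}
\item As written, $r_u$ contains a non-CI $\wedge$-node. To fit the ``$\neg$ and OR-ME only'' claim it must be rewritten as $\neg\bigl(\neg c_u\vee\bigvee_{v\in\mathrm{ch}(u)}c_v\bigr)$, which is an OR-ME disjunction because $\llbracket c_v\rrbracket\subseteq\llbracket c_u\rrbracket$.
\item Cells with $\llbracket r_u\rrbracket=\emptyset$ violate $0<p_t<1$ and must be pruned. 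This happens, e.g., for internal nodes covered by their children, which is the usual situation in a taxonomy.
\end{itemize}

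What the paper's route buys is generality. The $r_u$ are the atoms of the Boolean algebra generated by the taxonomy. Hence every formula over the $c_u$, including differences such as ``mammal but not dog'', is an OR-ME disjunction of cells. Case 2 of \cref{prop:completeness} relies on exactly this, whereas your absorption trick only applies to monotone queries $\bigvee_{u\in A}c_u$.

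A minor nit: for $r\in A$, drop the $\neg\bot$ remark, since $\bot$ has probability $0$ and no log-score. Just take $\psi_A=c_r$; the atom rule already returns $\widehat\pi=1$ and score $0$.
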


\begin{proof}
For each node $u\in V$, define its \emph{exclusive refinement} (``exactly $u$ and none of its descendants'')
\[
r_u := c_u \wedge \neg\Big(\bigvee_{v\in \mathrm{ch}(u)} c_v\Big),
\]
where $\mathrm{ch}(u)$ are the children of $u$.
Because siblings are mutually exclusive, the disjunction $\bigvee_{v\in \mathrm{ch}(u)} c_v$ is OR-ME.

Moreover, the family $\{r_u : u\in V\}$ is pairwise mutually exclusive, and its union is $\top$:
every terminal sample belongs to exactly one ``most specific'' taxonomy node along its root-to-leaf path.
Hence any taxonomy query $\varphi_A=\bigvee_{u\in A} c_u$ is semantically equivalent to the (mutually exclusive) disjunction of all
exclusive refinements contained in it:
\begin{eqnarray}\label{eq:tax_comp}
\varphi'_A := \bigvee_{u\in V:\ \llbracket r_u\rrbracket \subseteq \llbracket \varphi_A\rrbracket} r_u.
\end{eqnarray}
The right-hand side is an OR-ME disjunction of subformulas built using only negation and OR-ME disjunctions, and we have $\llbracket \varphi_A \rrbracket=\llbracket \varphi'_A\rrbracket$.
Therefore $\varphi'_A$ admits a circuit satisfying Prop.~\ref{prop:exact-meci-fragment}, and exactness of the recursive rules follows.
\end{proof}

\subsection{Completeness Conditions for Circuit Compilability}

We presented in \cref{prop:exact-meci-fragment} a set of exact recursive composition rules.  We now show that these recursive rules are complete for two classes of distributions of general interest.

The first class concerns properties that may be partitioned into multiple groups, where properties within the same group are mutually exclusive and those between groups are conditionally independent.  This situation arises, for example, when we have multiple categorical properties, each of which are treated as independent.  For instance, we may have color and digit-class as the groups, and the properties $\{\text{red},\text{green},\text{blue}\}$ in the former group and $\{0,...,9\}$ in the latter.  We note that this general case also covers multiple conditionally independent binary properties, since the groups are singletons. 

The second class concerns sets of properties that are strictly nested, hence for any two properties, we have either that they are mutually exclusive, or that one property implies the other.  This case covers the situation where we have a strictly nested ontology, for instance, a taxonomy of microbial species, genera and phyla.

To demonstrate completeness, we introduce the notion of the {\em compilability} of a Boolean formula $\varphi$.  We write $PC(\varphi)$ for the probabilistic circuit formed by directly parsing $\varphi$, so that conjunctions of subformulas are mapped to conjunction nodes, and similarly for disjunctions and negations.  $PC(\varphi)$ provides a direct representation of $\varphi$, and the function $PC(.)$ provides a one-to-one mapping between Boolean formulas and probabilistic circuits.  However, the circuit $PC(\varphi)$ may be inadequate for certain purposes; in particular, it may not satisfy the conditions of \cref{prop:exact-meci-fragment}.  A {\em compilation} of $\varphi$, denoted $PC^*(\varphi)$, allows $PC(\varphi)$ to be further manipulated according to a compilation scheme, to produce a circuit $PC^*(\varphi)=PC(\varphi')$ with desirable properties, which maintains the semantics of the original Boolean formula.  Hence, we must have $\llbracket \varphi\rrbracket = \llbracket \varphi'\rrbracket$.  A formula $\varphi$ is compilable for a given distribution and class of valid circuits iff $PC^*(\varphi)$ belongs to the class of valid circuits.

In the following proposition, we show that the logical composition rules we provide in \cref{tab:guidance_rules} are complete for the classes of distribution noted above, in the sense that any Boolean formula $\varphi$ may be compiled into a probabilistic circuit fulfilling the conditions of \cref{prop:exact-meci-fragment}, which can therefore be exactly evaluated.  We note that, for the second class of distribution, all formulas compile to probabilistic circuits in a more restricted set (excluding the $\wedge$-CI rule), and we have the additional guarantee that, if these conditions are satisfied at terminal time ($t=0$), they are satisfied for all $\bx_t$ and $t>0$.

\vspace{0.5cm}
\begin{proposition}
\label{prop:completeness}
Given a distribution over $\bx_0$ conditioned on $\bX_t=\bx_t$ with $t\in(0,\;T]$, and predicates $\{c_1,...c_N\}$ over $\bx_0$, for any Boolean formula $\varphi$ over the same predicates such that $\llbracket \varphi\rrbracket \neq \llbracket \bot\rrbracket$, we have that $\varphi$ may be compiled into a probabilistic circuit evaluable by the rules of \cref{tab:guidance_rules} at $\bX_t=\bx_t$ when the atomic predicates satisfy:

\begin{enumerate}
\item The predicates fall into subsets, $S_1,...,S_M$, where $S_m\subseteq\{1...N\}$, $m_1\neq m_2 \Rightarrow S_{m_1} \cap S_{m_2} = \emptyset$ and $\cup_mS_m=\{1...N\}$, such that (a) $n_1, n_2 \in S_m$ and $n_1\neq n_2$ implies that the events $c_{n_1}$ and $c_{n_2}$ are ME given $\bX_t=\bx_t$, (b) for all tuples $(n_{m=1},...,n_{m=M})$ such that $\forall m.(n_m\in S_m)$, we have that the events $c_{n_1},...,c_{n_M}$ are mutually CI given $\bX_t=\bx_t$, and (c) for all $m$, $p(\neg \vee_{n \in S_m}c_n)=0$. 
\end{enumerate}

Moreover, we have that $\varphi$ may be compiled into a probabilistic circuit evaluable by the rules of \cref{tab:guidance_rules} at all $t\geq0$ and settings $\bX_t=\bx_t$ when, at terminal time $(t=0)$, the atomic predicates satisfy:

\begin{enumerate}
\setcounter{enumi}{1}
\item For each pair of predicates, $c_i$ and $c_j$ ($i\neq j$), we have that $\llbracket c_i\rrbracket\cap\llbracket c_j\rrbracket\in\{\emptyset,\llbracket c_i\rrbracket,\llbracket c_j\rrbracket \}$ (i.e., they are ME or nested).
\end{enumerate}

\end{proposition}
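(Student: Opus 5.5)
For case 1 the plan is to reduce to \cref{prop:categorical_simple}. Conditions (a) and (c) say that, given $\bX_t=\bx_t$, each group $S_m$ behaves almost surely like a categorical variable $Z_m$ with domain $S_m$: exactly one $c_n$ with $n\in S_m$ holds, up to a null event. Condition (b) says these $Z_m$ are conditionally independent. Every atom $c_n$ with $n\in S_m$ is then the event $(Z_m=n)$.

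Next I would take the ``disjunctive normal form over full assignments'' of $\varphi$. Let $\mathcal A$ be the set of tuples $(n_1,\dots,n_M)\in S_1\times\cdots\times S_M$ such that the truth assignment making exactly $c_{n_1},\dots,c_{n_M}$ true (and all other atoms false) satisfies $\varphi$. Off the null set, every terminal state realises exactly one such tuple. So $\varphi$ and $\varphi_{\mathcal A}$ define the same event, and $\llbracket\varphi\rrbracket\neq\llbracket\bot\rrbracket$ guarantees $\mathcal A\neq\emptyset$. Compiling $\varphi':=\varphi_{\mathcal A}$ then places us exactly in \cref{prop:categorical_simple}: its $\land$-nodes are CI and its $\lor$-nodes are ME, so \cref{prop:exact-meci-fragment} applies.

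The one subtlety is that equality of events holds only almost surely, not as sets. I would address this in one of two ways. The first is to note that the rules only need $p_t(\varphi\mid\bx)=p_t(\varphi'\mid\bx)$, which follows from a.s. equality. The second is to absorb the null set into the semantics, for example by redefining $\llbracket\cdot\rrbracket$ modulo null sets.

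For case 2 I would reuse \cref{prop:taxonomy_queries}. First, build a forest from the nested/ME relation. For each atom, its parent is the smallest atom strictly containing it. This is well defined, because any two atoms containing $c_i$ intersect and hence, by assumption 2, are nested and so totally ordered. Duplicate atoms with identical semantics are merged, and a root $c_r\equiv\top$ is added. Two children of the same parent cannot be nested (minimality), so they must be ME; this gives a taxonomy in the sense of \cref{prop:taxonomy_queries}.

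Second, the exclusive refinements $r_u$ partition $\mathcal X$, and each atom's truth value is constant on each $\llbracket r_u\rrbracket$: $c_v$ holds on $r_u$ iff $u\preceq v$. Hence every Boolean $\varphi$, not just every disjunction of atoms, is a union of cells, $\llbracket\varphi\rrbracket=\bigcup_{u\in U}\llbracket r_u\rrbracket$, with $U\neq\emptyset$ by nonemptiness. Set $\varphi'=\bigvee_{u\in U} r_u$. It uses only negations and OR-ME disjunctions, once we write $r_u$ via De Morgan as $\neg(\neg c_u\lor\bigvee_{v\in\mathrm{ch}(u)}c_v)$, which is itself an OR-ME disjunction since the children lie inside $c_u$.

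Finally, time-independence: ME and set-inclusion are properties of the sets $\llbracket c_i\rrbracket\subseteq\mathcal X$ at terminal time. They therefore imply $p_t(\psi\land\chi\mid\bx)=0$ for every $(t,\bx)$. No CI node is used, so the circuit is valid at all $t\geq 0$.

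I expect the main obstacle to be the bookkeeping. In case 1 this is the null-set caveat of (c) and the requirement in \cref{prop:exact-meci-fragment} that $0<p_t<1$: cells of zero posterior probability should be dropped or handled by continuity. In case 2 it is the proof that the induced order is a tree.
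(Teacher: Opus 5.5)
Your proposal is correct. For case 1 it follows the paper's argument. The paper writes $\varphi$ in full DNF over all $N$ atoms, discards the minterms in which some group $S_m$ does not have exactly one true atom, and drops the negative literals from the surviving minterms. The result is a categorical-CI query, which \cref{prop:categorical_simple} handles, and your set $\mathcal A$ is exactly the paper's surviving index set. Your null-set caveat is more careful than the paper, which claims the discarded minterms are empty sets even though (a) and (c) only make them null.

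For case 2 your route differs. The paper enlarges the alphabet to $\mathcal C^\dagger$ by adding $c_\top$ and the refinements $c'_n=c_n\wedge\neg(\bigvee_{m\in C_n}c_m)$. It then takes the full DNF over $\mathcal C^\dagger$ and argues that every satisfiable minterm collapses to a single enlarged atom. It invokes \cref{prop:taxonomy_queries} on the resulting disjunction of atoms and maps back to $\mathcal C$ via the substitution $K$.

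You instead build the tree explicitly and argue semantically: the cells $r_u$ partition $\mathcal X$ and every atom is constant on each cell, so every formula is a union of cells. This avoids the minterm bookkeeping and the back-translation. It also directly extends \cref{prop:taxonomy_queries}, which only covers disjunctions of atoms, to arbitrary formulas; the paper gets that extension only through the enlarged alphabet. The paper's route, in exchange, uses \cref{prop:taxonomy_queries} as a black box and parallels case 1.

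Your De Morgan form $\neg(\neg c_u\lor\bigvee_{v\in\mathrm{ch}(u)}c_v)$ also fixes something the paper glosses over. The conjunction $c_n\wedge\neg(\cdots)$ inside $c'_n$ is in general not CI, so no rule covers it. Yet the paper asserts that the compiled circuit contains only $\neg$- and OR-ME nodes, and your rewriting is what makes that claim true.

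Two small bookkeeping points for case 2:
\begin{enumerate}
\item Discard atoms with $\llbracket c_i\rrbracket=\emptyset$ (replace them by $\bot$) before defining parents. For an empty atom, the atoms containing it need not intersect, so ``smallest strict superset'' is ill-defined.
\item Write the root cell as $\neg\bigvee_{v\in\mathrm{ch}(r)}c_v$ rather than going through $\neg c_r\equiv\bot$. The latter has zero posterior and violates the $0<p_t<1$ hypothesis of \cref{prop:exact-meci-fragment}. For the same reason, drop any nonempty cells of zero posterior probability, as you already propose for case 1.
\end{enumerate}
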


\vspace{0.25cm}
\begin{proof}
We consider the two cases in the proposition separately.

{\em Case 1:} For case (1) of \cref{prop:completeness}, we consider an arbitrary predicate, $\varphi$.  We may write $\varphi$ in Full Disjunctive Normal Form (FDNF) as:
\begin{eqnarray}\label{eq:DNF}
\varphi \equiv \text{FDNF}(\varphi) :=\bigvee_{i\in R(\varphi)} (\bigwedge_{j\in\{1...N\}} y_{ij})
\end{eqnarray}
where $y_{ij}=\neg c_j$ if $\text{mod}(i,2^j)<2^{j-1}$ and $y_{ij}=c_j$ otherwise; and $R(\varphi)\subseteq\{0...(2^N-1)\}$ such that $n\in R(\varphi) \iff \varphi\wedge  (\bigwedge_j y_{nj})\not\equiv\bot$.  We note that \cref{eq:DNF} is a disjunction of subformulas, $Y_i=\bigwedge_j y_{ij}$, each of which is, by definition, mutually exclusive.  By the assumptions of case (1) of the proposition, if, for a given $i$, there exists an $m$ such that $(\sum_{j\in S_i}\mu(i,j))\neq1$, where $\mu(i,j)=[\text{mod}(i,2^j)\geq 2^{j-1}]$  and $[.]$ denotes the Iverson bracket, we must have $\llbracket Y_i\rrbracket = \llbracket \bot\rrbracket$.  Hence, we may consider $R'(\varphi)=R(\varphi)\backslash\{i|\exists m.(\sum_{n\in S_m}\mu(i,j))\neq 1\}$.  Moreover, since for any $i\in R'(\varphi)$, there is exactly one $j$ from any set $S_m$ such that $\mu(i,j)=1$, we may consider the predicate:
\begin{eqnarray}
\varphi' := \bigvee_{i\in R'(\varphi)} (\bigwedge_{m\in\{1...M\}} Y'_{im})
\end{eqnarray}
where $Y'_{im}=c_{j(i,m)}$ such that $j(i,m)$ is the unique $j\in S_m$ such that $\mu(i,j)=1$.  We thus have $\llbracket \varphi\rrbracket=\llbracket \text{FDNF}(\varphi)\rrbracket =\llbracket \varphi'\rrbracket$, where the first equality holds by definition, and the second holds since $\varphi'$ is formed by excluding only those terms from the disjunction $\text{FDNF}(\varphi)$ corresponding to the event $\emptyset$.  We observe that $\varphi'$ is a categorical-CI query, as defined in \cref{sec:comp_calc}, since conditions $(a)$ and $(c)$ in the proposition ensure that the properties $\{c_n|n\in S_m\}$ may be treated as values in the domain of a categorical variable, only one of which may be selected, and condition $(b)$ ensures that the categorical variables $1...M$ are mutually CI, since for any configuration 
$(c_{n_{m}}=1,...,c_{n_{M}}=1)$ where $\forall m.(n_m\in S_m)$, we have $p(c_{n_{m}}=1,...,c_{n_{M}}=1)=\prod_mp(c_{n_{m}}=1)$. Hence $PC(\varphi')$ may be evaluated exactly according to our framework, following \cref{prop:categorical_simple}. $\varphi$ may thus be compiled to the circuit $PC^*(\varphi)=PC(\varphi')$, which can be evaluated exactly as required.

{\em Case 2:} For case (2) of \cref{prop:completeness}, we have a Boolean formula $\varphi$ over atoms $\mathcal{C}=\{c_1,...,c_N\}$ satisfying the condition that any two distinct atoms are ME or nested.  We first consider an expansion of the original system to $\mathcal{C}^{\dagger}\supseteq\mathcal{C}$. To form $\mathcal{C}^{\dagger}$, we begin by adding an atom $c_{\top}$ which is semantically equivalent to $\top$, if no such atom exists in $\mathcal{C}$; this preserves the property in (2), since all other atoms must be nested inside $\top$.  Further, for every atom $c_{1...N}$, we add a new atom $c'_n$, corresponding to the event:
\begin{eqnarray}\label{eq:expansion}
\llbracket c'_n\rrbracket = \llbracket c_n\wedge \neg(\vee_{m\in C_n}c_m) \rrbracket 
\end{eqnarray}
where $C_n=\{m\leq N|\llbracket c_n\rrbracket\supset\llbracket c_m\rrbracket\wedge \neg\exists l\leq N.\llbracket c_n\rrbracket\supset\llbracket c_l\rrbracket\supset\llbracket c_m\rrbracket\}$, for all $n\leq N$ except those for which $\llbracket c'_n\rrbracket=\emptyset$.  The expanded system $\mathcal{C}^{\dagger}$ will retain the property in (2), since for all $c'_n$, we have that $c'_n$ is nested inside $c_n$, and for any other atom $c_m$ in $\mathcal{C}^{\dagger}$, if $c_n$ is ME or nested inside $c_m$, $c'_n$ will likewise be ME or nested inside $c_m$ respectively, and if $c_m$ is nested inside $c_n$, $c_m$ and $c'_n$ will be ME.  We observe that the model defined over $\mathcal{C}^{\dagger}$ is equivalent in expressive power to $\mathcal{C}$, since any formula $\varphi$ over $\mathcal{C}$ is also a formula over $\mathcal{C}^{\dagger}$, and any formula $\phi$ over $\mathcal{C}^{\dagger}$ may be mapped to a semantically equivalent formula $\varphi=K(\phi)$ over $\mathcal{C}$ by replacing all occurrences of $c_{\top}$ with $\top$, and all occurrences of $c'_n$ with $c_n\wedge \neg(\vee_{m\in C_n}c_m)$.  

We now consider compiling an arbitrary Boolean circuit $\varphi$ over $\mathcal{C}$ given the assumption in (2).  We first lift $\varphi$ to a formula $\phi$ over $\mathcal{C}^{\dagger}$; as noted, we may simply set $\phi=\varphi$.  We now consider $\text{FDNF}(\phi)$, defined analogously to Eq. \ref{eq:DNF} (where we note that we now have $N^{\dagger}=|\mathcal{C}^{\dagger}|$ atoms).  For each $i$ let $J'_i=\{j|\mu(i,j)=1\}$.  If $\exists j\in J_i$ s.t. $\forall j'\in J_i$, $\llbracket c^{\dagger}_j \wedge c^{\dagger}_{j'}\rrbracket=\llbracket c^{\dagger}_j\rrbracket$, and further that $\neg(\exists j'\not\in J_i.\llbracket c^{\dagger}_j\wedge c^{\dagger}_{j'}\rrbracket = \llbracket c^{\dagger}_{j'}\rrbracket)$, then we have $\llbracket Y_i\rrbracket = \llbracket c^{\dagger}_{j(i)}\rrbracket$, where we write $j(i)$ for the unique such $j$.  For all other $i$, we have $\llbracket Y_i\rrbracket = \llbracket \bot\rrbracket$, since either there must exist $j,j'\in J_i$, $j\neq j'$ such that $c^{\dagger}_j$ and $c^{\dagger}_{j'}$ are ME, or $\forall j' s.t. \llbracket c^{\dagger}_j\wedge c^{\dagger}_{j'}\rrbracket = \llbracket c^{\dagger}_{j'}\rrbracket.j'\not\in J_i$.  Hence, we may consider $R''(\phi)=R(\phi)\backslash\{i|\exists j\in J_i.(\forall j'\in J_i.\llbracket c^{\dagger}_j \wedge c^{\dagger}_{j'}\rrbracket=\llbracket c^{\dagger}_j)\rrbracket\wedge\neg(\exists j'\not\in J_i.\llbracket c^{\dagger}_j\wedge c^{\dagger}_{j'}\rrbracket = \llbracket c^{\dagger}\rrbracket_{j'})\}$.  We now consider the predicate:
\begin{eqnarray}
\phi' := \bigvee_{i\in R''(\phi)} c^{\dagger}_{j(i)}
\end{eqnarray}
By definition, $\llbracket \phi\rrbracket = \llbracket \phi'\rrbracket = \llbracket \varphi\rrbracket$. We also have that $PC(\phi')$ is a {\em taxonomy query} as defined above, and so is exactly evaluable in our framework by \cref{prop:taxonomy_queries}.  However, $\phi'$ is defined in the expanded system $\mathcal{C}^{\dagger}$, and so it may include atoms not in the original system.  We therefore map $\phi'$ to a predicate $\varphi'=K(\phi')$ over $\mathcal{C}$ using the transformation $K(.)$ defined above.  This will result in a formula  $\varphi'$ which is a disjunction of ME subformulas, where the latter are all either atoms or negations of disjunctions of ME atoms (since $C_n$ contains only the direct children of $c_n$, which by definition cannot be nested).  Like $\phi'$, $\varphi'$ is evaluable in our framework, since all nodes in $PC(\varphi')$ are either $\neg$ or $\vee$-MI nodes (we note that $\wedge$-CI and $\vee$-CI are not necessary).  Hence, $\varphi$ may be compiled to the circuit $PC^*(\varphi)=PC(\varphi')$.  If $c_i$ and $c_j$ are ME ($\llbracket c_i\wedge c_j\rrbracket=\emptyset$) at terminal time $t=0$, they will be ME at all $\bx$ and $t>0$.  Hence, it is sufficient that (2) is satisfied at terminal time for $\varphi$ to be compilable at all $t$ and $\bX_t=\bx_t$, as required.

\end{proof}

\subsection{Discrete Logical Guidance}\label{annex:discrete}

We briefly consider the here case in which, instead of score functions for the unconditional and atomic conditional models, we instead are supplied with transition functions for these generative processes for a set of discrete time-steps.  The underlying space $\mathcal{X}$ may be either continuous or discrete.  We denote the unconditional transition function as $\tau_t(\bx_{t-1}|\bx_t)$, and the transition function conditioned on predicate $c_i$ as $\tau_t(\bx_{t-1}|c_i,\bx_t)$.  Then, under the same structural assumptions as \cref{prop:exact-meci-fragment}, $\tau_t(\bx_{t-1}|\varphi,\bx_t)$ and $\widehat\pi(\varphi)$ may be calculated by mutual recursion, where the rules for calculating $\widehat\pi(\varphi)$ are as in \cref{tab:guidance_rules}, and the rules for calculating $\tau_t(\bx_{t-1}|\varphi,\bx_t)$ follow an analogous pattern to those for $s(\varphi,\bx)$ in \cref{tab:guidance_rules}:

\begin{eqnarray}
\tau_t(\bx_{t-1}|\bx_t), \tau_t(\bx_{t-1}|c_i,\bx_t) && \text{provided} \nonumber \\ 
\tau_t(\bx_{t-1}|\lnot\psi,\bx_t)
&=& 
 \quad \frac{\tau_t(\bx_{t-1}|\bx_t) - \widehat\pi(\psi)\tau_t(\bx_{t-1}|\psi,\bx_t)}{1-\widehat\pi(\psi)} ,\nonumber\\[0.5em]
\tau^{\mathrm{CI}}_t(\bx_{t-1}|\psi\land\chi,\bx_t)
&=&
\quad\frac{\tau_t(\bx_{t-1}|\psi,\bx_t)\tau_t(\bx_{t-1}|\chi,\bx_t)}{\tau_t(\bx_{t-1}|\bx_t)},\nonumber\\[0.5em]
\tau^{\mathrm{ME}}_t(\bx_{t-1}|\psi\lor\chi,\bx_t)
&=&
\quad \frac{
\widehat\pi(\psi)\tau_t(\bx_{t-1}|\psi,\bx_t)+\widehat\pi(\chi)\tau_t(\bx_{t-1}|\chi,\bx_t)
}{
\widehat\pi(\psi)+\widehat\pi(\chi)
}, \nonumber\\[0.5em] 
\tau^{\mathrm{CI}}_t(\bx_{t-1}|\psi\lor\chi,\bx_t)
&=& \frac{\widehat\pi(\psi)\tau_t(\bx_{t-1}|\psi,\bx_t)+\widehat\pi(\chi)\tau_t(\bx_{t-1}|\chi,\bx_t)-\widehat\pi(\psi)\widehat\pi(\chi)\tau^{\mathrm{CI}}_t(\bx_{t-1}|\psi\land\chi,\bx_t)}{\widehat\pi(\psi)+\widehat\pi(\chi)-\widehat\pi(\psi)\widehat\pi(\chi)}. \label{eq:rec_tau} \nonumber \\
\end{eqnarray}

We thus have the following discrete equivalent of \cref{prop:exact-meci-fragment}:

\begin{proposition}
\label{prop:exact-discrete-fragment}
Let $\varphi$ be a propositional formula over atoms $\{c_i\}$.  Suppose that $\varphi$ admits a circuit
representation whose internal nodes are  $\land$, $\lor$, and
$\lnot$, whose $\land$- and $\lor$-nodes satisfy conditions (1) and (2) from \cref{prop:exact-meci-fragment}, and that that $\widehat\pi(\psi)=p_t(\psi\mid \bx)$ for all subformulas
$\psi$ of $\varphi$.  Then the recursive rules above in \cref{eq:rec_tau} reproduce exactly the true posterior and conditional transition functions for $\varphi$:
\[
\widehat\pi(\varphi) = p_t(\varphi\mid \bx),
\qquad
\tau_t(\bx_{t-1}|\varphi,\bx_t) = p(\bx_{t-1}|\varphi,\bx_t).
\]
\end{proposition}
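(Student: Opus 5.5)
The argument mirrors the proof of \cref{prop:exact-meci-fragment}: fix $t$ and $\bx_t$, and proceed by structural induction over a topological ordering of the circuit. The posterior recursion for $\widehat\pi$ is literally the one in \cref{tab:guidance_rules}, so its correctness follows verbatim from the earlier proof. The new content is the transition recursion. The key identity is Bayes' rule for a terminal event $E=\llbracket\psi\rrbracket$ under the Markov reverse chain:
\[
p(\bx_{t-1}\mid\psi,\bx_t)=\frac{\mathbb P(\bX_0\in E\mid \bX_{t-1}=\bx_{t-1})\,p(\bx_{t-1}\mid\bx_t)}{\mathbb P(\bX_0\in E\mid\bX_t=\bx_t)} .
\]
Equivalently, the joint mass satisfies $p(\psi,\bx_{t-1}\mid\bx_t)=\widehat\pi(\psi)\,\tau_t(\bx_{t-1}\mid\psi,\bx_t)$. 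Every rule will be checked at the level of these unnormalised joints $q(\psi):=p(\psi,\bx_{t-1}\mid\bx_t)$, which are additive over disjoint events.

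The base case is immediate, since the atomic transitions are supplied. For negation, additivity gives $q(\lnot\psi)=\tau_t(\bx_{t-1}\mid\bx_t)-q(\psi)$. Dividing by $p_t(\lnot\psi\mid\bx)=1-\widehat\pi(\psi)$ yields the stated rule. For OR-ME, $q(\psi\lor\chi)=q(\psi)+q(\chi)$ because the intersection has probability zero. Dividing by $\widehat\pi(\psi)+\widehat\pi(\chi)$ gives the rule. For OR-CI, inclusion--exclusion gives $q(\psi\lor\chi)=q(\psi)+q(\chi)-q(\psi\land\chi)$. Then $q(\psi\land\chi)=\widehat\pi(\psi)\widehat\pi(\chi)\,p(\bx_{t-1}\mid\psi\land\chi,\bx_t)$, and the denominator follows from the CI posterior identity. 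So the OR-CI rule reduces to correctness of the conjunction transition.

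The conjunction case is the main obstacle. Via Bayes,
\[
p(\bx_{t-1}\mid\psi\land\chi,\bx_t)=\frac{p_{t-1}(\psi\land\chi\mid\bx_{t-1})}{p_t(\psi\land\chi\mid\bx_t)}\,\tau_t(\bx_{t-1}\mid\bx_t).
\]
The product rule $\tau(\cdot\mid\psi)\tau(\cdot\mid\chi)/\tau$ equals
\[
\frac{p_{t-1}(\psi\mid\bx_{t-1})\,p_{t-1}(\chi\mid\bx_{t-1})}{p_t(\psi\mid\bx_t)\,p_t(\chi\mid\bx_t)}\,\tau_t .
\]
These two expressions agree exactly when the CI factorisation holds both at $(t,\bx_t)$ and at $(t-1,\bx_{t-1})$. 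I will invoke the hypothesis of condition (1) of \cref{prop:exact-meci-fragment}, which is assumed for every time and state, including $t-1$. I will state explicitly that this is where "for all $t,\bx$" is used. The same requirement applies to the $\land$ appearing inside the OR-CI rule, which uses the CI condition of the disjunction node itself.

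Positivity $0<p<1$ is needed so that the divisions are well defined, as in the continuous case. With that in place, induction up the circuit closes the proof.
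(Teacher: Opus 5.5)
Your proposal is correct and follows essentially the paper's own proof. Both argue by structural induction from the Bayes/Markov identity $p(\bx_{t-1}\mid\psi,\bx_t)=p(\psi\mid\bx_{t-1},\bx_t)\,\tau_t(\bx_{t-1}\mid\bx_t)/\widehat\pi(\psi)$, whose numerators are exactly your unnormalised joints $q(\psi)$; negation and OR-ME then follow by additivity, and the conjunction and OR-CI cases use the CI factorisation at both $(t,\bx_t)$ and $(t-1,\bx_{t-1})$. The paper uses the CI factorisation at $(t-1,\bx_{t-1})$ only implicitly, when it factorises $p(\psi\land\chi\mid\bx_{t-1},\bx_t)$, so your explicit appeal to the for-all-$(t,\bx)$ hypothesis, and to positivity for the divisions, simply states the same argument more carefully.
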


\begin{proof}
We argue by structural induction on the circuit of $\varphi$.  Since the inductive steps for $\widehat\pi(.)$ are identical to those in \cref{prop:exact-meci-fragment}, we omit them from the proof below.

\emph{Base case (atoms).}
For an atomic predicate $c_i$ we have by assumption that $\tau_t(\bx_{t-1}|c_i,\bx_t)$ are provided.  Thus the claim holds.

\emph{Inductive step (negation).}
Suppose the claim holds for $\psi$.  Then by definition we have:
\begin{eqnarray}
p(\bx_{t-1} \mid \lnot\psi) &=& \frac{p(\lnot\psi\mid\bx_{t-1},\bx_t)\tau_t(\bx_{t-1}|\bx_t)}{1-\widehat\pi(\psi)} \nonumber \\
&=& \frac{(1-p(\psi\mid\bx_{t-1},\bx_t)))\tau_t(\bx_{t-1}|\bx_t)}{1-\widehat\pi(\psi)} \nonumber \\
&=& \frac{\tau_t(\bx_{t-1}|\bx_t) - \tau_t(\bx_{t-1}|\bx_t)p(\psi\mid\bx_{t-1},\bx_t))}{1-\widehat\pi(\psi)} \nonumber \\
&=& \frac{\tau_t(\bx_{t-1}|\bx_t) - \widehat\pi(\psi)\tau_t(\bx_{t-1}|\psi,\bx_t)}{1-\widehat\pi(\psi)} \nonumber
\end{eqnarray}
which matches the recursive rule for $\tau_t(\bx_{t-1}|\lnot\psi,\bx_t)$.

\emph{Inductive step (conjunction).}
Let $\varphi=\psi\land\chi$ and assume the inductive hypothesis for
$\psi$ and $\chi$.  By assumption,
$p_t(\psi\land\chi\mid \bx)=p_t(\psi\mid \bx)p_t(\chi\mid \bx)$; hence, we have:
\begin{eqnarray}
p(\bx_{t-1} \mid \psi\land\chi) &=& \frac{p(\psi\land\chi\mid \bx_{t-1},\bx_t)\tau_t(\bx_{t-1}|\bx_t)}{p(\psi\land\chi\mid \bx_t)} \nonumber \\
&=& \frac{p(\psi\mid \bx_{t-1},\bx_t)p(\chi\mid \bx_{t-1},\bx_t)\tau_t(\bx_{t-1}|\bx_t)}{\widehat\pi(\psi)\widehat\pi(\chi)} \nonumber \\
&=& \frac{\tau_t(\bx_{t-1}|\psi,\bx_t)\tau_t(\bx_{t-1}|\bx_t)}{\tau_t(\bx_{t-1}|\bx_t)} \nonumber
\end{eqnarray}

which is exactly the recursive rule for $\tau_t(\bx_{t-1}|\psi\wedge \chi,\bx_t)$.

\emph{Inductive step (disjunction).}
Let $\varphi=\psi\lor\chi$ and assume the inductive hypothesis for
$\psi$ and $\chi$.  By inclusion--exclusion,
\[
p_t(\psi\lor\chi\mid \bx)
= p_t(\psi\mid \bx)+p_t(\chi\mid \bx)-p_t(\psi\land\chi\mid \bx).
\]
There are two cases.

If $\psi$ and $\chi$ are mutually exclusive, we have:
\begin{eqnarray}
p(\bx_{t-1} \mid \psi\lor\chi) &=& \frac{p(\psi\lor\chi\mid \bx_{t-1},\bx_t)\tau_t(\bx_{t-1}|\bx_t)}{p(\psi\lor\chi\mid \bx_t)} \nonumber \\
&=& \frac{\left(p(\psi\mid \bx_{t-1},\bx_t)+p(\chi\mid \bx_{t-1},\bx_t)\right)\tau_t(\bx_{t-1}|\bx_t)}{\widehat\pi(\psi)+\widehat\pi(\chi)} \nonumber \\
&=& \frac{\widehat\pi(\psi)\tau_t(\bx_{t-1}|\psi,\bx_t)+\widehat\pi(\chi)\tau_t(\bx_{t-1}|\chi,\bx_t)}{\widehat\pi(\psi)+\widehat\pi(\chi)} \nonumber
\end{eqnarray}

which is exactly the recursive rule for $\tau^{\text{ME}}_t(\bx_{t-1}|\psi\vee \chi,\bx_t)$.

If $\psi$ and $\chi$ are conditionally independent, we have:
\begin{eqnarray}
p(\bx_{t-1} \mid \psi\lor\chi) &=& \frac{p(\psi\lor\chi\mid \bx_{t-1},\bx_t)\tau_t(\bx_{t-1}|\bx_t)}{p(\psi\lor\chi\mid \bx_t)} \nonumber \\
&=& \frac{\left(p(\psi\mid \bx_{t-1},\bx_t)+p(\chi\mid \bx_{t-1},\bx_t)-p(\psi\mid \bx_{t-1},\bx_t)p(\chi\mid \bx_{t-1},\bx_t)\right)\tau_t(\bx_{t-1}|\bx_t)}{\widehat\pi(\psi)+\widehat\pi(\chi)-\widehat\pi(\psi)\widehat\pi(\chi)} \nonumber \\
&=& \frac{\widehat\pi(\psi)\tau_t(\bx_{t-1}|\psi,\bx_t)+\widehat\pi(\chi)\tau_t(\bx_{t-1}|\chi,\bx_t)-\widehat\pi(\psi)\widehat\pi(\chi)\tau^{\mathrm{CI}}_t(\bx_{t-1}|\psi\land\chi,\bx_t)}{\widehat\pi(\psi)+\widehat\pi(\chi)-\widehat\pi(\psi)\widehat\pi(\chi)} \nonumber
\end{eqnarray}

which is exactly the recursive rule for $\tau^{\text{CI}}_t(\bx_{t-1}|\psi\vee \chi,\bx_t)$.

\smallskip
In all cases, the recursive rules reproduce
$\tau_t(\bx_{t-1}|\varphi,\bx_t)$ at the node $\varphi$ assuming they do so for its children.  By structural induction over the circuit of $\varphi$, the claim holds for all subformulas and in particular for $\varphi$ itself.
\end{proof}

Since the conditions on the circuit representation of $\varphi$ are the same in \cref{prop:exact-discrete-fragment} as \cref{prop:exact-meci-fragment}, an analogous completeness result can be shown for the discrete case to \cref{prop:completeness}.  This is shown by substituting the rules in \cref{eq:rec_tau} for those of \ref{tab:guidance_rules}, and the conditions in \cref{prop:exact-discrete-fragment} for those of \cref{prop:exact-meci-fragment} in the statement and proof of \ref{prop:completeness}.  This shows that the rules in \cref{eq:rec_tau} achieve complete compilability for all Boolean formulas under the same conditions (\cref{prop:completeness}) in the discrete case as the continuous case.

\newpage
\section{Implementation Details}
\subsection{Sampling Implementation}
We build our method upon standard diffusion backbones. For simpler benchmarks (e.g., CMNIST), we utilize the Denoising Diffusion Probabilistic Models (DDPM) \cite{ho:2020}. For large-scale experiments on ImageNet, we adopt the EDM2 framework by Karras et al.~\cite{karras:2024}.

The sampling procedure, summarized in \Cref{alg:logical_sampling}, follows the standard reverse diffusion process with a modification analogous to CFG. The logical guidance score is calculated in \Cref{alg:logical_score} recursively implementing the derived rules as stated in \Cref{tab:guidance_rules}. Note that while \Cref{alg:logical_sampling} and \ref{tab:guidance_rules} use $\epsilon$-prediction instead of scores, the composition rules remain unchanged, as $\epsilon$ and the score differ only by a time-dependent scalar factor \cite{karras2022elucidating}.

\begin{algorithm}[th]
\caption{Sampling Implementation with Logical Composition}
\label{alg:logical_sampling}
\begin{algorithmic}[1]
\STATE {\bfseries Input:} Query $\psi$, Diffusion model $\epsilon_\theta$, Classifier $p_\phi$, Timesteps $T$
\item[] 
\STATE $\mathbf{x}_T \sim \mathcal{N}(\mathbf{0}, \mathbf{I})$ 

\FOR{$t = T, \dots, 1$}
    \STATE $\hat{\epsilon}_\emptyset \leftarrow \epsilon_\theta(\mathbf{x}_t, \emptyset, t)$ \quad \textit{// Unconditional score}
    \item[] 
    \STATE \textit{// Guide with Logical Score}
    \STATE $\mathbf{g}_\psi, \_ \leftarrow \textsc{GetLogicalScore}(\psi, \mathbf{x}_t)$ \textit{$\quad$ // \Cref{alg:logical_score}}
    \STATE $\tilde{\epsilon}_t \leftarrow \hat{\epsilon}_\emptyset + \mathbf{g}_\psi$

    \item[] 
    \STATE $\mathbf{x}_{t-1} \leftarrow \textsc{SamplerStep}(\mathbf{x}_t, \tilde{\epsilon}_t, t)$
\ENDFOR

\STATE \textbf{return} $\mathbf{x}_0$
\end{algorithmic}
\end{algorithm}

\begin{algorithm}[ht!]
\caption{Recursive Logical Score Calculation}
\label{alg:logical_score}
\begin{algorithmic}[1]
\STATE {\bfseries Input:} Query $\psi$, Noisy state $\mathbf{x}_t$, Classifier $p_\phi$, Score model $\epsilon_\theta$, Guidance scale $w_\text{guidance}$, Repulsive guidance scale $w_\text{not}$
\item[]
\renewcommand{\algorithmicloop}{\textbf{function} \textsc{GetLogicalScore}($\psi$, $\mathbf{x}_t$)}
\renewcommand{\algorithmicendloop}{\textbf{end function}}

\STATE \textit{// Returns tuple of logical score and probability $(\mathbf{g}_t, \hat{\pi})$ as stated in \Cref{tab:guidance_rules}}

\LOOP

\item[]
    \STATE \textit{// 1. Atomic Condition (Base Case)}
    \IF{$\varphi$ is \textbf{Atom} $c$}
        \STATE $\pi_c \leftarrow p_\psi(c \mid \mathbf{x}_t)$
        \STATE $\mathbf{g}_{t,c} \leftarrow w_\text{guidance} \cdot \big(\epsilon_\theta(\mathbf{x}_t, c) - \hat{\epsilon}_\theta(\mathbf{x}_t, \emptyset)\big)$ 

        \STATE []
        \IF{doRepulsiveGuiding}
            \STATE $c_{\text{not}} \leftarrow \text{argmax}_{k \in \mathcal{C} \setminus \{c\}} \pi_k \quad$  \textit{// Find most probable condition $c_{\text{not}}$ s.t. $c_{\text{not}} \neq c$}
            \STATE $\mathbf{g}_{t,c} \leftarrow w_\text{guidance} \cdot \big(\epsilon_\theta(\mathbf{x}_t, c) - \hat{\epsilon}_\theta(\mathbf{x}_t, \emptyset)\big) - w_\text{not} \cdot\frac{\pi_{c_{\text{not}}}}{1 - \pi_{c_{\text{not}}}} \cdot \big(\epsilon_\theta(\mathbf{x}_t, c_{\text{not}}) - \hat{\epsilon}_\theta(\mathbf{x}_t, \emptyset)\big)$
        \ENDIF
        
        \STATE \textbf{return} $(\mathbf{g}_{t, c}, \pi_c)$
    \ENDIF
\item[]
    \STATE \textit{// 2. Recursive Composition}
    \STATE \textit{// Negation ($\lnot$)}
    \IF{$\varphi$ is $\neg \psi$}
        \STATE $(\mathbf{g}_{t, \psi}, \pi_\psi) \leftarrow \textsc{GetLogicalScore}(\psi, \mathbf{x}_t)$
        \STATE $\pi_\varphi \leftarrow 1 - \pi_\psi$
        \STATE $\mathbf{g}_{t, \varphi} \leftarrow -\frac{\pi_\psi}{1 - \pi_\psi} \cdot \mathbf{g}_{t, \psi}$
        \STATE \textbf{return} $(\mathbf{g}_{t, \varphi}, \pi_\varphi)$
    \ENDIF
\item[]
    \STATE \textit{// Conjunction ($\land$)}
    \IF{$\varphi$ is $\psi \land \chi$}
        \STATE $(\mathbf{g}_{t, \psi}, \pi_\psi) \leftarrow \textsc{GetLogicalScore}(\psi, \mathbf{x}_t)$
        \STATE $(\mathbf{g}_{t, \chi}, \pi_\chi) \leftarrow \textsc{GetLogicalScore}(\chi, \mathbf{x}_t)$
        \STATE $\pi_\varphi \leftarrow \pi_\psi \cdot \pi_\chi$
        \STATE $\mathbf{g}_{t, \varphi} \leftarrow \mathbf{g}_{t, \psi} + \mathbf{g}_{t, \chi}$
        \STATE \textbf{return} $(\mathbf{g}_{t, \varphi}, \pi_\varphi)$
    \ENDIF
\item[]
    \STATE \textit{// Disjunction ($\lor$)}
    \IF{$\varphi$ is $\psi \lor \chi$}
        \STATE $(\mathbf{g}_{t, \psi}, \pi_\psi) \leftarrow \textsc{GetLogicalScore}(\psi, \mathbf{x}_t)$
        \STATE $(\mathbf{g}_{t, \chi}, \pi_\chi) \leftarrow \textsc{GetLogicalScore}(\chi, \mathbf{x}_t)$
        
        \IF{Mutually Exclusive ($\lor_\text{ME}$)}
            \STATE $\pi_\varphi \gets \pi_\psi + \pi_\chi$
            \STATE $w_\psi \gets \frac{\pi_\psi}{\pi_\varphi + \pi_\chi}, \quad w_\chi \gets \frac{\pi_\chi}{\pi_\varphi + \pi_\chi}$
        \ELSIF{Conditional Independent ($\lor_\text{CI}$)}
            \STATE $\pi_\varphi \gets \pi_\psi + \pi_\chi - \pi_\psi \pi_\chi$
            \STATE $w_\psi \gets \frac{\pi_\psi(1 - \pi_\chi)}{\pi_\varphi + \pi_\chi - \pi_\varphi\pi_\chi}, \quad w_\chi \gets \frac{\pi_\chi(1 - \pi_\psi)}{\pi_\varphi + \pi_\chi - \pi_\varphi\pi_\chi}$
        \ENDIF
        
        \STATE $\mathbf{g}_{t, \varphi} \leftarrow w_\psi \cdot \mathbf{g}_{t, \psi} + w_\chi \cdot \mathbf{g}_{t, \chi}$
        \STATE \textbf{return} $(\mathbf{g}_{t, \varphi}, \pi_\varphi)$
    \ENDIF
\ENDLOOP
\end{algorithmic}
\end{algorithm}

\paragraph{Practical Implementation Details.} 
While \Cref{alg:logical_sampling} and \ref{alg:logical_score} outlines the general logic, our practical implementation introduces specific adjustments for numerical stability. All probability computations are performed in log-space to avoid underflow. Additionally, the probability-dependent task-specific guidance scales are clamped to a maximum value of 3. Finally, the global guidance weight is applied directly at the condition level, scaling each conditioning term individually before aggregation. The output of the unconditional model is cached to avoid repeated evaluation.

\begin{table}[th]
    \centering
    \caption{Hyperparameters for CMNIST, Shapes3D, and CelebA models.}
    \label{tab:hyperparameters}
    \vspace{0.2cm}
    \begin{tabular}{lccc}
        \toprule
        \textbf{Hyperparameter} & \textbf{CMNIST} & \textbf{Shapes3D} & \textbf{CelebA} \\
        \midrule
        \multicolumn{4}{l}{\textbf{Optimization}} \\
        Optimizer & AdamW & AdamW & AdamW \\
        Learning Rate & $1.0 \times 10^{-3}$ & $2.0 \times 10^{-4}$ & $2.0 \times 10^{-4}$ \\
        Weight Decay & $1.0 \times 10^{-5}$ & $0.0$ & -- \\
        LR Scheduler & Cosine w/ Warmup & Cosine w/ Warmup & Cosine w/ Warmup \\
        Warmup Steps & 5000 & 5000 & 2000 \\
        Num Training Steps & 50,000 & 100,000 & 500,000 \\
        \midrule
        \multicolumn{4}{l}{\textbf{Diffusion}} \\
        Noise Scheduler & DDPM & DDPM & DDPM \\
        Beta Schedule & Linear & Linear & Squared Cos Cap v2 \\
        Train Timesteps & 1000 & 1000 & 1000 \\
        Prediction Type & $\epsilon$ (epsilon) & $\epsilon$ (epsilon) & $\epsilon$ (epsilon) \\
        \midrule
        \multicolumn{4}{l}{\textbf{Model Architecture}} \\
        Model Type & U-Net & U-Net & SiT (Transformer) \\
        Input Size & $28 \times 28$ & $64 \times 64$ & $16 \times 16$ (Latent) \\
        Channels / Hidden Dim & [56, 112, 168] & [56, 112, 168, 224] & 384 \\
        Layers & 2 per block & 2 per block & 12 (Depth) \\
        Attention & Head Dim: 8 & Head Dim: 8 & Heads: 6 \\
        Dropout & 0.1 & 0.1 & -- \\
        Norm Groups & 8 & 8 & -- \\
        Activation & GELU & SiLU & -- \\
        Patch Size & -- & -- & 1 \\
        \midrule
        \multicolumn{4}{l}{\textbf{Sampling \& Inference}} \\
        Sampler & DDPM & DDPM & DDIM \\
        Timesteps & 50 & 50 & 1000 \\
        Guidance scale & 1.0 & 1.0 & 1.0 \\
        \bottomrule
    \end{tabular}
\end{table}

\subsection{Diffusion Architecture, Training, Sampling}
\Cref{tab:hyperparameters} details the model architectures and hyperparameters for the CMNIST, Shapes3D, and CelebA experiments. For ImageNet-512, we utilize the pre-trained EDM2-XXL model \cite{karras:2024}, adopting the parameters corresponding to the best reported FID ($g=2.05$, $\text{EMA}_m=0.075$, $\text{EMA}_g=0.155$). We note that our reproduced FID result (with no repulsive guiding) deviate slightly from the original publication, which we attribute to the unavailability of the exact sampling seeds.

\subsection{Classifier from diffusion model}
\label{annex:classifier}
To approximate the probabilities $p(c_i|x)$ we can estimate~\citep{li2023your}  the probabilities or their ratio using the pre-trained diffusion model $\epsilon_t(x,c_i)$, in particular, we can estimate the probability of the class $c_i$ given the current sample $x_t$ as   
\begin{align*}
p(c_i|x_t)  
&= \frac{1}{\scriptstyle 1 + \sum_{j \ne i} \exp  \mathbb E_{\epsilon \sim \mathcal N(0,1)} [\| \epsilon - \epsilon_t(x_{t-1},c_i)  \|^2 - \| \epsilon - \epsilon_t(x_{t-1},c_j)  \|^2] } 
\end{align*}
where the noisy samples are generated either from the current samples $x_{t}$ and the noise $\epsilon \sim \mathcal N(0,1)$, 
\begin{align*}
\bx_{t-1} &= \sqrt{\alpha_t } \bx_{t} + \sqrt{1 - \alpha_t } \epsilon 
\end{align*}
We can also estimate the ratio of the probability and its complement 
\begin{align*}
\gamma(c_i|x_{t}) &= \frac{p(c_i|x_t)}{1-p(c_i|x_t)} 
= \frac{1}{\scriptstyle \sum_{j \ne i} \exp  \mathbb E_{\epsilon \sim \mathcal N(0,1)} [\| \epsilon - \epsilon_t(x_{t-1},c_i)  \|^2 - \| \epsilon - \epsilon_t(x_{t-1},c_j)  \|^2] } 
\end{align*}


\subsection{Unconditional Model from score functions}
\label{annex:unconditional}
If the pre-trained diffusion model is not trained as unconditional model, then we can estimate the score of the uncondtional model from the conditional score. 
Let's suppose that we only have two classes $A,B$, we then marginalize and derive the unconditional distribution 
$$
p(\bx) = p(\bx|A)p(A) + p(\bx|B)p(B)
$$
We can then estimate the unconditional probability score function as
\begin{align*}
\nabla \ln p(\bx) &=  \frac{\nabla p(\bx|A)p(A) + \nabla p(\bx|B)p(B)}{p(\bx|A)p(A) + p(\bx|B)p(B)} \\
&= \frac{p(\bx|A)p(A)}{p(\bx|A)p(A) + p(\bx|B)p(B)}\nabla \ln p(\bx|A) 
+ \frac{p(\bx|B)p(B) }{p(\bx|A)p(A) + p(\bx|B)p(B)}\nabla \ln p(\bx|B) \\
&= \frac{p(A|\bx)p(\bx)}{p(A|\bx)p(\bx) + p(B|\bx)p(\bx)}\nabla \ln p(\bx|A) 
+ \frac{p(\bx|B) p(B)}{p(\bx|A)p(A) + p(\bx|B)p(B)}\nabla \ln p(\bx|B).
\end{align*}
After simplification we have that
\begin{align*}
\nabla \ln p(\bx) &= \frac{p(A|\bx)}{p(A|\bx) + p(B|\bx)}\nabla \ln p(\bx|A) 
+ \frac{p(B|\bx)}{p(A|\bx) + p(B|\bx)} \nabla \ln p(\bx|B),
\end{align*}
If we then defined $\{c_i\} = \{ A,B \}$, we can write the score function in a compact form as 
\begin{align*}
\nabla \ln p(\bx) &= \text{soft-max}_{p(c_i|\bx)}   \nabla \ln p(\bx|c_i)
\end{align*}

If we have a finite number of classes $\{c_i\}$, we can write the unconditional probability as
\begin{align*}
p(\bx) = \sum_i p(\bx|c_i)p(c_i),
\end{align*}
and following the previous derivation, we have that 
\begin{empheq}[box=\widefbox]{align}\label{eq:unconditional}  
\nabla \ln p(\bx) &= \text{soft-max}_{p(c_i|\bx)} \nabla \ln p(\bx|c_i)
\end{empheq}

\newpage
\section{Experiment details}
\subsection{Task Generation} \label{app:generation}
Queries are generated via a recursive algorithm where the complexity parameter, $N$ expressions, defines the total number of binary operators. At each step, an operator is chosen uniformly at random from {$\land$, $\lor_\text{CI}$,$\lor_\text{ME}$}, determining how attributes are distributed to child nodes:
\begin{itemize}
    \item AND / OR-CI: The available attribute groups are partitioned into two disjoint sets. The left and right terms are restricted to mutually exclusive sets of attributes (e.,g., the left queries colors, while the right queries digits).
    \item OR-ME: Both left and right expressions are constrained to share the same attribute group (e.\,g., both query Color).
\end{itemize}
Logical negation is applied probabilistically (p=0.05) to any node.
For CMNIST an example of a query with $N=4$ is: $\Big(\big(\lnot6 \lor_\text{ME} (5 \lor_\text{ME} 3)\big) \lor_\text{CI} (\text{red} \lor_\text{ME}  \text{yellow})\Big)$.

\subsection{Metrics} \label{app:metrics}
\paragraph{Conformity score.}
We quantify generation accuracy using the Conformity Score \cite{gaudi:2025}, defined as the percentage of samples that comply with the logical query. Sample attributes are inferred using a ResNet-18 \cite{he:2016} classifier (per-attribute accuracies reported in \Cref{tab:judge_accuracies}) and matched against the query logic. We report the average conformity score calculated over 10,000 samples, generated from 100 unique queries per task type with 100 samples each.

\begin{table}[h]
    \centering
    \vspace{2mm}
    \caption{Feature-wise accuracy of the classifiers used to calculate Conformity Score and Joint Entropy.}
    \begin{tabular}{llc}
        \toprule
        \textbf{Dataset} & \textbf{Feature} & \textbf{Accuracy (\%)} \\
        \midrule
        \multirow{2}{*}{CMNIST} & Digit & 98.41 \\
                                & Color & 100 \\
        \midrule
        \multirow{6}{*}{Shapes3D} & Floor Hue & 100 \\
                                  & Wall Hue & 100 \\
                                  & Object Hue & 100 \\
                                  & Scale & 100 \\
                                  & Shape & 100 \\
                                  & Orientation & 100 \\
        \midrule
        \multirow{2}{*}{CelebA} & Blond & 98.27 \\
                                & Male & 94.75 \\
        \bottomrule
    \end{tabular}
    \label{tab:judge_accuracies}
\end{table}

\paragraph{Joint entropy (Diversity).}
A key requirement for controlled generation is that the model covers the full scope of valid solutions without mode collapse. For example, given the CMNIST query ($9 \lor 3$), the model should generate nines and threes across all available colors, rather than collapsing to a narrow subset.

To quantify this, we calculate the Mean Batch Joint Entropy. Evaluation is performed per batch ($n=100$ images) where the query is fixed. For every image $i$, we extract the attribute vector using the oracle classifier. We treat these vectors as outcomes of a joint random variable and calculate the Shannon entropy based on the counts of unique attribute combinations within the batch.

Let $C$ be the set of unique attribute combinations observed in the batch, and $p(c)$ be the frequency of a specific combination $c \in C$. The joint entropy is defined as:
\begin{equation*}
    H(Batch) = -\sum_{c \in C} p(c) \log_2 p(c).
\end{equation*}

Since the number of valid solutions varies by query (e.\,g., restrictive queries naturally permit lower diversity), the theoretical optimal entropy is task-dependent. We therefore include the theoretical optimal entropy in Figure \ref{fig:gw_acc_entropy_tradeoff} as a reference baseline to contextualize the model's performance.

\paragraph{FID.}
To assess image fidelity and distributional alignment, we calculate the Fréchet Inception Distance (FID). For CelebA, we compute the FID independently for each compositional task. We generate 5,000 samples per task (comprising 100 images for each of 50 queries) and utilize the clean-fid implementation~\cite{seitzer:2020}. For ImageNet, we adopt the EDM2 framework~\cite{karras:2024}, calculating FID over 50,000 generated samples.

\newpage
\section{Ligand-Protein Multi-target structure-based drug design}
Following  \citet{zhou2024reprogramming,skreta2025feynman}, we first align in $3$D the target protein pockets of the two targets. We then apply the logic guidance of the $SO(3)$-equivariant graph neural network to generate the ligand over $1000$ time steps or denoisy steps. 
For each experiments we generated $32$ samples of size $23$. 
At the end of the generation, we use AutoDock Vina to compute the main metrics. We use $\beta=2.0$, the inverse temperature during the generation. We also generate the single target baseline using the TargetDiff~\citep{guan20233d}. 
In the experiments we used GRM5-RRM1 (with UniProt IDs: P41594 and P23921) protein pairs. 
To evaluate the performance of the generated ligand we considered QED and SA scores. The quality indicator~\cite{lee2025genmol} checks that the drug-likeness (QED) $\ge 0.6$ and the synthetic accessibility (SA)  $\le 4.0$.

\subsection{Interaction profile of representative designed ligands}

To illustrate the molecular basis of selectivity induced by logical compositions, we analyzed the $2$D interaction profiles of representative ligands generated by \method{} (Figure~\ref{fig:ligand_interactions}). The ligands were chosen as follows: (i) For $A \land B$, we selected a ligand where Vina's binding score was the lowest for both targets; (ii) For $A \land \lnot B$, we selected a ligand with the lowest score for target A and the highest score for target B. The reference ligand establishes hydrogen bonds with key residues in both binding sites: Ser590A and Ser488A in GRM5, and Val805A and Tyr315A in RRM1. In the representative example of a ligand generated under the $A \land B$ constraint, this dual interaction pattern is preserved, with the ligand forming contacts with His372A, Leu592A, and Pro133A in GRM5, while simultaneously engaging Thr804A, Glu319A, and Val805A in RRM1. In contrast, the representative ligand generated with the $A \land \lnot B$ constraint exhibits a markedly distinct interaction profile: it maintains multiple hydrophobic and hydrogen bond contacts with GRM5 (Phe602A, Val378A, Thr499A, Gly377A, Ser590A) while showing no detectable interactions with RRM1. These representative examples illustrate that the \method{} framework not only satisfies logical constraints in terms of docking scores but can also generate ligands with chemically distinct interaction profiles, supporting the applicability of logical composition for selective drug design.

\begin{figure}
    \centering
    \includegraphics[width=0.65\columnwidth]{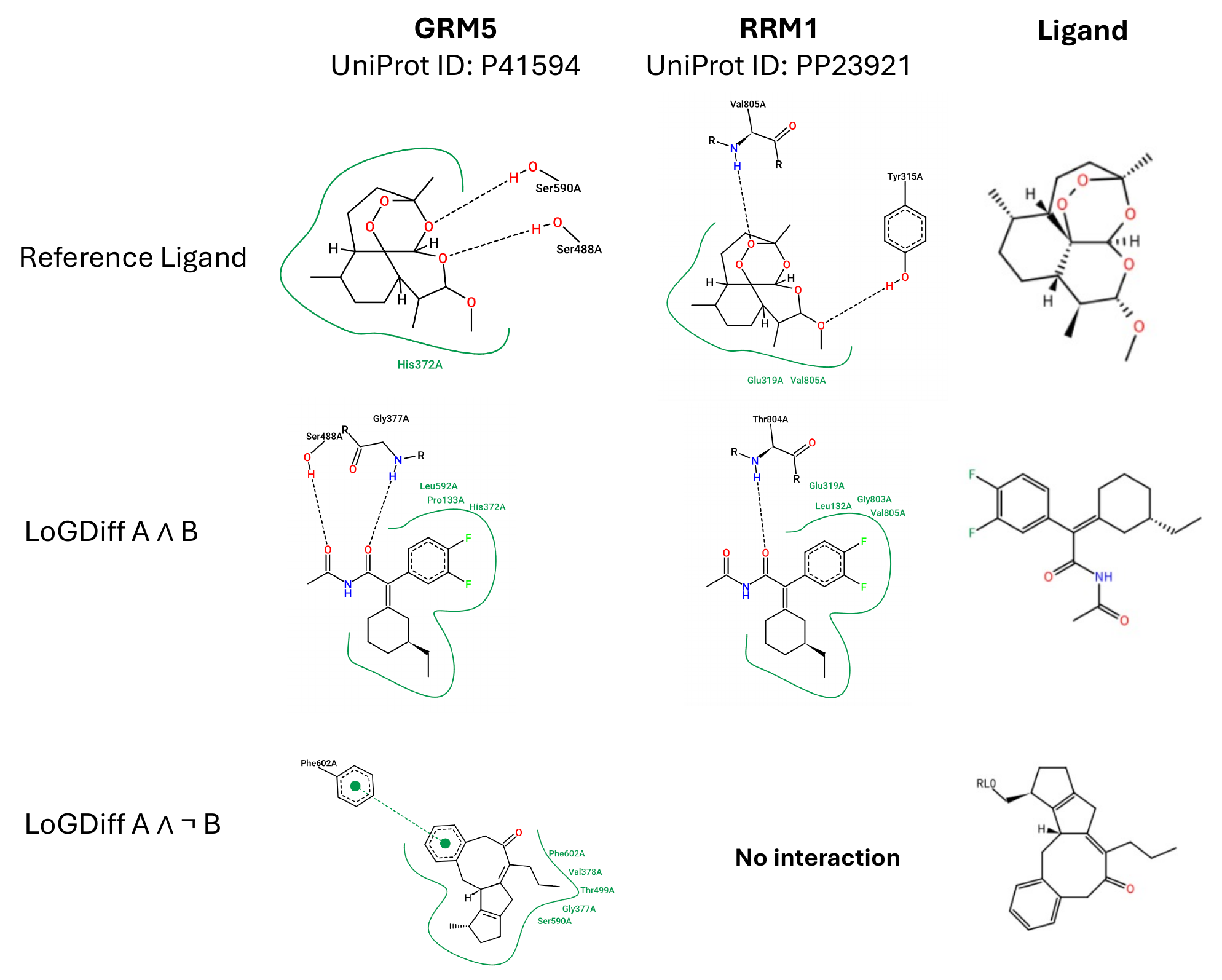}
    \caption{\textbf{2D interaction analysis of representative ligands generated for the GRM5-RRM1 target pair.} (Top) Reference ligand showing hydrogen bonds with key residues in both targets. (Middle) Representative ligand generated under the conjunction constraint ($A \land B$), displaying hydrogen bonds and hydrophobic contacts with residues from both GRM5 and RRM1. (Bottom) Representative ligand generated under the selective constraint ($A \land \lnot B$), maintaining multiple interactions with GRM5 while showing no detectable interactions with RRM1. Green contours indicate hydrophobic regions; black dashed lines represent hydrogen bonds, whereas green dashed lines represenr $\pi$-$\pi$ interactions. Images were rendered with ProteinPlus.}
    \label{fig:ligand_interactions}
\end{figure}

\newpage
\section{Additional results} \label{app:additional_results}
\subsection{Comparison with additional methods} \label{sec:appendix_baselines}

We compare our approach against existing frameworks that define logical composition for diffusion models using linear combinations of scores \cite{liu:2022, skreta:2025}. 
This includes methods that perform composition via a single model trained with conditions and on the whole data distribution (i.\,e. unconditional model) and conditional models in a CFG-style, as well as methods designed for the composition of multiple independent pretrained models. 
Due to our experimental design, we apply also the latter methods to a single model conditioned on varying conditions rather than training multiple separate models.

We note a structural limitation in existing literature: many current methods lack a formal definition for the recursive construction of arbitrarily complex logical formulas for conjunctions, disjunctions and negations. Therefore, we restrict our experiment to single-operand expressions.

We report the comparative results in Table \ref{tab:baselines}. To establish a lower bound, we include an unconditional baseline, which represents the probability of satisfying logical constraints by chance.
For conjunctions, we observe that simply averaging conditional scores or re-weighting individual scores to achieve equal density \cite{skreta:2025} performs considerably worse than the method proposed by us and \citep{liu:2022} using posterior scores. In the case of disjunctions, the conformity scores improve notably when utilizing probability-dependent weights. Both our method and the re-weighting scheme proposed by \citet{skreta:2025} achieve comparable results.
For negation, the constant baseline achieves slightly higher conformity scores than \method{}, though this comes at a the cost to diversity and perceptual quality (see Section \ref{sec:img_experiments}). The EBM-style negation $p(x)/p_A(x)^\gamma$ \cite{du:2023}, requires exhaustive optimization of the hyperparameter $\gamma$. Even after tuning the guidance weight to maximize CS while maintaining sample validity, performance remains only marginally better than the unconditional baseline.

\begin{table*}[]
    \centering
    \caption{\textbf{Logical composition conformity across different frameworks on CMNIST}. We compare our method against logical composition frameworks for diffusion models using linear combinations of scores \cite{liu:2022, du:2023, skreta:2025} using Conformity Scores (CS).}
\begin{tabular}{llllr}
\toprule
Task & use $\epsilon_\emptyset$ & Probabilites & Method & CS  \\
\midrule
AND & Yes & - & $\epsilon_\emptyset$ only  & 0.91  \\
 \cmidrule{2-5}
 & Yes & classifier & $\epsilon_\emptyset + (\epsilon_A - \epsilon_\emptyset) + (\epsilon_B - \epsilon_\emptyset)$ (\method{}, \cite{liu:2022}) & \textbf{80.39}  \\
 & No & - & $\frac{1}{2} \epsilon_A + \frac{1}{2} \epsilon_B$ \cite{liu:2022} & 26.61  \\
 & No & It\^o estimator & equal prob. update \cite{skreta:2025} & 26.20  \\
\hline
 NOT&  &  & $\epsilon_\emptyset$ only & 90.19  \\
 \cmidrule{2-5}
 & Yes & - &  $\epsilon_\emptyset - (\epsilon_A - \epsilon_\emptyset)$ \cite{liu:2022} & \textbf{99.41}\\
 & Yes & classifier & $\epsilon_\emptyset-\frac{p_A}{1-p_A} \, (\epsilon_A - \epsilon_\emptyset)$ (LoGDiff)& 97.38 \\
 & Yes & classifier &  $ \epsilon_\emptyset - w \cdot \epsilon_A$ (with $w=0.07$) \cite{du:2023}& 92.60 \\
\hline
OR-ME  &  &  & $\epsilon_\emptyset$ only & 19.22  \\
 \cmidrule{2-5}
 & Yes & - &  $\epsilon_\emptyset + \frac{1}{2} (\epsilon_A - \epsilon_\emptyset) + \frac{1}{2} (\epsilon_B - \epsilon_\emptyset)$ & 76.95  \\
 & Yes & classifier & $\epsilon_\emptyset +\frac{p_A}{p_A + p_B} (\epsilon_A - \epsilon_\emptyset) + \frac{p_B}{p_A + p_B} (\epsilon_B - \epsilon_\emptyset)$  (\method{})  & \textbf{98.01}  \\
 & No & classifier & $\frac{p_A}{p_A + p_B} \epsilon_A + \frac{p_B}{p_A + p_B} \epsilon_B$  \cite{skreta:2025} & 97.98 \\
 & No & It\^o estimator & $\frac{p_A}{p_A + p_B} \epsilon_A + \frac{p_B}{p_A + p_B} \epsilon_B$ \cite{skreta:2025} & 96.84 \\
\hline
OR-CI &  &  & $\epsilon_\emptyset$ only & 19.09 \\
 \cmidrule{2-5}
 & Yes & - &  $\epsilon_\emptyset + \frac{1}{2} (\epsilon_A - \epsilon_\emptyset) + \frac{1}{2} (\epsilon_B - \epsilon_\emptyset)$ & 76.72 \\
 & Yes & classifier & $\epsilon_\emptyset +\frac{p_A(1-p_B)(\epsilon_A - \epsilon_\emptyset) + p_B(1-p_A) (\epsilon_B - \epsilon_\emptyset)}{p_A + p_B -p_Ap_B} $ (\method{}) & 97.25 \\
 & No & classifier & $\frac{p_A}{p_A + p_B} \epsilon_A + \frac{p_B}{p_A + p_B} \epsilon_B$ \cite{skreta:2025} & 97.34 \\
 & No & It\^o estimator & $\frac{p_A}{p_A + p_B} \epsilon_A + \frac{p_B}{p_A + p_B} \epsilon_B$ \cite{skreta:2025} & \textbf{97.54} \\
\bottomrule
\end{tabular}

    \label{tab:baselines}
\end{table*}

\subsection{Qualitative effect of guidance weight}
Figures \ref{fig:gw_cmnist}, \ref{fig:gw_shapes} and \ref{fig:gw_cmnist_not} qualitatively illustrates the trade-off between conformity and diversity as the guidance weight $w$ increases which is quantitatively reported in Figure \ref{fig:gw_acc_entropy_tradeoff}. While higher guidance scales improve conformity scores, they visibly reduce sample diversity. Notably, the constant baseline struggles with disjunctions of conditional independent attributes, often collapsing them into conjunctions (Fig. \ref{fig:gw_cmnist}). For mutually exclusive attributes, the baseline attempts to blend features. This failure mode is particularly evident when the mixture resembles a valid but incorrect condition (e.\,g., the conditions red or yellow produce orange, as shown in Figure \ref{fig:gw_shapes}). The reduced diversity is also visible for the negation, when the constant baseline is used (see Figure \ref{fig:gw_cmnist_not}). 

\begin{figure}[h]
    \centering
    \includegraphics[width=0.8\linewidth]{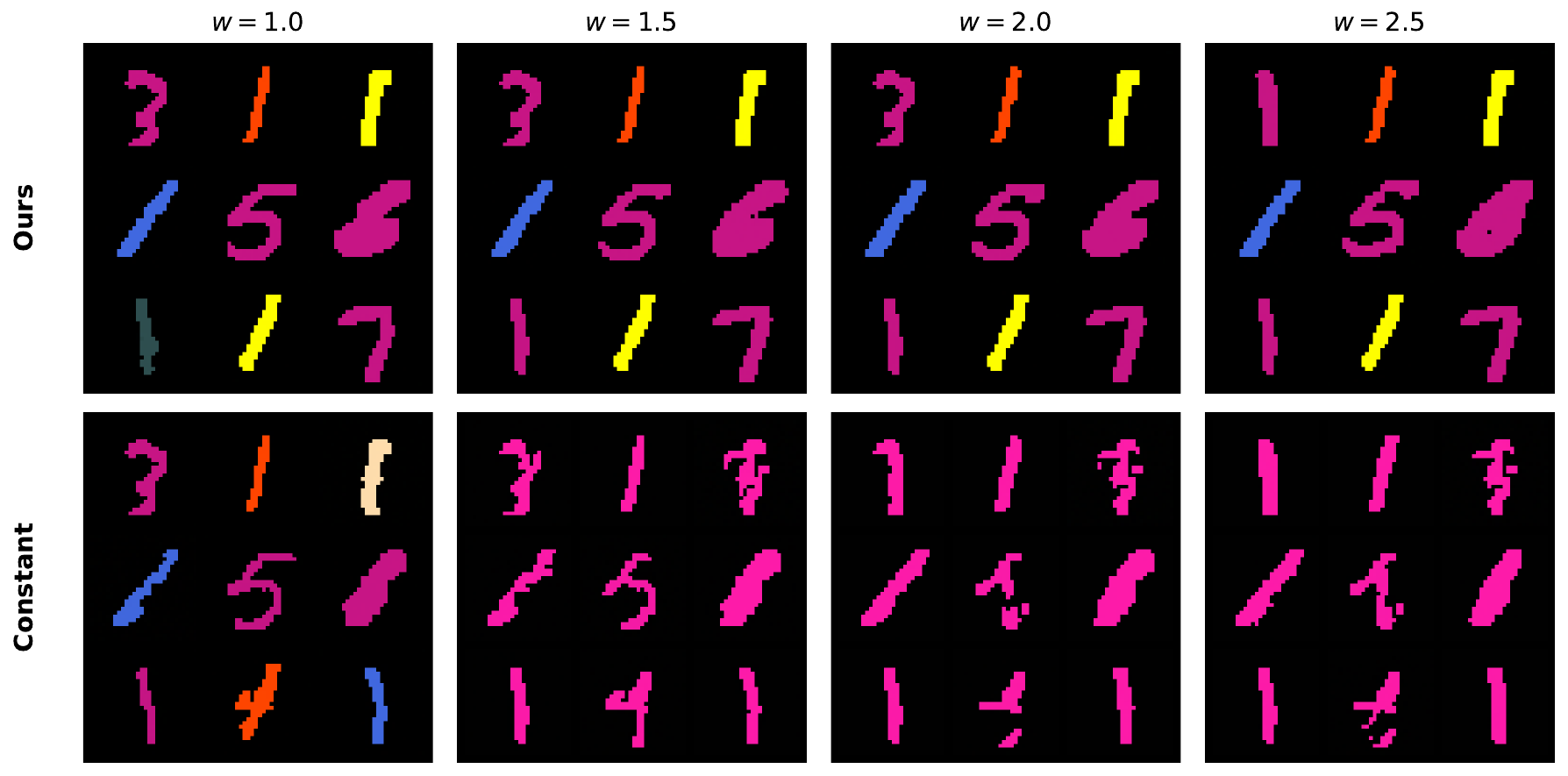}
    \caption{Influence of guidance weight $w$ on conditional independent disjunction (query: 1 $\lor_\text{CI}$ pink). Higher guidance weights result in visibly lower diversity for the constant baseline as it converges to conjunction behavior.}
    \label{fig:gw_cmnist}
\end{figure}

\begin{figure}[h]
    \centering
    \includegraphics[width=0.8\linewidth]{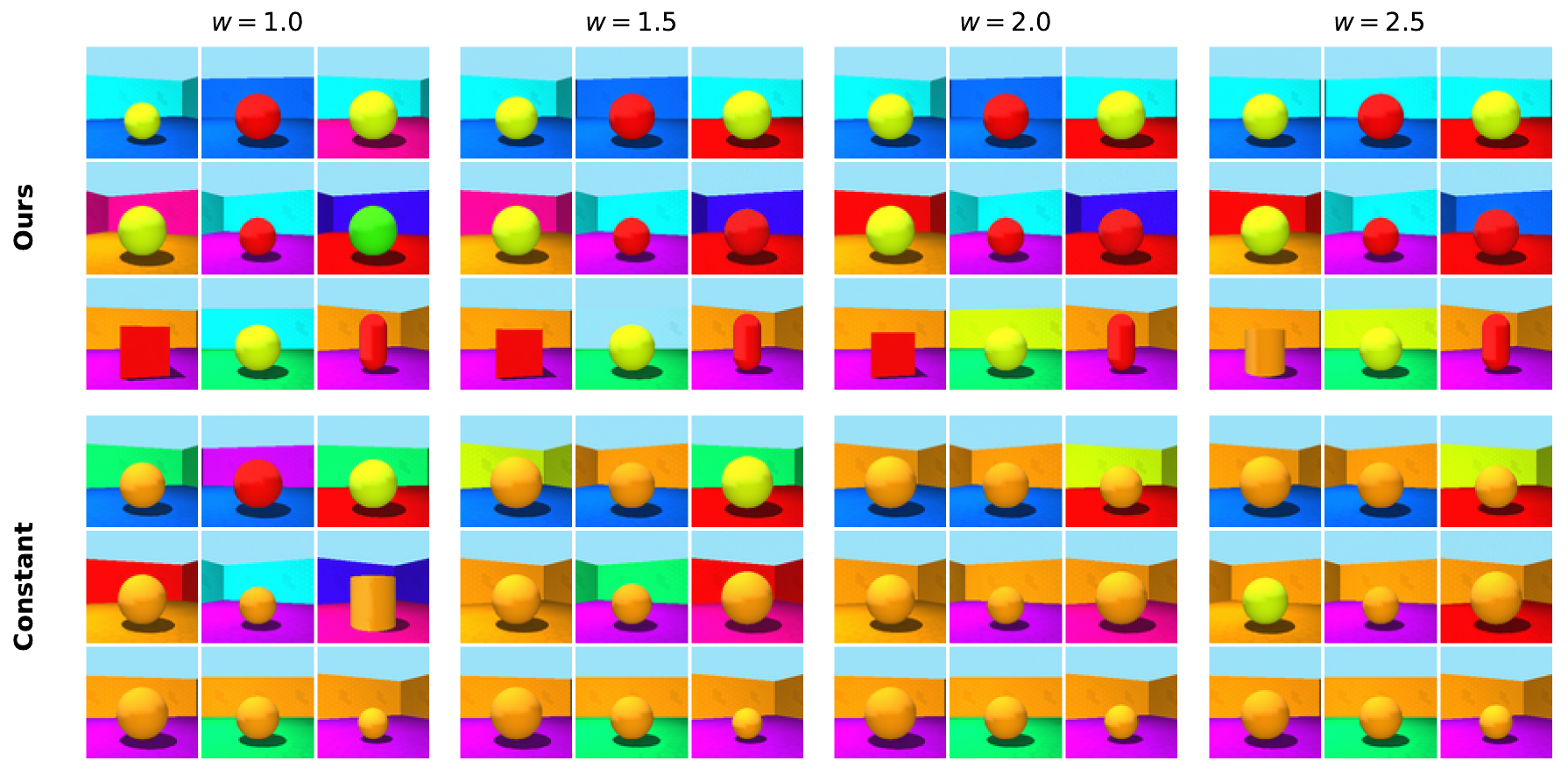}
    \caption{Influence of guidance weight $w$ on mutual exclusive disjunction (query: red object $\lor_\text{ME}$ yellow object). Higher guidance weights result in visibly lower diversity for the constant baseline as it mixes the attributes.}
    \label{fig:gw_shapes}
\end{figure}

\begin{figure}[h]
    \centering
    \includegraphics[width=0.8\linewidth]{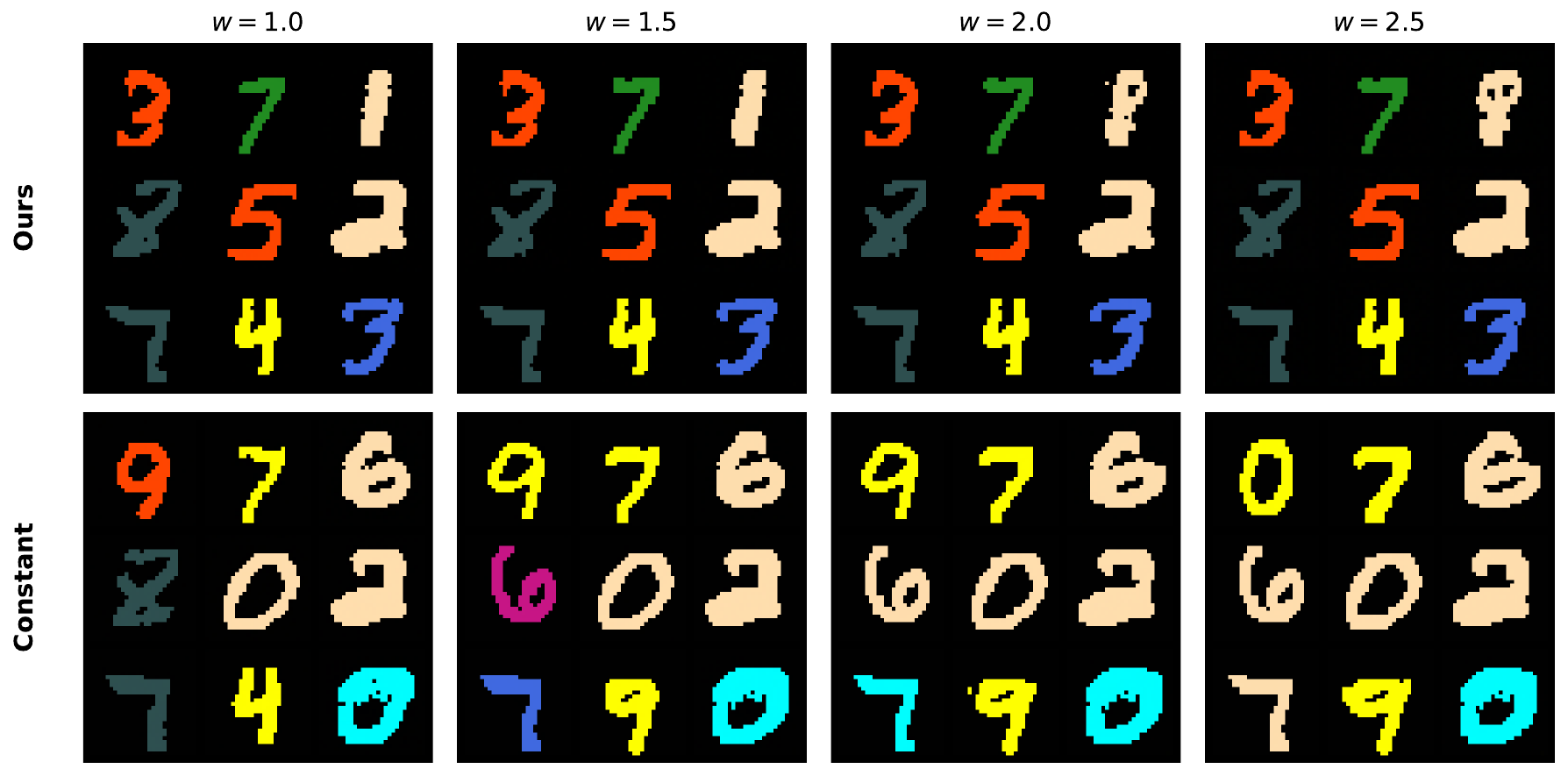}
    \caption{Influence of guidance weight $w$ on negation (query: $\lnot$ 1). Higher guidance scales for the negation reduce the diversity of the constant baseline, while increasing the conformity score of our approach without harming diversity.}
    \label{fig:gw_cmnist_not}
\end{figure}
\subsection{Qualitative results of logical compositions on images}
We present qualitative results across various datasets and logical composition tasks, comparing our approach against the constant baseline. Each figure displays a batch of samples for a given query, generated with a guidance weight of $w=1$. Consistent with the quantitative results in Table \ref{tab:cs_cmnist_shapes3d}, our method performs identically to the baseline for conjunctions. While the constant baseline achieves higher conformity scores on negation tasks, this comes at the cost of reduced diversity. Furthermore, the baseline struggles with disjunctions and complex logical statements.

\begin{figure}[ht]
    \centering
    \begin{minipage}{0.9\linewidth}
    \setkeys{Gin}{width=\linewidth} 

    \begin{minipage}{0.99\linewidth}
        \scriptsize \centering \textbf{\large \textcolor{cmnistLightBlue}{light blue} $\land$ 3}
    \end{minipage}%

    \vspace{1pt}
    \begin{minipage}{0.49\linewidth}
        \scriptsize \centering \textbf{Constant}
    \end{minipage}%
    \hfill
    \begin{minipage}{0.49\linewidth}
        \scriptsize \centering \textbf{Ours}
    \end{minipage}
    

    \begin{subfigure}{0.49\linewidth}
        \includegraphics[trim={0 0 0 1.5cm}, clip, width=\linewidth]{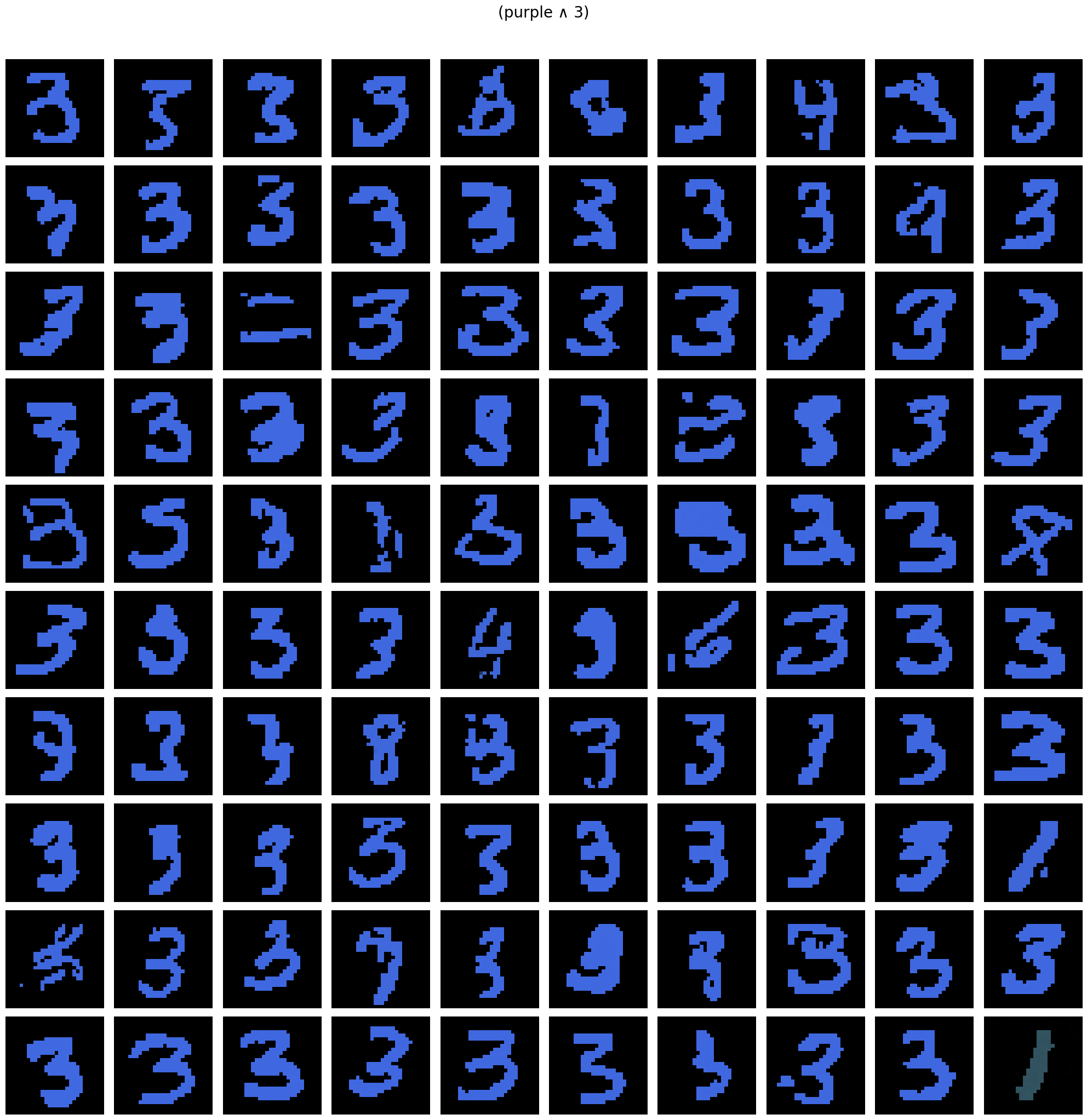}
    \end{subfigure}%
    \hfill
    \begin{subfigure}{0.49\linewidth}
        \includegraphics[trim={0 0 0 1.5cm}, clip, width=\linewidth]{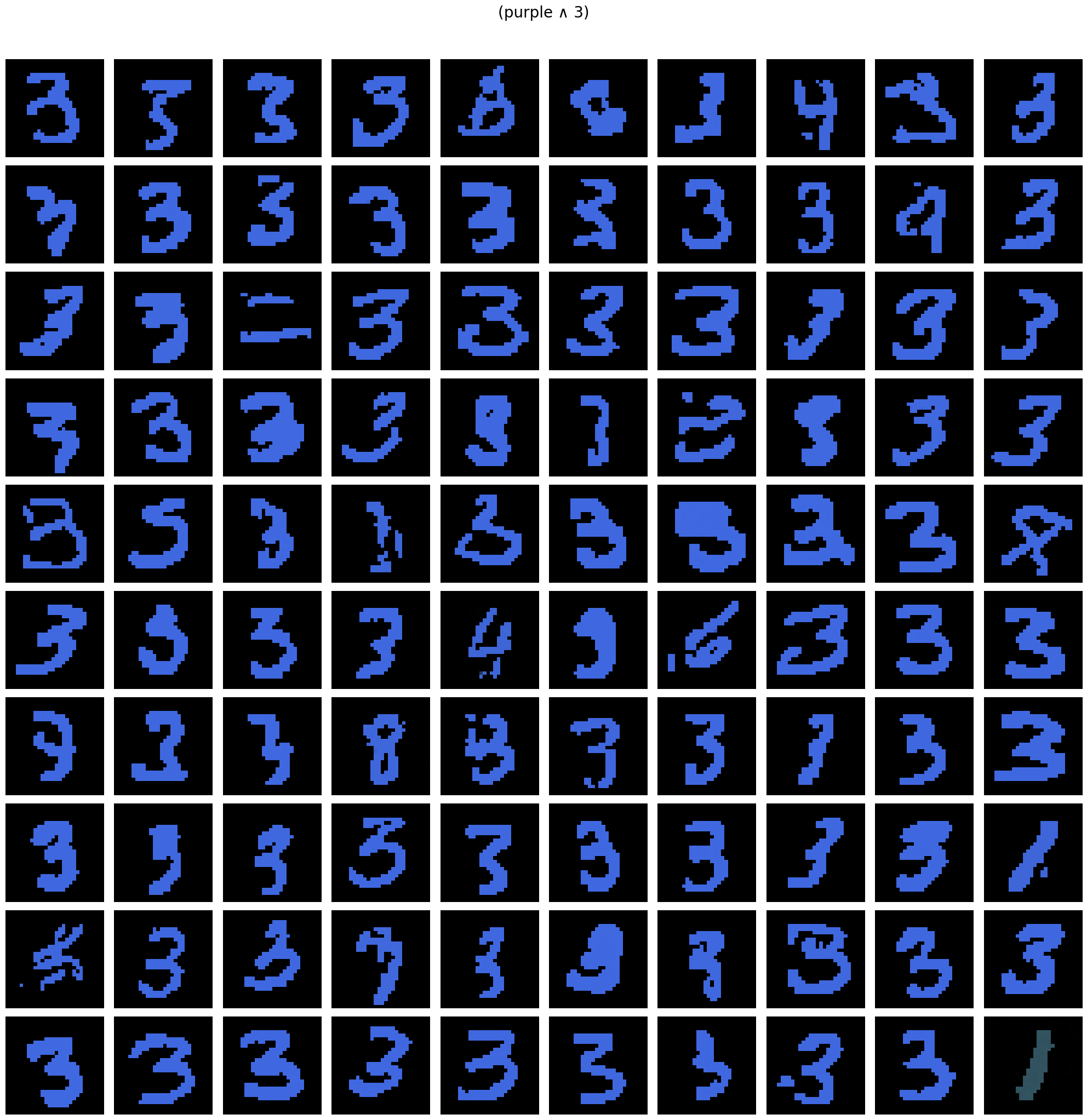}
    \end{subfigure}
    
    \caption{Batch of results for the AND composition (light blue $\land$ 3) on CMNIST. Note that for this single AND operation, the constant baseline and our approach coincide. Conformity scores: Constant = 0.79, Ours = 0.79.}
    \end{minipage}
\end{figure}

\begin{figure}[h!]
    \centering
    \begin{minipage}{0.9\linewidth}
    \setkeys{Gin}{width=\linewidth} 

    \begin{minipage}{0.99\linewidth}
        \scriptsize \centering \textbf{\large $\lnot$ \textcolor{cmnistBlue}{blue}}
    \end{minipage}%

    \vspace{1pt}
    \begin{minipage}{0.49\linewidth}
        \scriptsize \centering \textbf{Constant}
    \end{minipage}%
    \hfill
    \begin{minipage}{0.49\linewidth}
        \scriptsize \centering \textbf{Ours}
    \end{minipage}
    

    \begin{subfigure}{0.49\linewidth}
        \includegraphics[trim={0 0 0 1.5cm}, clip, width=\linewidth]{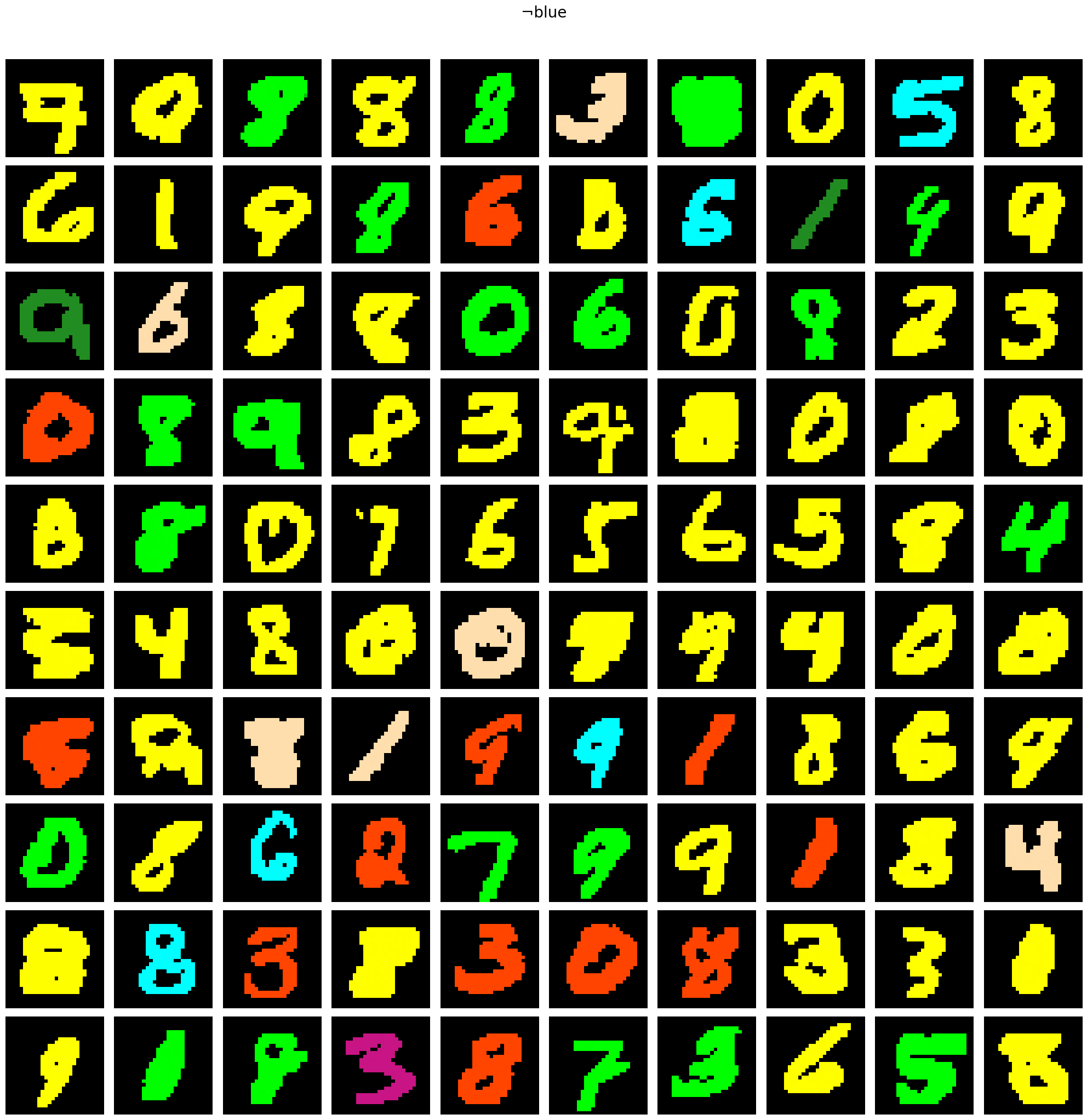}
    \end{subfigure}%
    \hfill
    \begin{subfigure}{0.49\linewidth}
        \includegraphics[trim={0 0 0 1.5cm}, clip, width=\linewidth]{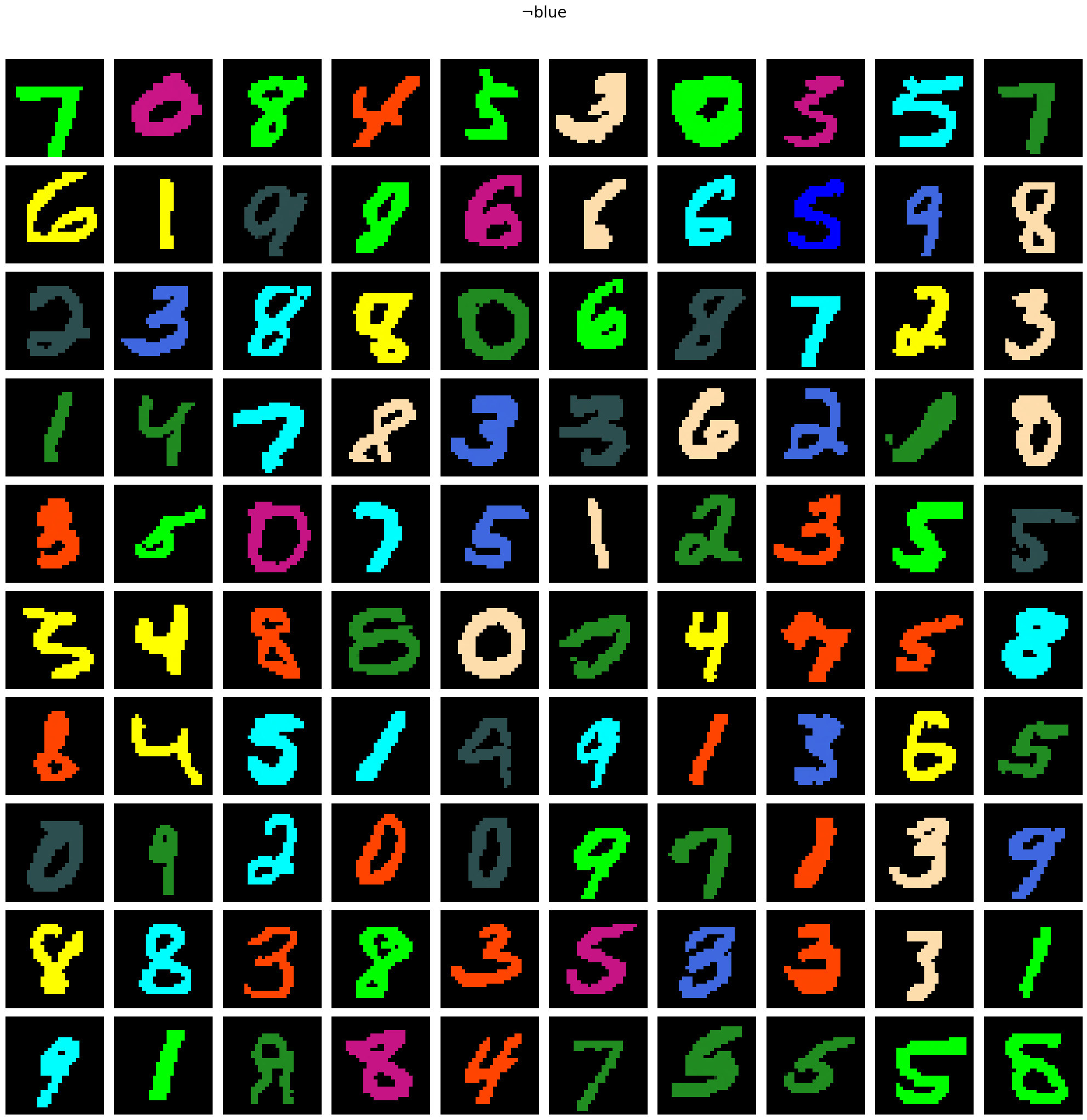}
    \end{subfigure}
    
    \caption{Batch of results for the NOT composition ($\lnot$ blue) on CMNIST. Conformity scores: Constant = 1.0, Ours = 0.99.}
    \end{minipage}
\end{figure}

\begin{figure}[h!]
    \centering
    \begin{minipage}{0.9\linewidth}
    \setkeys{Gin}{width=\linewidth} 

    \begin{minipage}{0.99\linewidth}
        \scriptsize \centering \textbf{\large \textcolor{cmnistRed}{red} $\lor$ \textcolor{cmnistBeige}{beige}}
    \end{minipage}%

    \vspace{1pt}
    \begin{minipage}{0.49\linewidth}
        \scriptsize \centering \textbf{Constant}
    \end{minipage}%
    \hfill
    \begin{minipage}{0.49\linewidth}
        \scriptsize \centering \textbf{Ours}
    \end{minipage}
    

    \begin{subfigure}{0.49\linewidth}
        \includegraphics[trim={0 0 0 1.5cm}, clip, width=\linewidth]{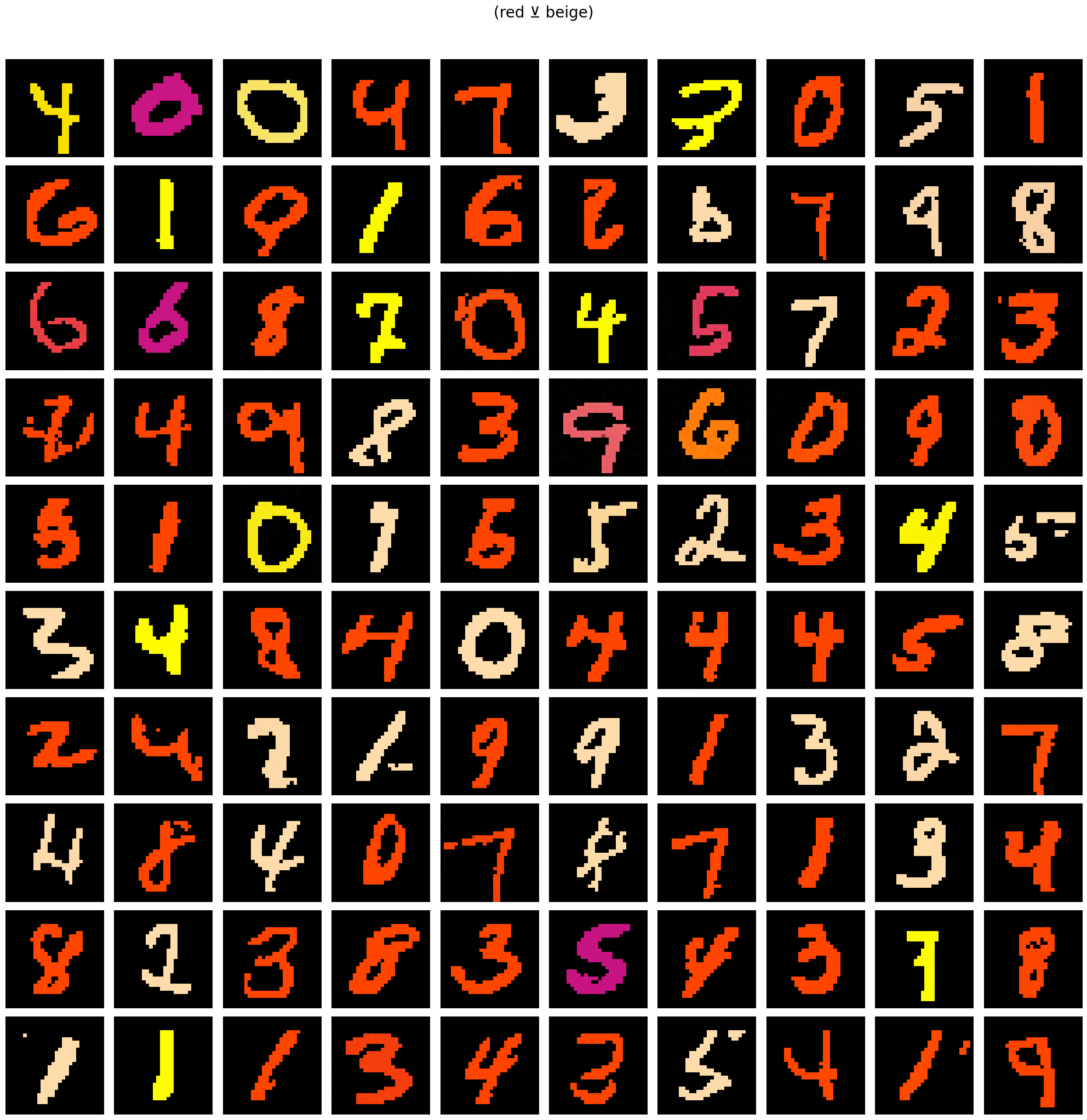}
    \end{subfigure}%
    \hfill
    \begin{subfigure}{0.49\linewidth}
        \includegraphics[trim={0 0 0 1.5cm}, clip, width=\linewidth]{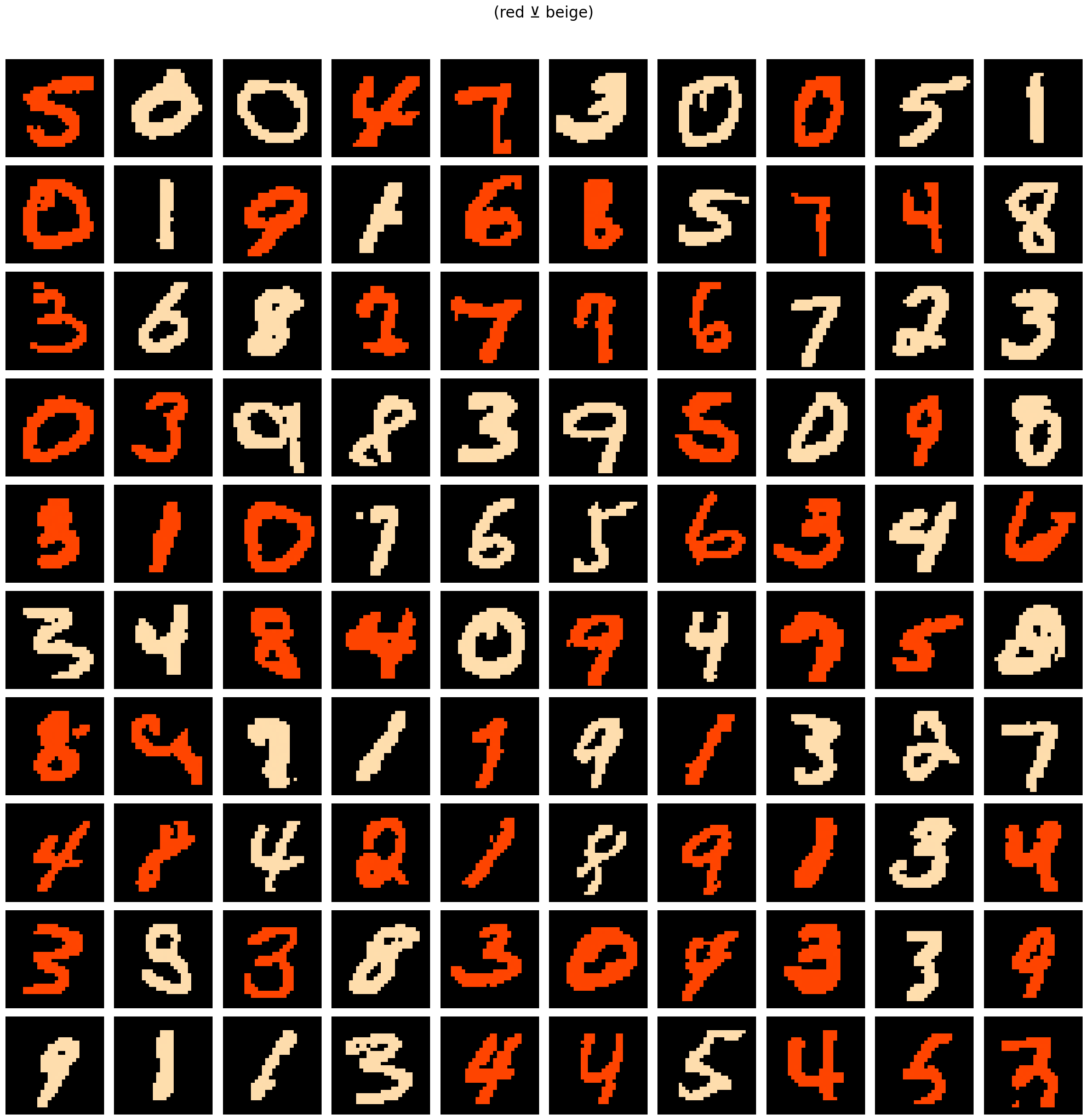}
    \end{subfigure}
    
    \caption{Batch of results for the OR-ME composition (red $\lor$ beige) on CMNIST.Conformity scores: Constant = 0.84, Ours = 1.0.}
    \end{minipage}
\end{figure}

\begin{figure}[h!]
    \centering
    \begin{minipage}{0.9\linewidth}
    \setkeys{Gin}{width=\linewidth}

    \begin{minipage}{0.99\linewidth}
        \scriptsize \centering \textbf{\large 2 $\lor$ \textcolor{cmnistBlue}{blue}}
    \end{minipage}%

    \vspace{1pt}
    \begin{minipage}{0.49\linewidth}
        \scriptsize \centering \textbf{Constant}
    \end{minipage}%
    \hfill
    \begin{minipage}{0.49\linewidth}
        \scriptsize \centering \textbf{Ours}
    \end{minipage}
    

    \begin{subfigure}{0.49\linewidth}
        \includegraphics[trim={0 0 0 1.5cm}, clip, width=\linewidth]{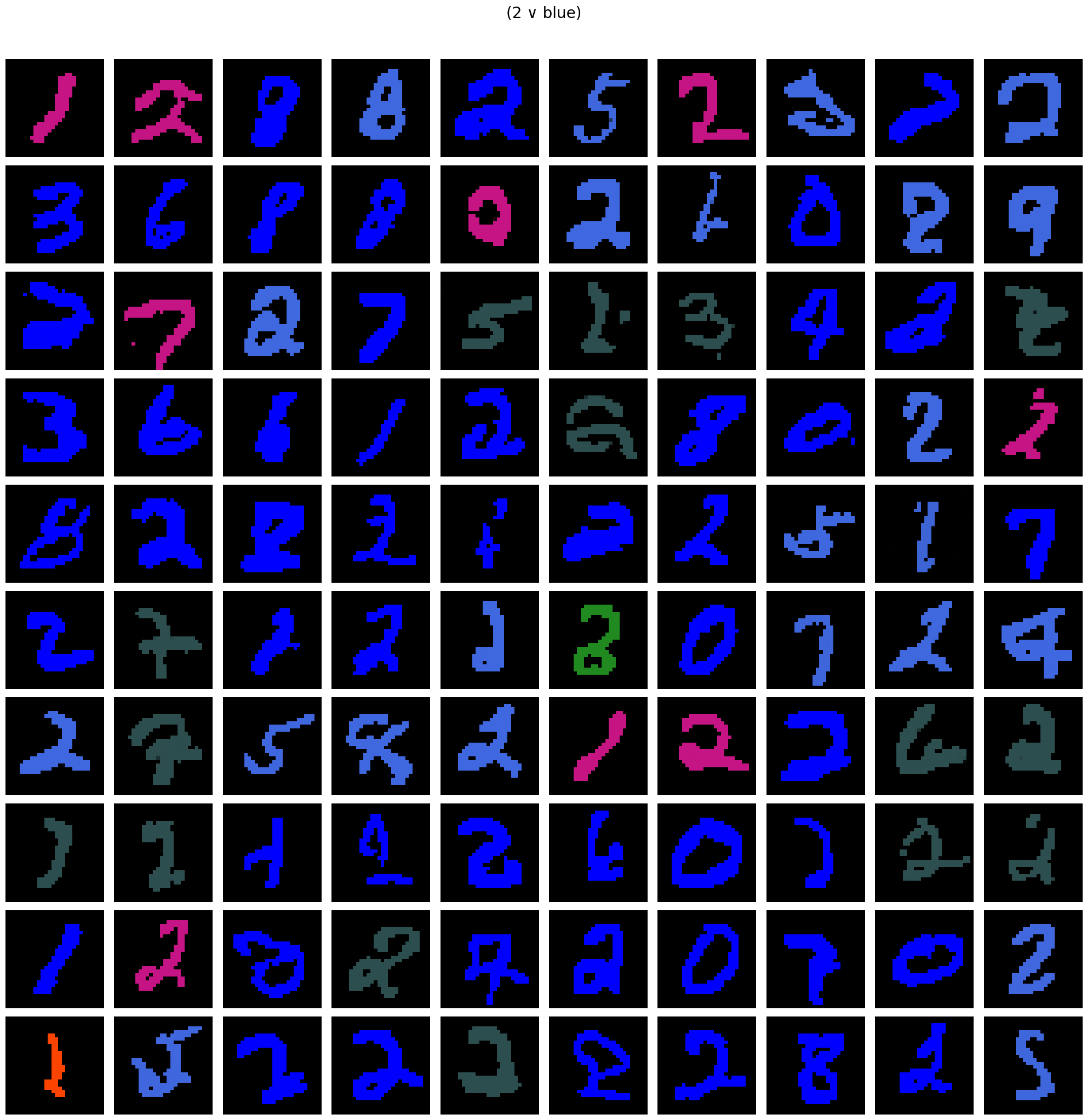}
    \end{subfigure}%
    \hfill
    \begin{subfigure}{0.49\linewidth}
        \includegraphics[trim={0 0 0 1.5cm}, clip, width=\linewidth]{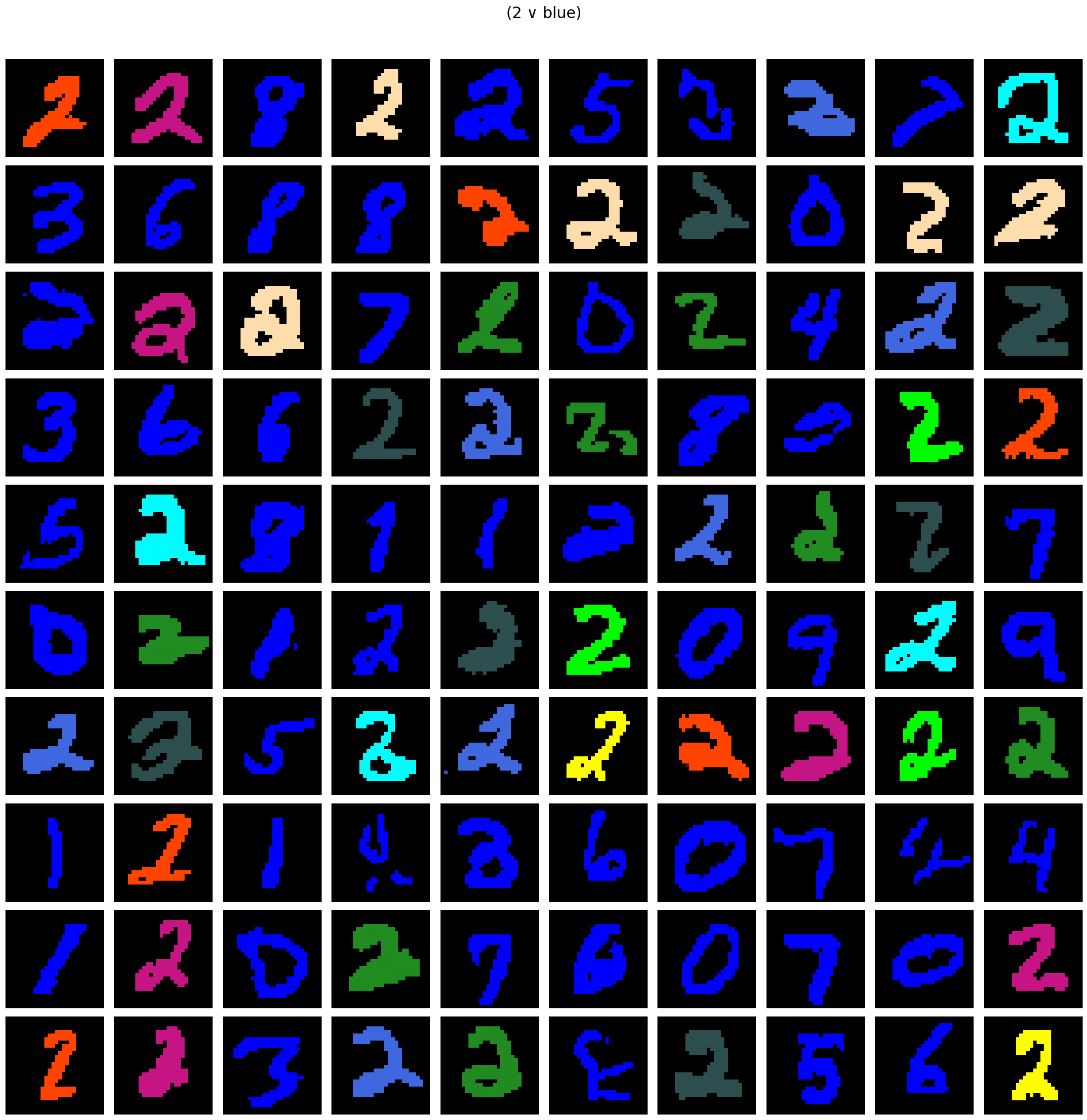}
    \end{subfigure}
    
    \caption{Batch of results for the OR-CI composition (2 $\lor$ blue) on CMNIST. Conformity scores: Constant = 0.78, Ours = 0.98.}
    \end{minipage}
\end{figure}

\begin{figure}[h!]
    \centering
    \begin{minipage}{0.9\linewidth}
    \setkeys{Gin}{width=\linewidth} 

    \begin{minipage}{0.99\linewidth}
        \scriptsize \centering \textbf{\large \textcolor{shapesLightGreen}{light green} floor $\land$ \textcolor{shapesBlue}{blue} wall}
    \end{minipage}%

    \vspace{1pt}
    \begin{minipage}{0.49\linewidth}
        \scriptsize \centering \textbf{Constant}
    \end{minipage}%
    \hfill
    \begin{minipage}{0.49\linewidth}
        \scriptsize \centering \textbf{Ours}
    \end{minipage}
    

    \begin{subfigure}{0.49\linewidth}
        \includegraphics[trim={0 0 0 1.5cm}, clip, width=\linewidth]{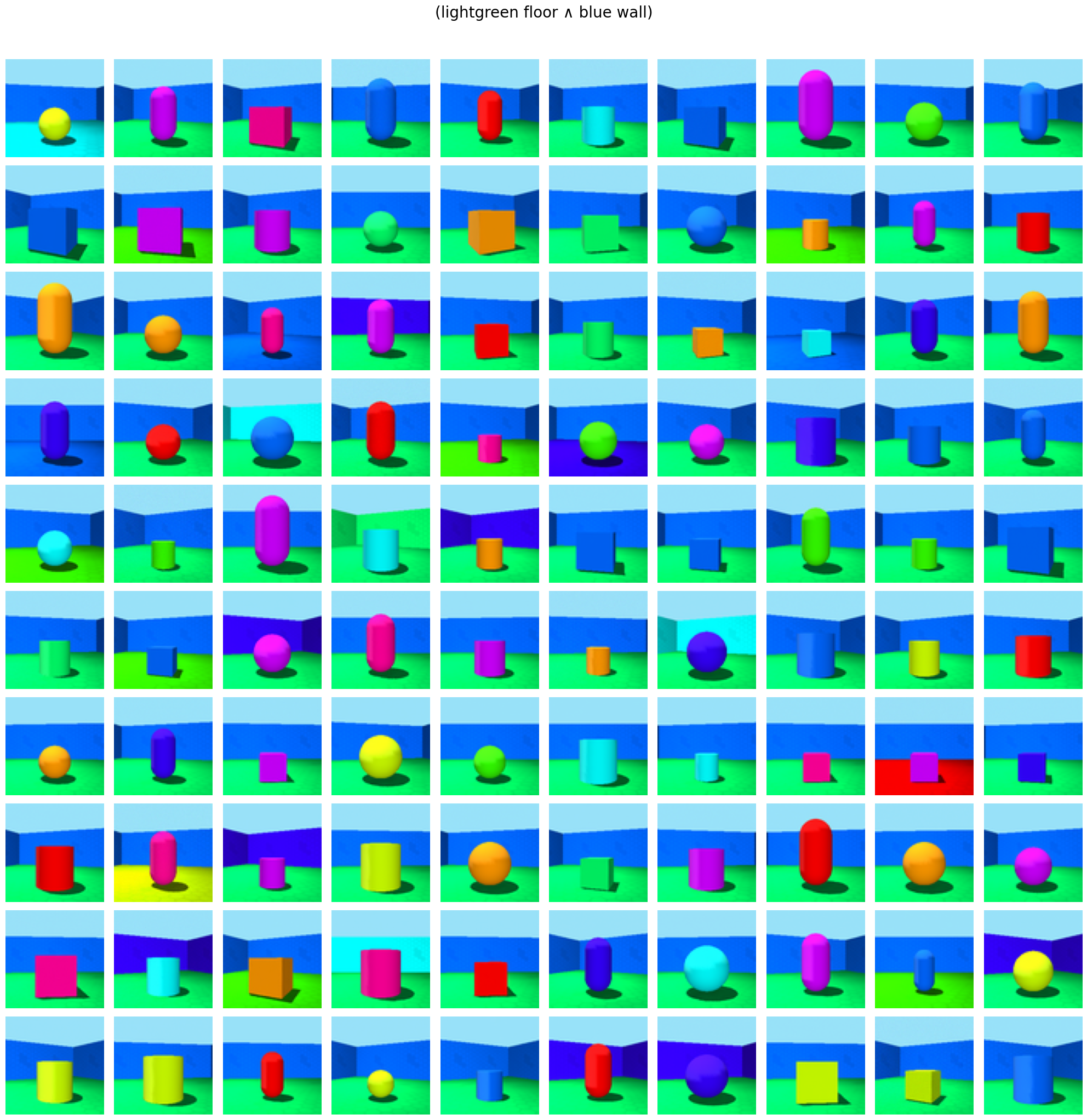}
    \end{subfigure}%
    \hfill
    \begin{subfigure}{0.49\linewidth}
        \includegraphics[trim={0 0 0 1.5cm}, clip, width=\linewidth]{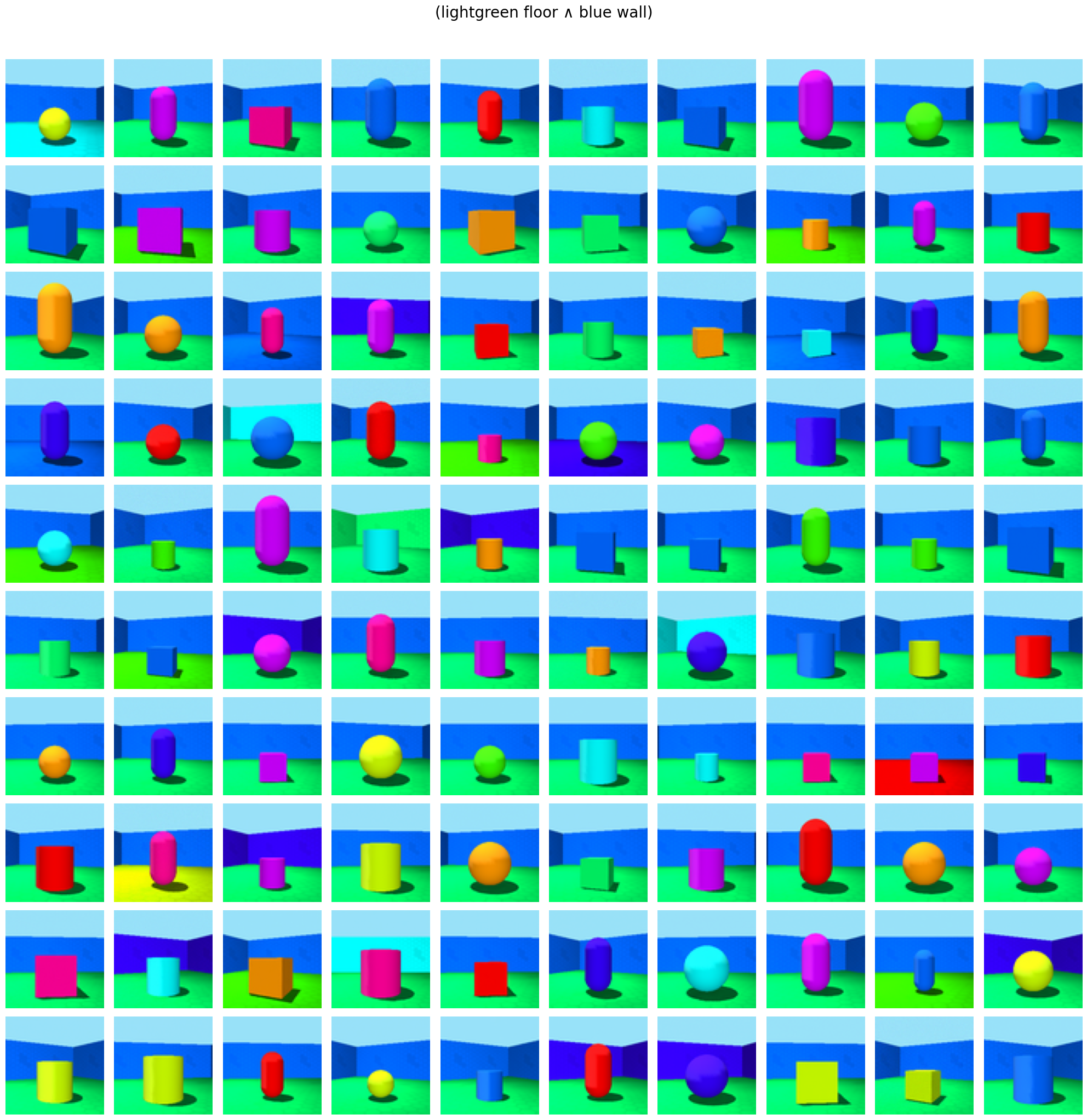}
    \end{subfigure}
    
    \caption{Batch of results for the AND composition (lightgreen floor $\land$ blue wall) on Shapes3D. Conformity scores: Constant = 0.74, Ours = 0.74.}
    \end{minipage}
\end{figure}

\begin{figure}[h!]
    \centering
    \begin{minipage}{0.9\linewidth}
    \setkeys{Gin}{width=\linewidth} 

    \begin{minipage}{0.99\linewidth}
        \scriptsize \centering \textbf{\large $\lnot$ cylinder}
    \end{minipage}%

    \vspace{1pt}
    \begin{minipage}{0.49\linewidth}
        \scriptsize \centering \textbf{Constant}
    \end{minipage}%
    \hfill
    \begin{minipage}{0.49\linewidth}
        \scriptsize \centering \textbf{Ours}
    \end{minipage}
    

    \begin{subfigure}{0.49\linewidth}
        \includegraphics[trim={0 0 0 1.5cm}, clip, width=\linewidth]{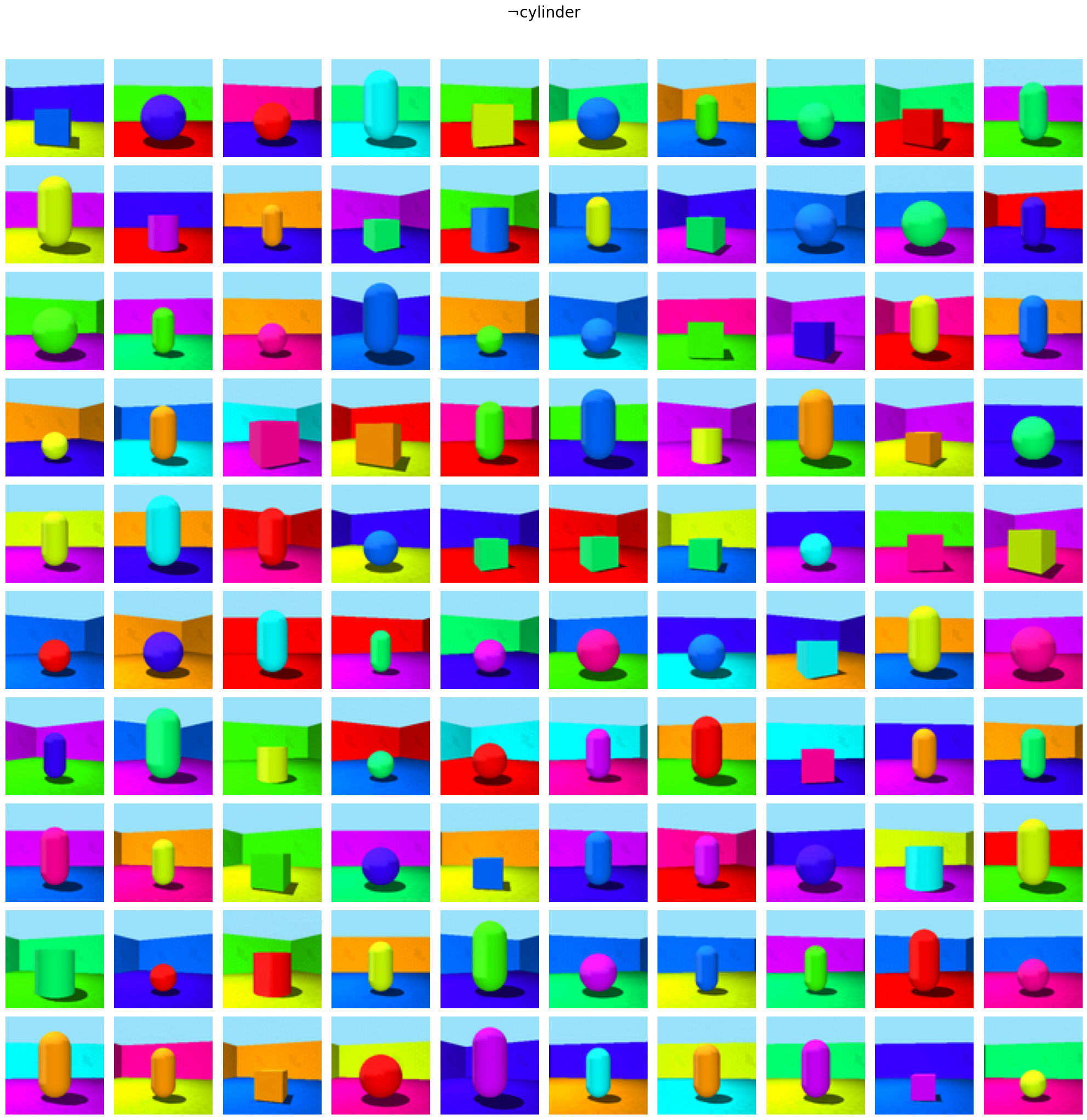}
    \end{subfigure}%
    \hfill
    \begin{subfigure}{0.49\linewidth}
        \includegraphics[trim={0 0 0 1.5cm}, clip, width=\linewidth]{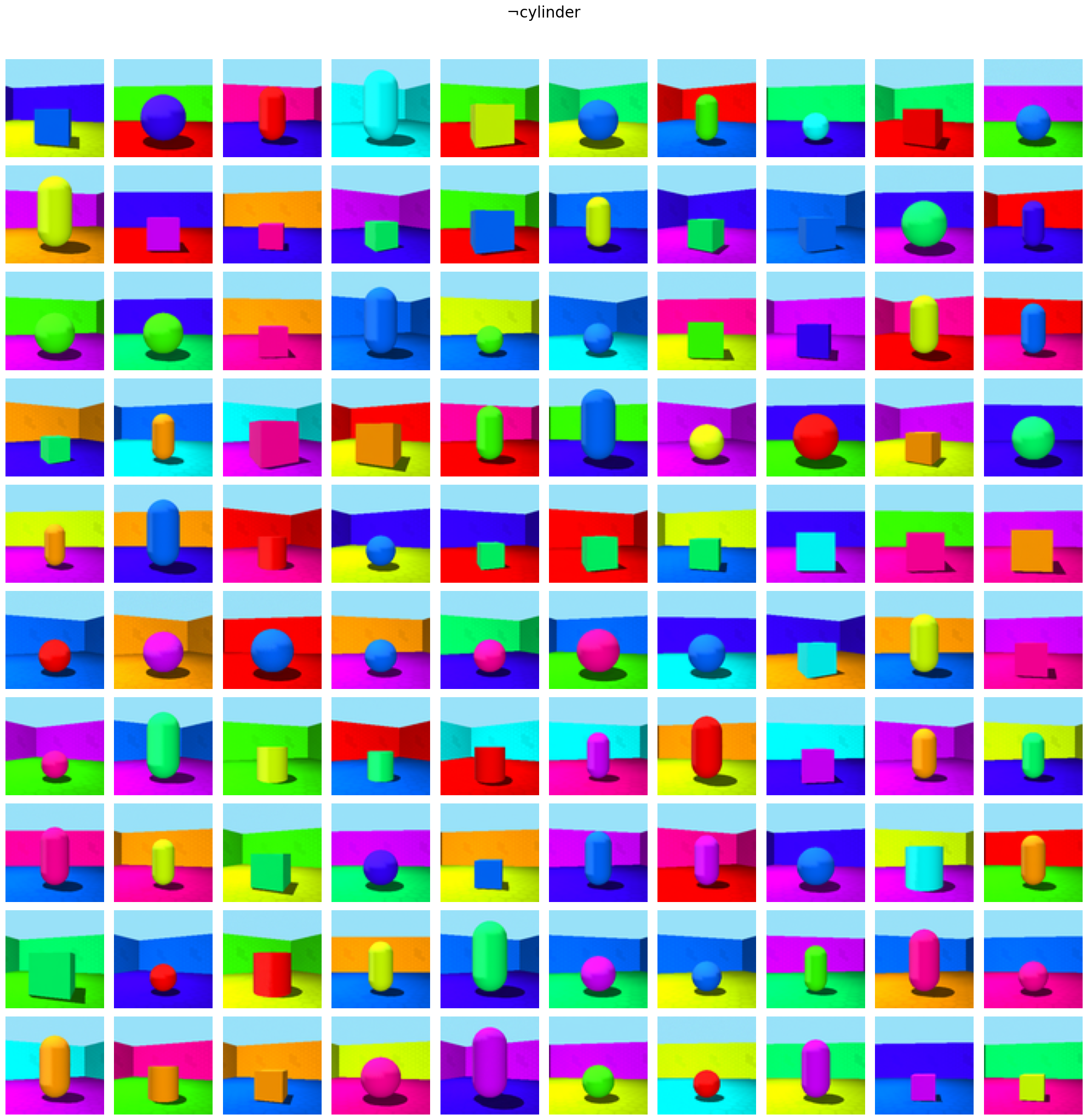}
    \end{subfigure}
    
    \caption{Batch of results for the NOT composition ($\lnot$ cylinder) on Shapes3D. Conformity scores: Constant = 0.93, Ours = 0.93.}
    \end{minipage}
\end{figure}

\begin{figure}[h!]
    \centering
    \begin{minipage}{0.9\linewidth}
    \setkeys{Gin}{width=\linewidth} 

    \begin{minipage}{0.99\linewidth}
        \scriptsize \centering \textbf{\large \textcolor{shapesLightGreen}{light green} floor $\lor$ \textcolor{shapesPink}{pink} object}
    \end{minipage}%

    \vspace{1pt}
    \begin{minipage}{0.49\linewidth}
        \scriptsize \centering \textbf{Constant}
    \end{minipage}%
    \hfill
    \begin{minipage}{0.49\linewidth}
        \scriptsize \centering \textbf{Ours}
    \end{minipage}
    

    \begin{subfigure}{0.49\linewidth}
        \includegraphics[trim={0 0 0 1.5cm}, clip, width=\linewidth]{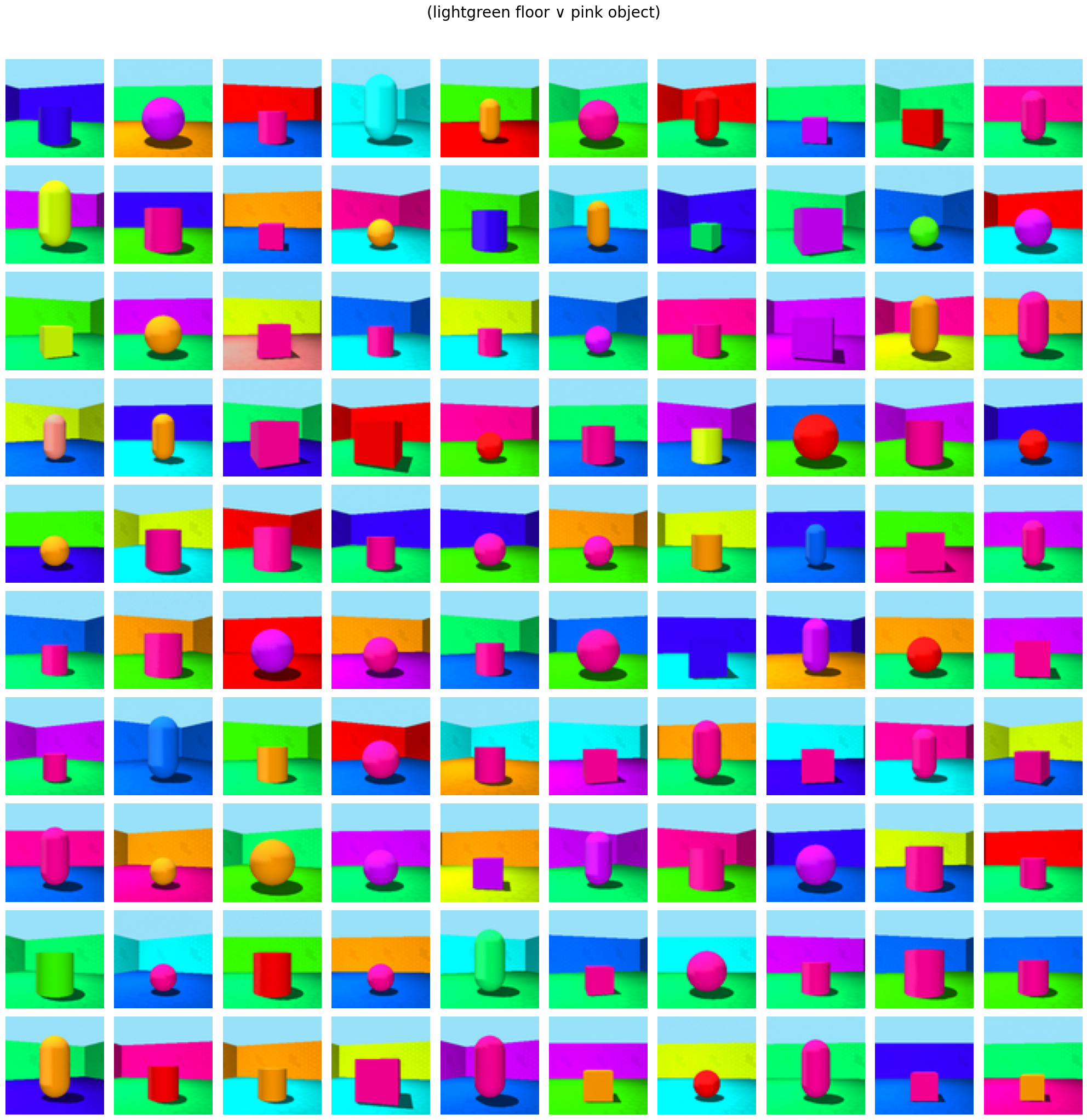}
    \end{subfigure}%
    \hfill
    \begin{subfigure}{0.49\linewidth}
        \includegraphics[trim={0 0 0 1.5cm}, clip, width=\linewidth]{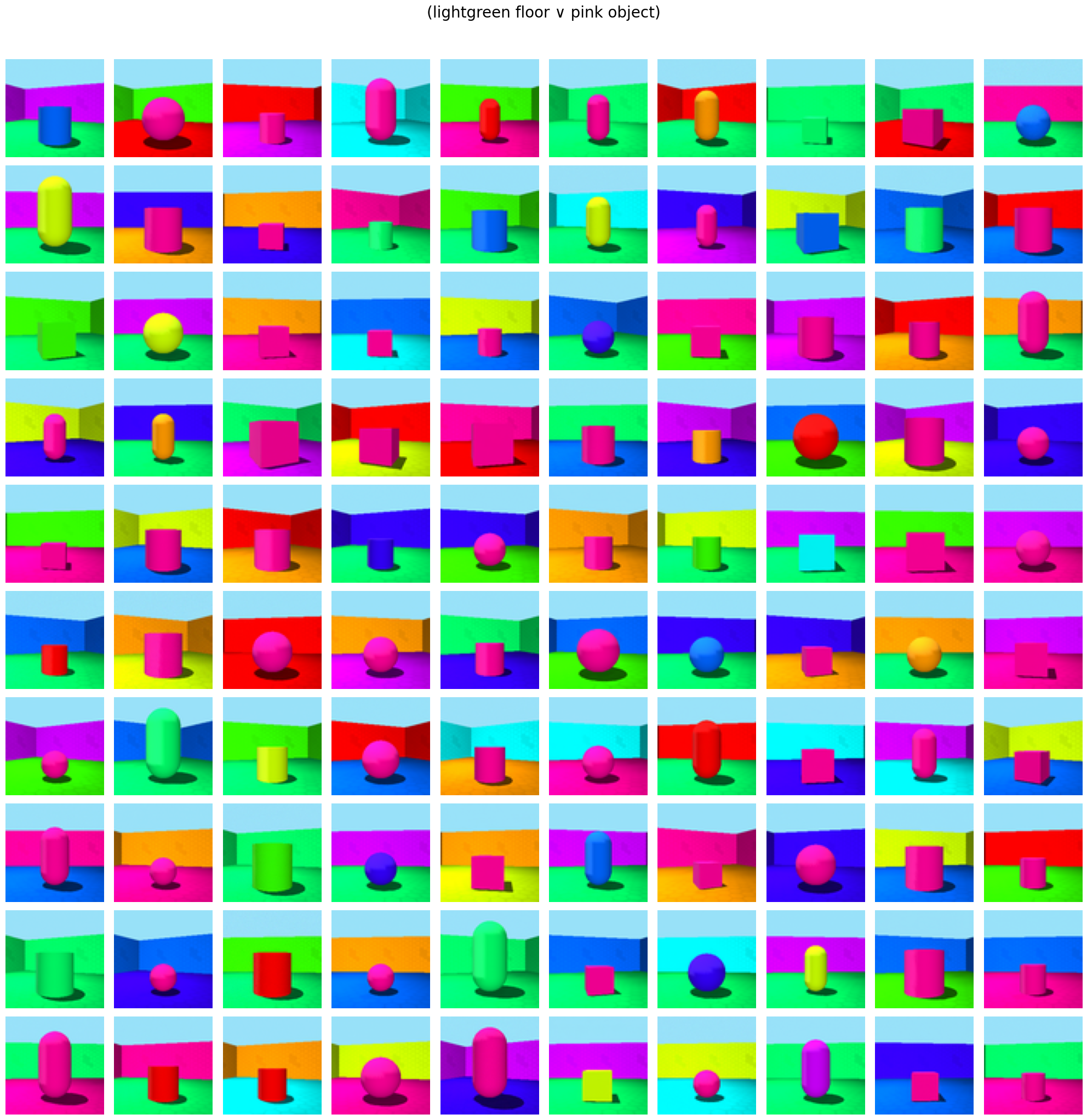}        
    \end{subfigure}
    
    \caption{Batch of results for the OR-CI composition (lightgreen floor $lor$ pink object) on Shapes3D. Conformity scores: Constant = 0.68, Ours = 0.95.}
    \end{minipage}
\end{figure}

\begin{figure}[h!]
    \centering
    \begin{minipage}{0.9\linewidth}
    \setkeys{Gin}{width=\linewidth} 

    \begin{minipage}{0.99\linewidth}
        \scriptsize \centering \textbf{\large ((cube $\lor_\text{CI}$  (\textcolor{shapesRed}{red} wall $\lor_\text{ME}$  \textcolor{shapesPink}{pink} wall)) $\lor_\text{CI}$  (5 scale $\land$ \textcolor{shapesDarkBlue}{darkblue} object))}
    \end{minipage}%

    \vspace{1pt}
    \begin{minipage}{0.49\linewidth}
        \scriptsize \centering \textbf{Constant}
    \end{minipage}%
    \hfill
    \begin{minipage}{0.49\linewidth}
        \scriptsize \centering \textbf{Ours}
    \end{minipage}
    

    \begin{subfigure}{0.49\linewidth}
        \includegraphics[trim={0 0 0 1.7cm}, clip, width=\linewidth]{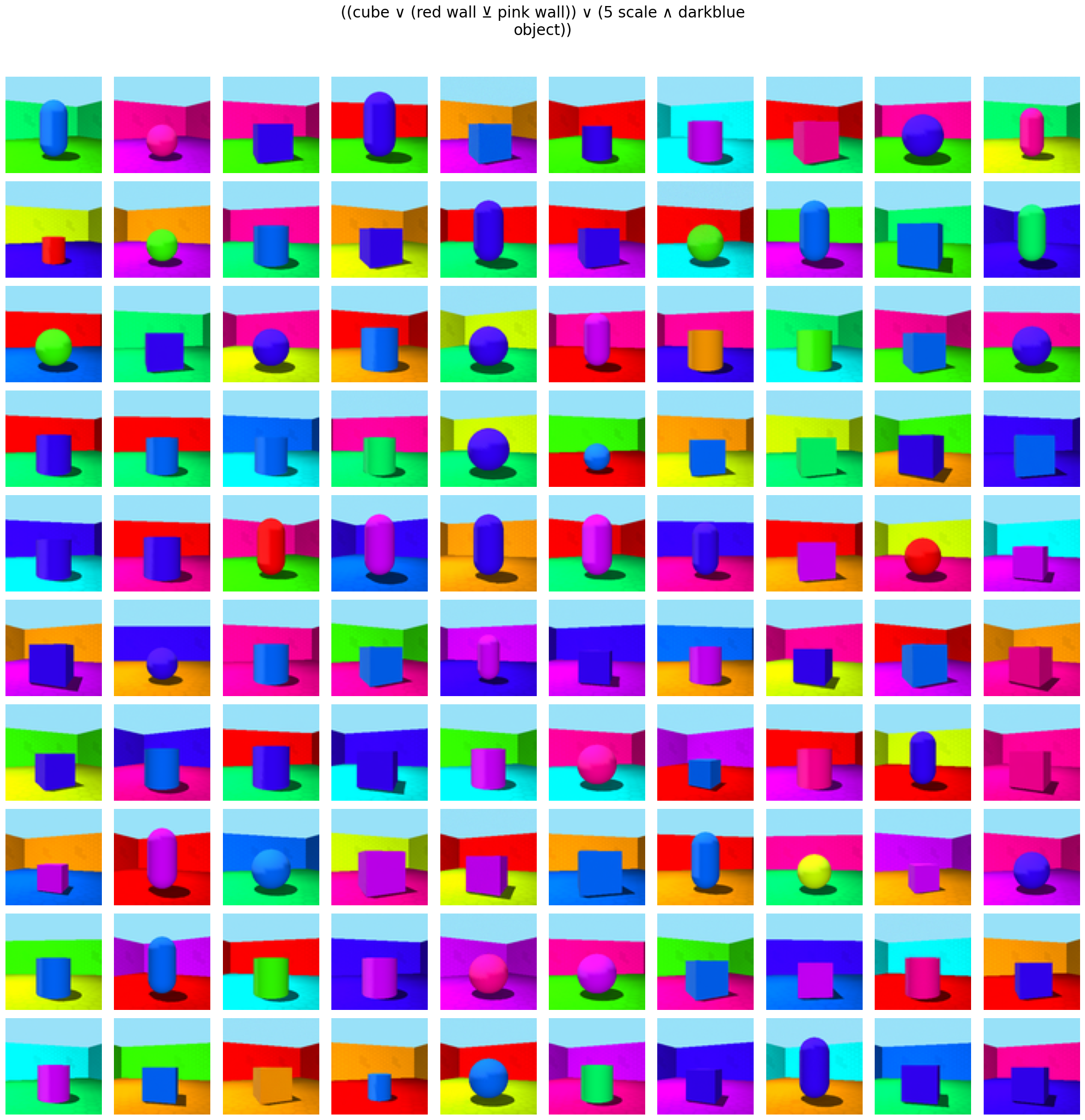}
    \end{subfigure}%
    \hfill
    \begin{subfigure}{0.49\linewidth}
        \includegraphics[trim={0 0 0 1.7cm}, clip, width=\linewidth]{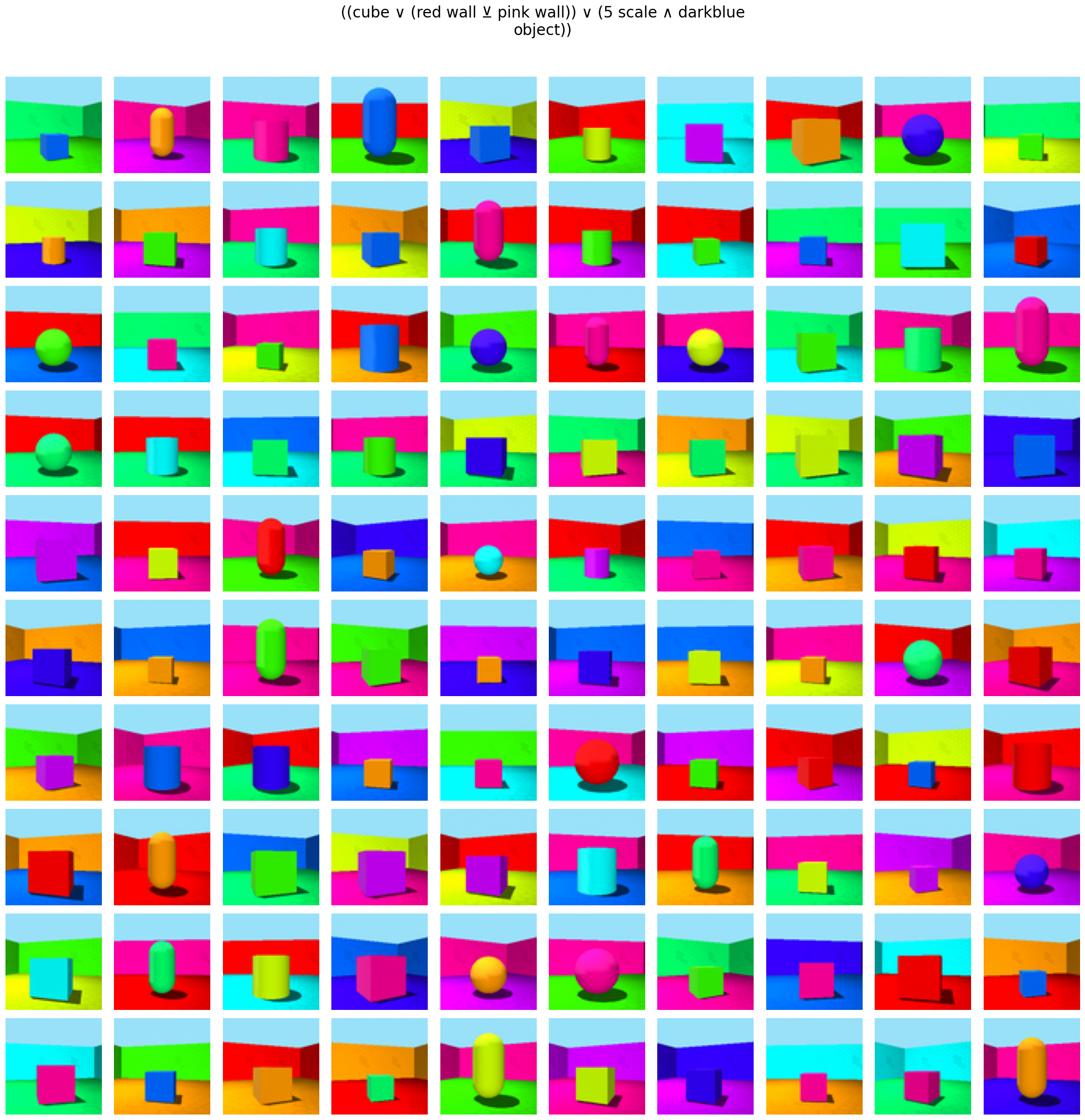}
    \end{subfigure}
    
    \caption{Batch of results for a complex composition with $N=4$ ((cube $\lor_\text{CI}$  (red wall $\lor_\text{ME}$  pink wall)) $\lor_\text{CI}$ (5 scale $\land$ darkblue object)) on Shapes3D. Conformity scores: Constant = 0.71, Ours = 0.98.}
    \end{minipage}
\end{figure}

\begin{figure}[h!]
    \centering
    \begin{minipage}{0.9\linewidth}
    \setkeys{Gin}{width=\linewidth} 

    \begin{minipage}{0.99\linewidth}
        \scriptsize \centering \textbf{\large $\lnot$ blond}
    \end{minipage}%

    \vspace{1pt}
    \begin{minipage}{0.49\linewidth}
        \scriptsize \centering \textbf{Constant}
    \end{minipage}%
    \hfill
    \begin{minipage}{0.49\linewidth}
        \scriptsize \centering \textbf{Ours}
    \end{minipage}
    

    \begin{subfigure}{0.49\linewidth}
        \includegraphics[trim={0 0 0 1.7cm}, clip, width=\linewidth]{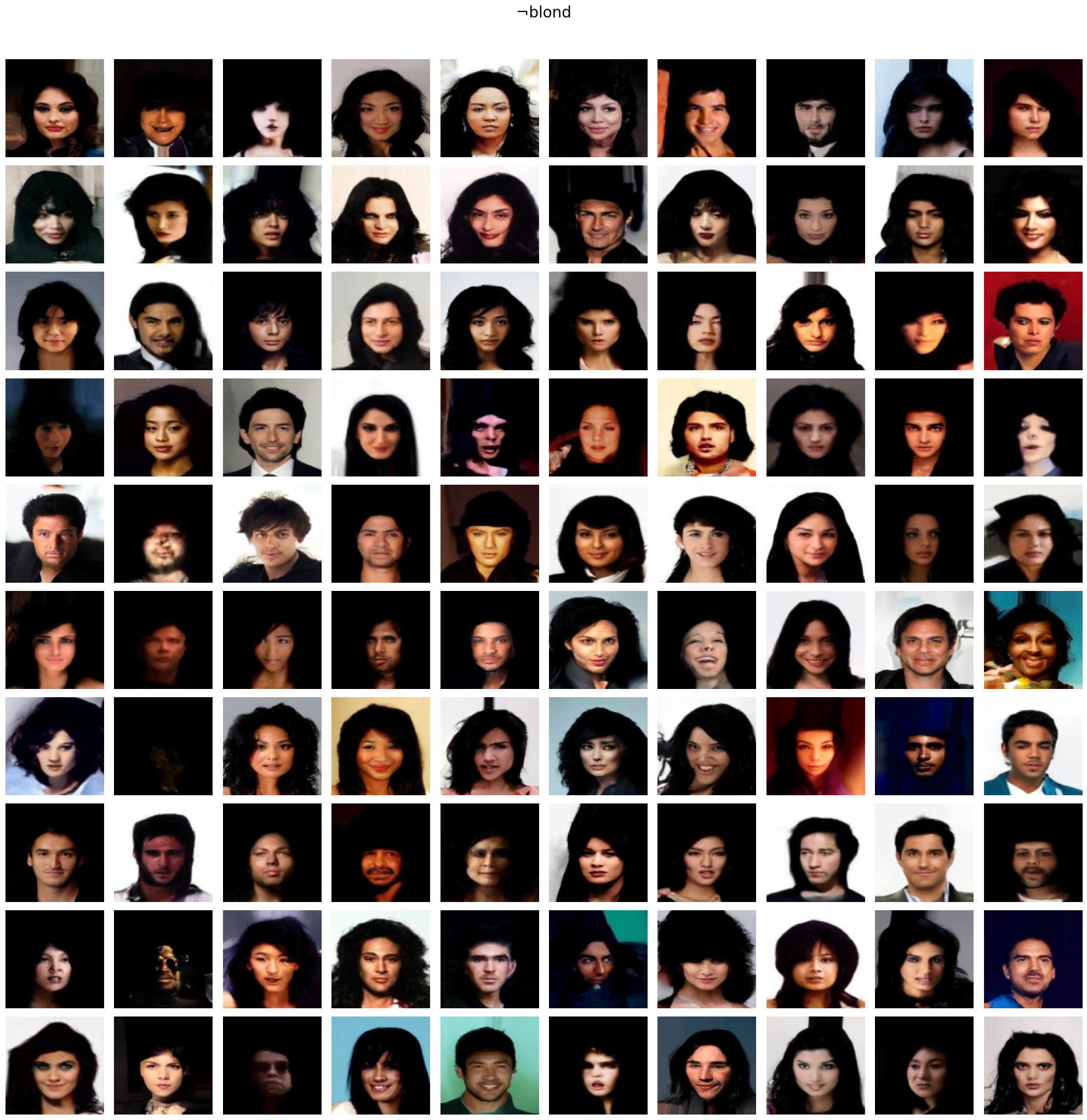}
    \end{subfigure}%
    \hfill
    \begin{subfigure}{0.49\linewidth}
        \includegraphics[trim={0 0 0 1.7cm}, clip, width=\linewidth]{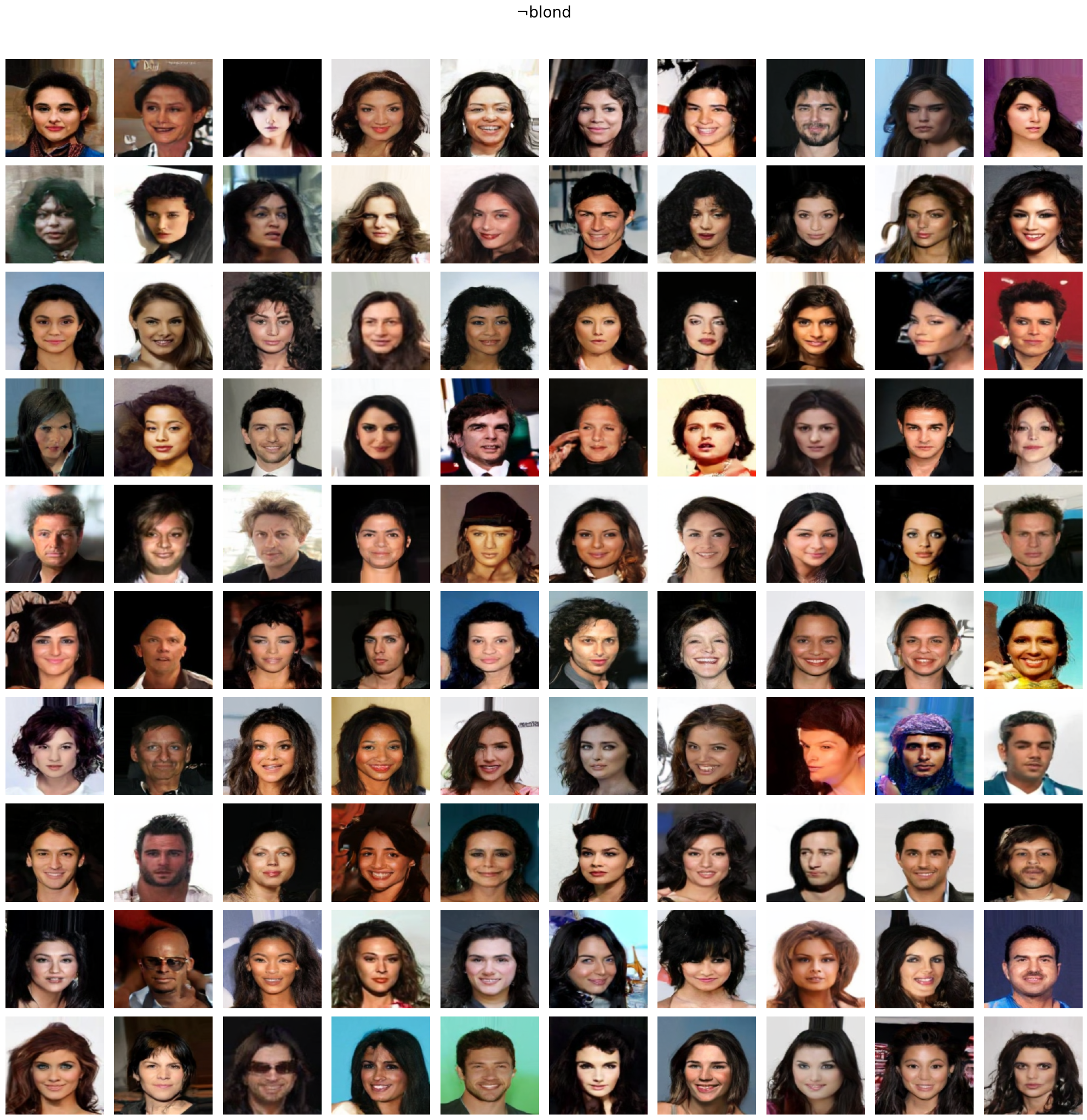}
    \end{subfigure}
    
    \caption{Batch of results for NOT composition ($\lnot$ blond) on CelebA. Conformity scores: Constant = 1.0, Ours = 1.0.}
    \end{minipage}
\end{figure}

\end{document}
